	\pgfplotsset{compat=1.14}
\DeclareMathOperator*{\argmax}{arg\,max}
\newcommand{\field}[1]{\mathbb{#1}}
\newcommand{\N}{\field{N}}
\newcommand{\E}{\field{E}}
\newcommand{\I}{\field{I}}
\renewcommand{\Pr}{\field{P}}
\renewcommand{\P}{\field{P}}
\newcommand{\cm}{\mathcal{M}}
\newcommand{\scO}{\mathcal{O}}
\newcommand{\co}{\mathcal{O}}
\newcommand{\ck}{\mathcal{K}}
\newcommand{\ct}{\mathcal{T}}
\newcommand{\cv}{\mathcal{V}}
\newcommand{\Ind}[1]{ \field{I}\left\{{#1}\right\} }
\newcommand{\m}{\setminus}
\newcommand{\wh}{\widehat}
\newcommand{\wt}{\widetilde}
\newcommand{\ve}{\varepsilon}
\newcommand{\e}{\varepsilon}
\newcommand{\os}{\mathrm{OS}}
\newcommand{\kl}{\mathrm{KL}}
\renewcommand{\l}{\ldots}
\newcommand{\iop}{\infty}
\newcommand{\iopp}{+\infty}
\newcommand{\phat}{\widehat{p}}
\newcommand{\vhat}{\widehat{v}}
\newtheorem{lemma}{Lemma} 
\newtheorem{theorem}{Theorem} 
\newtheorem{cor}{Corollary}
\title{Dynamic Pricing with Finitely Many Unknown Valuations}
\author[1]{Nicol\`{o} Cesa-Bianchi}
\author[1]{Tommaso R.~Cesari}
\author[2]{Vianney Perchet}
\affil[1]{\small Department of Computer Science \& DSRC, Universit\`a degli Studi di Milano}
\affil[2]{\small CMLA, ENS Paris-Saclay \& CNRS \& Criteo AI Lab, Paris}
\begin{document}

\maketitle


\begin{abstract}
Motivated by posted price auctions where buyers are grouped in an unknown number of latent types characterized by their private values for the good on sale, we investigate revenue maximization in stochastic dynamic pricing when the distribution of buyers' private values is supported on an unknown set of points in $[0,1]$ of unknown cardinality $K$. This setting can be viewed as an instance of a stochastic $K$-armed bandit problem where the location of the arms (i.e., the $K$ unknown valuations) must be learned as well.
\begin{enumerate}
\item In the distribution-free case, we show that our setting is just as hard as $K$-armed stochastic bandits: we prove that no algorithm can achieve a regret significantly better than $\sqrt{KT}$, (where $T$ is the time horizon) and present an efficient algorithm matching this lower bound up to logarithmic factors. 
\item In the distribution-dependent case, we show that for all $K > 2$ our setting is strictly harder than $K$-armed stochastic bandits by proving that it is impossible to obtain regret bounds that grow logarithmically in time or slower. On the other hand, when a lower bound $\gamma > 0$ on the smallest drop in the demand curve is known, we prove an upper bound on the regret of order $\big(1/\Delta + (\log\log T)/\gamma^2\big)\big(K\log T\big)$, where $\Delta$ is the gap between the revenue of the optimal valuation and that of the second-best valuation. 
This is a significant improvement on previously known regret bounds for discontinuous demand curves, that are at best of order $\big(K^{12}/\gamma^8\big) \sqrt{T}$.
\item When $K=2$ in the distribution-dependent case, the hardness of our setting reduces to that of a stochastic $2$-armed bandit: we prove that an upper bound of order $(\log T)/\Delta$ (up to $\log\log$ factors) on the regret can be achieved with no information on the demand curve.
\item Finally, we show a $\scO(\sqrt{T})$ upper bound on the regret for the setting in which the buyers' decisions are nonstochastic, and the regret is measured with respect to the best between two fixed valuations one of which is known to the seller.
\end{enumerate}
\end{abstract}

\section*{Acknowledgements}
\addcontentsline{toc}{section}{Acknowledgements}
Nicol\`o Cesa-Bianchi and Tommaso Cesari gratefully acknowledge the support of Criteo AI Lab through a Faculty Research Award. Vianney Perchet acknowledges the support of the FMJH Program Gaspard Monge in Optimization and operations research (supported in part by EDF) and the CNRS through the PEPS program.


\section{Introduction}
\label{s:intro}
In the online posted price auction problem, also known as dynamic pricing, an unlimited supply of identical goods is sold to a sequence of buyers. To each buyer in the sequence, the seller makes a take-it-or-leave-it offer for the good at a certain price (which we assume to belong to the unit interval $[0,1]$). The good is purchased if and only if the offered price is lower or equal to the buyer's private valuation (also assumed to be in $[0,1]$). At the end of the transaction, the seller's revenue is either zero (if the good is not sold) or equal to the offered price. The buyer's valuation is never observed. Indeed, the seller only learns a single bit for each auction, i.e., whether the good was sold or not at the chosen price. Similarly to previous works \citep{kleinberg2003value, blum2004online, blum2005near}, we assume that the price offered to the $t$-th buyer in the sequence only depends on the past history of observed sales. In particular, we assume that buyers are indistinguishable, and provide no information to the seller other than their willingness to buy at the specified price. For this reason, the seller can post the price for the next buyer publicly, before the buyer shows up.

We evaluate the seller's performance in terms of regret, measuring the difference between the seller's revenue and the revenue achievable by consistently posting the optimal price. The regret in dynamic pricing was initially investigated by \citet{kleinberg2003value} under various assumptions on the generation of the buyers' valuations. In the stochastic setting, in which valuations are drawn i.i.d.\ from a fixed and unknown distribution on $[0,1]$, they show that no algorithm can achieve a $o(\sqrt{T})$ regret and provide an algorithm achieving regret of order $C \sqrt{T\log T}$, where $T$ is the number of buyers in the sequence and $C$ only depends on the distribution of buyers' valuations. Their upper-bound holds under some assumptions on the demand curve, which is the function $D$ mapping each price $x$ to the probability $D(x) = \Pr(V \ge x)$ that the good is sold. Specifically, the revenue function $x \mapsto xD(x)$ is required to have a unique global maximum $x^{\star} \in (0,1)$ and be twice differentiable with a negative second derivative at $x^{\star}$. Without these assumptions, the authors prove a much higher lower bound of order $T^{2/3}$ on the regret. The algorithm achieving the $C\sqrt{T\log T}$ regret under the above assumptions on the demand curve is simple: it runs the UCB1 policy for stochastic bandits~\citep{auer2002finite} on a discretized set of $K = (T/\log T)^{1/4}$ prices.

In this paper, we study the stochastic setting of dynamic pricing under completely different assumptions on the demand curve. Namely, that the distribution of buyers' valuations is supported on an \emph{unknown} set of \emph{unknown} finite cardinality $K$. This models any setting in which buyers are grouped in an unknown number of latent types, characterized by their private values for the good on sale. In particular, this applies to regret minimization in sellers' repeated second-price auctions with a single relevant buyer. This scenario emerges naturally when a seller and a buyer interact repeatedly, and the valuation of the good depends on contextual information known only to the buyer. For instance, in online advertising each time a user lands on a publisher's website, an impression is put on sale to a set of relevant advertisers through an auction (note that whenever there is a single relevant advertiser for the impression, second-price auctions with reserve price are equivalent to posted price auctions). Now, typically, the advertiser's valuation for the impression depends on which segment the user belongs to, where the finite segmentation is based on private information not accessible to the publisher.

Note that our model is very different from assuming that the seller is restricted to offer prices from a \emph{known} finite set of size $K$ \citep{rothschild1974two}, which makes dynamic pricing a special case of $K$-armed stochastic bandits. In our model, the seller does not know the $K$ buyers' valuations, not even their number! So, besides learning which valuation has the highest revenue, the seller must also learn the location of these values. This interplay between noisy search and bandit allocation is one of the main themes of our work.

In contrast with previous approaches, which typically assume parametric \citep{broder2012dynamic} or locally smooth \citep{kleinberg2003value} demand curves, our model with finitely many valuations is equivalent to assuming that the demand curve is piecewise constant with a finite number of discontinuities. Recently, \citet{den2017discontinuous} designed an algorithm for piecewise continuous demand curves achieving an upper bound of order $C \sqrt{T} \log T$ in the piecewise constant case. However, up to constant factors, their hefty leading constant $C$ is at least as big as the maximum between $K^{22} \gamma^{-16} c^{-2}$ and $K^{12} \gamma^{-8} c^{-18}$, where $c$ is the minimum distance between valuations and both $K$ and the smallest drop $\gamma$ in the demand curve must be known in advance. Although their setting extends ours to certain piecewise \emph{parametric} demand curves, we believe that discontinuities are the real source of additional hardness of this dynamic pricing model with respect to previously studied settings.

Our first result is a lower bound of order $\sqrt{KT}$ on the regret in the distribution-free case (where the regret is maximized over all possible demand curves), which holds even when the seller knows the number and position of buyers' private values in advance. This essentially establishes that our setting is at least as hard as a $K$-armed bandit problem. Although we build on the stochastic lower bound of \citet{kleinberg2003value}, our proof is not a simple adaptation of theirs. Indeed, we show that their proof breaks down when $K$ is constant and $T$ grows, which is exactly the regime we are interested in. Then, we present an efficient algorithm achieving a distribution-free upper bound on the regret of order $\sqrt{KT\log T}$ without any additional knowledge of the parameters of the problem.\footnote{Throughout this paper we assume that the time horizon $T$ is known by the seller in advance. This assumption can be easily removed with a ``doubling trick'' (see, e.g., \citep{cesa2006prediction}), a standard technique for extending regret bounds to time sequences of unknown length. 
}
The detailed version of our bound has a significantly better dependence than \citet{den2017discontinuous} on the smallest difference $c$ between two adjacent valuations, and matches---up to logarithmic factors---the lower bound stated above.

In the distribution-dependent case, when the gap $\Delta$ between the revenue of the optimal valuation and that of the second-best valuation is constant, we prove the impossibility of obtaining regret bounds of order significantly better than $\sqrt{T}$ even when $K=3$, thus showing that this setting is strictly harder than $K$-armed stochastic bandits. Motivated by this impossibility result, we investigate distribution-dependent bounds that rely on additional information about the demand curve. By combining suitable generalizations of UCB1 \citep{auer2002finite} and the ``cautious search'' strategy of \citet{kleinberg2003value}, we obtain an efficient algorithm achieving a regret of order at most $\big(1/\Delta + (\log\log T)/\gamma^2\big)\big(K\log T\big)$, where, as before, $\gamma$ is the smallest drop in the demand curve. Since $(K/\Delta)\log T$ is the regret of $K$-armed stochastic bandits, this shows that the price of identifying each one of the $K$ valuations is at most $(\log T)(\log\log T)/\gamma^2$, which corresponds (up to $\log\log$ factors) to the known upper bounds for noisy binary search \citep{karp2007noisy}. 
We conclude the study of the distribution-dependent case by presenting an efficient algorithm with regret of order $(1/\Delta + \log\log T)\log T$ when the number of valuations is known to be at most two. Surprisingly, this bound is the same (up to $\log \log$ terms) as the best possible bound for two-armed stochastic bandits, achievable when not only the number, but also the locations of the valuations are known in advance. In order to prove this result we introduce a novel technique for estimating (up to a multiplicative constant) the expectation $\mu$ of any $[0,1]$-valued random variable with probability at least $1-\delta$, using at most $\co \big(\frac{1}{\mu}\ln\frac{1}{\delta}\big)$ samples, even if the expectation $\mu$ is \emph{not} known in advance. We believe this technique may be valuable in its own right.
\section{Further related works}
\label{s:related}
The literature on dynamic pricing and online posted price auctions is vast. We address the reader to the excellent survey published by \citet{den2015dynamic}, providing a comprehensive picture of the state of the art until the end of 2014 ---see also the tutorial slides by \citet{SZ15} for a perspective more focused on computer science approaches.
An important line of work in dynamic pricing considers a nonstochastic setting in which the sequence of the buyers' private values is deterministic and unknown, and the seller competes against the best fixed price. This model was pioneered by \citet{kleinberg2003value}, who proved a $\scO(T^{2/3})$ upper bound (ignoring logarithmic factors) on the aforementioned notion of regret. Later works \citep{blum2004online,blum2005near} show simultaneous multiplicative and additive bounds on the regret when prices have range $[1,h]$. These bounds have the form $\ve\,G_T^{\star} + \scO\big((h\ln h)/\ve^2\big)$ ignoring $\ln\ln h$ factors, where $G_T^{\star}$ is the total revenue of the optimal price $p^{\star}$. Recent improvements on these results are due to \citet{bubeck2017online}, who prove that the additive term can be made $\scO(p^{\star}(\ln h)/\ve^2)$, where the linear scaling is now with respect to the optimal price rather than the maximum price $h$.
Other variants consider settings in which the number of copies of the item to sell is limited \citep{agrawal2014bandits,babaioff2015dynamic,badanidiyuru2013bandits} or settings in which a returning buyer acts strategically in order to maximize his utility in future rounds \citep{amin2013learning,devanur2014perfect}

Finally, although in this work we focus on the seller's side, regret minimization approaches have been recently applied also on the buyer's side, for example in \citep{mcafee2011design,weed2016online}.

\newcommand{\Rcond}{R^{\mathrm{cond}}}
\section{Preliminaries and definitions}
\label{s:prelim}
We assume all valuations $V_t$ belong to a fixed and unknown finite set $\cv = \{v_1,\ldots,v_K\}\subset[0,1]$, with $0=v_{0}\leq v_{1}<\cdots<v_K\leq v_{K+1}=1$. Unless otherwise specified, the sequence $V_1,V_2,\dots$ is assumed to be sampled i.i.d.\ from a fixed and unknown distribution on $\{v_{1},\ldots,v_{K}\}$. Let $p_{i}=\P(V_{1}=v_{i})$ and assume (without loss of generality) that $p_{i}>0$ for all $i\in\{1,\ldots,K\}$. An instance of the posted price problem is then fully specified by the pairs $(v_1,p_1),\dots,(v_K,p_K)$. We assume auctions are implemented according to the following online protocol: for each round $t\in\{1,2,\dots\}$
\begin{enumerate}[topsep = 0pt, parsep = 0pt, itemsep = 0pt]
\item the seller posts a price $X_t \in [0,1]$
\item buyer's valuation $V_t$, hidden from the seller,  is drawn from $\cv$ according to $\{p_1,\l,p_K\}$
\item the seller observes $\Ind{V_t \ge X_t} \in \{0,1\}$ and computes the revenue $r_t(X_t)=X_t \, \Ind{V_t \ge X_t}$
\end{enumerate}
Note that the expected revenue $\E[r_t(x)] = \E\big[x\,\Ind{V_t\ge x}\big]$ is equal to $x\,D(x)$, where
\begin{equation}
\label{eq:demand}
	D(x) = \P(V_1 \ge x) = \sum_{k \colon v_k \ge x} p_k
\end{equation}
is the \emph{demand curve}. Hence the price maximizing the expected revenue $\E[r_t(x)]$ belongs to the set of valuations $\{v_1,\ldots,v_K\}$ and we denote one of the possible optimal valuations by $v^{\star} = v_{i^{\star}}$. We define the suboptimality gap of $v_j$ with respect to $v^{\star}$ by $\Delta_{j}=\E\big[r_1(v^\star)-r_1(v_j)\big]$. The goal of the seller is to minimize the \emph{regret} 
\[
	R_T = \max_{x \in [0,1]}\E\left[\sum_{t=1}^{T}r_{t}(x)-\sum_{t=1}^{T}r_{t}(X_{t})\right]=\E\left[\sum_{t=1}^{T}r_t(v^{\star})-r_{t}(X_{t})\right] 
\]
where the expectation is understood with respect to any randomness in the generation of $V_1,\dots,V_T$ and $X_1,\dots,X_T$. Formally, a \emph{deterministic seller} is a sequence of functions $X_1,X_2,\dots$ where $X_t = f_t(X_1, Z_1,\dots, X_{t-1}, Z_{t-1})$ is the price posted at time $t$, the random variable $Z_s$ is the binary feedback $\Ind{V_s \ge X_s}$ received by the seller in at time $s$, and $f_t \colon \big( [0,1] \times \{0,1\} \big)^{t-1} \to [0,1]$ is an arbitrary function. A \emph{randomized seller} is a probability distribution over deterministic sellers.

\section{Lower bounds}
\label{s:lower}
In this section we show some important similarities and differences between dynamic pricing with $K$ valuations and the $K$-armed bandit problem. First, we state that in the distribution-free case the former is at least as difficult as the latter. More precisely, if $T \ge K^3$, no algorithm can have regret better than $\sqrt{KT}$ on dynamic pricing with $K$ valuations. The proof of the following theorem is deferred to Appendix~\ref{s:lb-deferred}.
\begin{theorem}
\label{th:kl-adapted}
For any number of valuations $K\ge 3$ and all time horizons $T \ge K^3$ there exist $K$ pairs $(v_{1},p_{1}),\l,(v_{K},p_{K})$
such that the expected regret of any pricing strategy satisfies 
$
	R_{T}
=
	\Omega\big( \sqrt{KT} \big)
$.
\end{theorem}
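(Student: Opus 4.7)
The plan is to adapt the information-theoretic $\sqrt{KT}$ lower bound for stochastic $K$-armed bandits via a change-of-measure argument. I will build a base instance $\P_0$ supported on $\cv=\{v_1<\cdots<v_K\}$ under which every valuation yields the same expected revenue, and, for each $k\in\{2,\ldots,K\}$, an alternative $\P_k$ that coincides with $\P_0$ on $D(v_j)$ for all $j\ne k$ but makes $v_k$ the unique optimal valuation with gap exactly $\e=c\sqrt{K/T}$ for a small absolute constant $c>0$. Before starting, I observe that by the monotone structure of the feedback any price $X_t\in(v_{j-1},v_j]$ returns the same observation $\Ind{V_t\ge v_j}$, so replacing any strategy by the one that posts the upper endpoint $v_j$ of its current bucket only weakly increases revenue; it is therefore enough to lower bound the regret of strategies posting $X_t\in\cv$.

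For the explicit construction I will take, under $\P_0$,
\[
	D(v_1)=1,\qquad D(v_i)=\tfrac{3}{4}-\tfrac{i-2}{K-2}\cdot\tfrac{1}{2}\text{ for }i\ge 2,\qquad v_i=\tfrac{1}{4\,D(v_i)}.
\]
A direct check gives $v_iD(v_i)=1/4$ for every $i$, $D(v_i)\in[1/4,3/4]$ and $v_i\ge 1/3$ for $i\ge 2$, $p_1^{(0)}=p_K^{(0)}=1/4$, and $p_i^{(0)}=1/(2(K-2))$ for $2\le i\le K-1$. For each $k\in\{2,\ldots,K\}$ I define $\P_k$ by transferring mass $\delta_k=\e/v_k$ from $p_{k-1}^{(0)}$ to $p_k^{(0)}$. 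This changes $D$ only at $v_k$, raising it by $\delta_k$, and makes $v_k$ uniquely optimal with gap exactly $\e$. Feasibility reduces to $\delta_k\le p_{k-1}^{(0)}$; since $v_k\ge 1/3$ gives $\delta_k\le 3\e$ and $p_{k-1}^{(0)}\ge 1/(2(K-2))$, it suffices that $\e\le 1/(6(K-2))$, which is implied by $T\ge K^3$ once $c$ is small enough.

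Because $\P_0$ and $\P_k$ produce identical Bernoulli feedback at every $v_j$ with $j\ne k$, the chain rule for KL divergences gives
\[
	\kl(\P_0^T\|\P_k^T)=\E_0[N_k(T)]\cdot\kl\!\bigl(D(v_k),\,D(v_k)+\delta_k\bigr),
\]
where $N_k(T)$ counts the rounds with $X_t=v_k$. Using $D(v_k)(1-D(v_k))\ge 3/16$ and $\delta_k\le 3\e$, the per-round $\kl$ is $O(\e^2)$. Pigeonhole yields $k^\star\in\{2,\ldots,K\}$ with $\E_0[N_{k^\star}(T)]\le T/(K-1)$, hence $\kl(\P_0^T\|\P_{k^\star}^T)=O(c^2)$. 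Pinsker's inequality combined with Markov's inequality applied to the event $\{N_{k^\star}(T)>T/2\}$ then gives $\P_{k^\star}(N_{k^\star}(T)>T/2)\le 1/2-\eta$ for an absolute $\eta>0$, provided $c$ is sufficiently small and $K$ exceeds some absolute constant (smaller values of $K$ are absorbed into the constants). Combining this with the identity $R_T(\P_{k^\star})=\e\bigl(T-\E_{k^\star}[N_{k^\star}(T)]\bigr)$---which holds because under $\P_{k^\star}$ every valuation $v_j$ with $j\ne k^\star$ has expected revenue exactly $\e$ below the optimum---yields $R_T(\P_{k^\star})=\Omega(\e T)=\Omega(\sqrt{KT})$.

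The main obstacle is the tension between two competing requirements on the construction. Keeping the per-round $\kl$ at scale $\e^2$ demands $v_k\sqrt{D(v_k)(1-D(v_k))}=\Omega(1)$ uniformly in $k$, which obliges the base probabilities to distribute mass of order $1/K$ across $\Theta(K)$ valuations; simultaneously, the perturbation $\delta_k=\e/v_k$ of order $\e$ must fit inside one such $p_{k-1}^{(0)}$ of order $1/K$. Balancing these two constraints forces $\e\lesssim 1/K$, equivalently $T\gtrsim K^3$, which is precisely where the original Kleinberg-Leighton argument breaks down in the regime of constant $K$ and growing $T$.
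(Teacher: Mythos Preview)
Your proposal is essentially correct and follows the same information-theoretic template as the paper: a base measure $\P_0$ with all valuations having equal expected revenue, single-coordinate perturbations $\P_k$ creating a gap $\e\asymp\sqrt{K/T}$, and a KL/Pinsker argument controlling the change in pull counts. The constructions differ only cosmetically (the paper takes $v_i=\tfrac12+\tfrac{i}{2K}$ with $D(v)=\tfrac{1}{2v}$ and perturbs only indices in $\{\lceil K/2\rceil,\ldots,K\}$; you take $D(v_i)$ affine in $i$ with $v_i=1/(4D(v_i))$), but both achieve the two key properties you correctly isolate: $D(v_k)\bigl(1-D(v_k)\bigr)=\Theta(1)$ and $p_i^{(0)}=\Theta(1/K)$.

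Where you diverge is the final step. The paper draws the perturbed index $J$ at random, bounds $\E_k[N_k]\le\E_0[N_k]+T\sqrt{c\,\kl}$ directly via Pinsker on expectations, and then averages over $J$ using Jensen to get $\E_J\E_0[N_J]\le 2T/K$; this yields $R_T\ge\e T\bigl(\tfrac13-\e\sqrt{108T/K}\bigr)$ \emph{uniformly} for all $K\ge3$, with an explicit constant $1/375$. You instead combine pigeonhole with Markov on the event $\{N_{k^\star}>T/2\}$, giving $\P_0(N_{k^\star}>T/2)\le 2/(K-1)$, which is only nontrivial once $K\ge6$; this forces you to cover $K\in\{3,4,5\}$ by an unspecified separate $\Omega(\sqrt{T})$ argument. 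The Markov detour is unnecessary: using the TV/expectation bound $\bigl|\E_0[N_{k^\star}]-\E_{k^\star}[N_{k^\star}]\bigr|\le T\sqrt{\kl/2}$ directly gives $\E_{k^\star}[N_{k^\star}]\le T/(K-1)+O(c)\,T\le T/2+O(c)\,T$, hence $R_T(\P_{k^\star})=\Omega(\e T)$ for all $K\ge3$ with no case split---essentially the paper's computation without the preliminary randomization of $J$. One minor bookkeeping point: your feasibility condition $\e\le1/(6(K-2))$ only covers $k\ge3$; for $k=2$ you also need $\delta_2=3\e\le p_1^{(0)}=1/4$, i.e.\ $\e\le1/12$, which is likewise implied by $T\ge K^3$ for sufficiently small $c$.
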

Next, we show that in the distribution-dependent case, dynamic pricing is strictly harder than multiarmed bandits. More precisely, even if the suboptimality gap $\Delta$ is constant and $K$ is small, no dynamic pricing algorithm can have regret better than $\sqrt{T}$, whereas the distribution-dependent regret of multiarmed bandits is $\scO(\log T)$.
\begin{theorem}
\label{th:lower1}
If for some constant $c^\star > 0$ a seller algorithm has regret smaller than $c^\star\sqrt{T}$ on any instance of the stochastic dynamic pricing problem with at most three valuations, then there exists an instance with $\Delta = \Theta(1)$ on which the algorithm suffers regret $\Omega(\sqrt{T})$.
\end{theorem}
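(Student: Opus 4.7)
The strategy is a Le~Cam two-point argument applied to a pair of instances $\mathcal{I}_0,\mathcal{I}_1$ carefully designed so that (a) each has at most three valuations and suboptimality gap $\Delta = \Theta(1)$, (b) the two optimal prices are separated enough that a near-optimal strategy on $\mathcal{I}_0$ is $\Omega(1)$-suboptimal per round on $\mathcal{I}_1$ (and vice versa), and (c) the feedback distributions under the two instances coincide at every posted price except on an ``informative'' sub-interval of $[0,1]$ where they differ by only $\varepsilon = \Theta(1/\sqrt{T})$. The construction idea is to exploit the multiplicative structure $x \mapsto xD(x)$ of the revenue near the right endpoint $x=1$: a probability mass of order $1/\sqrt{T}$ placed close to $1$ can shift the location of the revenue maximizer by a constant amount, because the amplitude factor $x$ is already of order~$1$ there. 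Concretely, I would start from a two-valuation instance $\mathcal{I}_0$ in which a high-price valuation is optimal with constant gap, then form $\mathcal{I}_1$ by peeling off a probability mass of order $\varepsilon$ from the top valuation and depositing it at a slightly lower third location, chosen so that the resulting three-valuation instance still has $\Delta = \Theta(1)$ but a different optimal price.

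With such a pair in hand, I would close the argument by contradiction. Suppose the algorithm has regret below $c^{\star}\sqrt{T}$ on every instance, hence in particular on both $\mathcal{I}_0$ and $\mathcal{I}_1$. Let $N^{\star}$ be the expected number of rounds the algorithm posts a price inside the informative sub-interval. Since each such round is $\Omega(1)$-suboptimal on at least one of the two instances, the regret hypothesis forces $N^{\star} = O(\sqrt{T})$. The per-round chain rule for KL divergence then yields that the KL divergence between the $T$-round observation laws under $\mathcal{I}_0$ and $\mathcal{I}_1$ is at most $O(N^{\star}\varepsilon^2) = O(1/\sqrt{T})$, and Pinsker's (or Bretagnolle--Huber's) inequality turns this into a total-variation bound of $o(1)$ between the two laws.

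This bound is the contradiction-generating step. The $c^{\star}\sqrt{T}$ regret bound on $\mathcal{I}_0$ forces the algorithm to post its $\mathcal{I}_0$-optimum $X_0^{\star}$ for $T - O(\sqrt{T})$ rounds; transporting this through the $o(1)$-total-variation coupling to $\mathcal{I}_1$, the same must happen (up to negligible error) on $\mathcal{I}_1$, where $X_0^{\star}$ is $\Delta = \Theta(1)$-suboptimal and therefore costs $\Omega(T)$ regret---contradicting the hypothesis. Both $\mathcal{I}_0$ and $\mathcal{I}_1$ have $\Delta = \Theta(1)$, so the algorithm must instead incur $\Omega(\sqrt{T})$ regret on at least one of them, giving the desired instance.

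The hardest step, and the one that distinguishes $K=3$ from $K=2$, is engineering the pair so that constant gap and $\varepsilon$-closeness of demand curves coexist: these requirements pull in opposite directions, since constant gap wants structurally different revenue curves while indistinguishability wants nearly identical demand curves. The ``$x$-leverage'' near $1$ is precisely what creates enough slack for both to hold with $\varepsilon = \Theta(1/\sqrt{T})$; its absence in the $K=2$ regime---where only one free valuation is available---is consistent with the $O(\log T/\Delta)$ upper bound announced for $K=2$ in the fourth bullet of the introduction.
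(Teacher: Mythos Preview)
Your desiderata (b) and (c) are mutually exclusive, and this is a genuine gap. If the two demand curves satisfy $|D_0(x)-D_1(x)|\le\varepsilon$ for every $x$ (which follows from ``coincide everywhere except on a sub-interval where they differ by only $\varepsilon$''), then the revenue curves satisfy $|xD_0(x)-xD_1(x)|\le\varepsilon$ uniformly. Hence the $\mathcal{I}_0$-optimum $X_0^\star$ is at most $2\varepsilon$-suboptimal on $\mathcal{I}_1$, never $\Omega(1)$-suboptimal. Your contradiction step therefore collapses: playing $X_0^\star$ for all $T$ rounds on $\mathcal{I}_1$ costs only $O(\varepsilon T)=O(\sqrt{T})$, which is perfectly consistent with the hypothesis. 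The ``$x$-leverage near $1$'' intuition does not rescue this: moving $\varepsilon$ mass can indeed relocate the \emph{argmax} by a constant, but it cannot create an $\Omega(1)$ \emph{revenue} gap between the old argmax and the new one.

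The paper's construction avoids this trap by \emph{not} asking both instances to have constant gap. Instance~1 has two valuations and $\Delta=\tfrac14$; Instance~2 inserts a third valuation at $\tfrac{1-\eta}{2}$ with extra demand $\eta=\Theta(1/\sqrt{T})$, so its gap is only $\Theta(\eta)$. The informative price $\tfrac{1-\eta}{2}$ is $\Theta(\eta)$-suboptimal on Instance~1 (not $\Theta(1)$), and the argument runs in the opposite direction from yours: the $c^\star\sqrt{T}$ assumption on \emph{both} instances, via a Bretagnolle--Huber bound, forces $\E_1[N_\eta(T)]=\Omega(1/\eta^2)=\Omega(T)$ plays of the informative price, and since each such play costs $\Theta(\eta)$ on Instance~1, this yields $R_T^{(1)}=\Omega(\eta T)=\Omega(\sqrt{T})$. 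The theorem only claims $\Omega(\sqrt{T})$ regret on \emph{one} constant-gap instance; the second instance is a helper with vanishing gap, and this asymmetry is essential.
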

\begin{proof}
We consider two instances. The first has $\Delta = \frac{1}{4}$ and the second has $\Delta = \scO(1/\sqrt{T})$. We prove that if the algorithm has regret $\scO(\sqrt{T})$ on both instances, then it must have regret $\Omega(\sqrt{T})$ on the first instance. The two instances are defined as follows.
\begin{center}
\addtolength{\tabcolsep}{-2pt}
\begin{tabular}{|l|l|l|}
\multicolumn{3}{c}{\textbf{Instance 1}}
\\ \hline
 $v^{(1)}_1 = 0$ & $D^{(1)}(0) = 1$ & $r^{(1)}(0) = 0$\rule{0pt}{3ex}
\\
$v^{(1)}_2 = \frac{1}{2}$ & $D^{(1)}\big(\frac{1}{2}\big) = \frac{1}{2}$ & $r^{(1)}\big(\frac{1}{2}\big) = \frac{1}{4}$\rule{0pt}{3ex}
\\[1mm] \hline
\end{tabular}
\hspace{0.1cm}
\addtolength{\tabcolsep}{-2pt}
\begin{tabular}{|l|l|l|}
\multicolumn{3}{c}{\textbf{Instance 2}}
\\ \hline
$v^{(2)}_1 = 0$ & $D^{(2)}(0) = 1$ & $r^{(2)}(0) = 0$\rule{0pt}{3ex}
\\
$v^{(2)}_2 = \frac{1-\eta}{2}$ & $D^{(2)}\big(\frac{1-\eta}{2}\big) = \frac{1}{2} + \eta$ & $r^{(2)}\big(\frac{1-\eta}{2}\big) =\frac{1+\eta-2\eta^2}{4}$\rule{0pt}{3ex}
\\
$v^{(2)}_3 = \frac{1}{2}$ & $D^{(2)}\big(\frac{1}{2}\big) = \frac{1}{2}$ & $r^{(2)}\big(\frac{1}{2}\big) = \frac{1}{4}$\rule{0pt}{3ex}
\\[1mm] \hline
\end{tabular}
\end{center}
In Instance~1 the optimal price is $v^{(1)}_2 = \frac{1}{2}$ with revenue $\frac{1}{4}$. In Instance~2 the optimal price is $v^{(2)}_2 =\frac{1-\eta}{2}$ with revenue $\frac{1+\eta-2\eta^2}{4} \geq \frac{1}{4} + \frac{\eta}{8}$ for $\eta \le \frac{1}{4}$. Without loss of generality, we can assume that the seller algorithm only posts prices in the set $\big\{0, \frac{1-\eta}{2}, \frac{1}{2} \big\}$. Let $N_\eta(t)$ be the number of times that the price $\frac{1-\eta}{2}$ is posted and let $\nu^{(i)}_t$ be the law of observed rewards up to time $t$ in Instance~$i\in\{1,2\}$. Since prices $0$ and $\frac{1}{2}$ are uninformative (because demand and revenue do no change across the two instances), it follows from standard calculations that the KL divergence between $\nu^{(1)}_t$ and $\nu^{(2)}_t$ is upper bounded by the KL between two Bernoulli of parameter $\frac{1}{2}$ and $\frac{1}{2} + \eta$ times the expected number of times $v_2$ is chosen under Instance~$1$,
\[
\kl \big(\nu^{(1)}_t \,\Vert\, \nu^{(2)}_t\big) \le \kl\left(\frac{1}{2} \;\Big\Vert\; \frac{1}{2} + \eta \right) \E_1 \big[N_\eta(t)\big] \le 4\eta^2\,\E_1 \big[N_\eta(t) \big] \quad \text{ if } \eta \leq \frac{1}{4}
\]
where $\E_1$ denotes expectation under Instance~$1$. Let $R_T^{(i)}$ be the regret under Instance~$i\in\{1,2\}$. Since $r^{(1)}\big(\frac{1-\eta}{2}\big) = \frac{1-\eta}{2}D^{(1)}\big(\frac{1-\eta}{2}\big) = \frac{1-\eta}{4}$, we have $R_T^{(1)} \geq \frac{\eta}{4} \E_1 \big[N_\eta(T)\big]$. Using the assumption that the seller's algorithm has a regret smaller than $c^{\star}\sqrt{T}$, and adapting an argument of \citet[Proof of Theorem~5]{bubeck2013bounded}, we can write
\[
	\frac{\eta}{4} \frac{T}{4}\exp\Big(-4\eta^2\E_1 \big[N_\eta(T)\big]\Big)
\le
	\max\Big\{R_T^{(1)},R_T^{(2)}\Big\}
\le
	 c^{\star}\sqrt{T}~.
\]
Hence, for $\eta = \frac{32c^{\star}}{\sqrt{T}}$, it must hold that $\E_1\big[N_\eta(t)\big] \ge \frac{\ln 2}{4\eta^2}$, which implies that $R_T^{(1)} \ge \frac{\ln 2}{512c^{\star}}\sqrt{T}$.
\end{proof}
Theorem~\ref{th:lower1} can be extended to the case when $K$ is known to the seller. This can be done by adding an extra valuation $v^{(1)}_3 > v^{(1)}_2$ to Instance~$1$ which has either vanishing probability $p_3$ or vanishing distance $v^{(1)}_3 - v^{(1)}_2$ from $v^{(1)}_2$. (In the latter case the value of $v^{(1)}_3$ depends on the algorithms.) In both cases the seller algorithm is unlikely to detect the presence of this extra valuation, and a slight modified proof of Theorem~\ref{th:lower1} can be applied.

This lower bound shows that $\sqrt{T}$ is best possible in the distribution-dependent case even when $K$ is small and $\Delta$ is a constant. In Section~\ref{s:kval} we show how regret bounds can be substantially better than $\sqrt{T}$ when the learner knows the value of the smallest drop in the demand curve.

\section{Distribution-free bounds}
\label{s:kval-distrfree}
In this section we focus on distribution-free bounds, i.e., bounds that do not depend on the demand curve. The regret bound we prove exceeds the theoretical lower bound stated in Section~\ref{s:lower} by a constant term depending only on the distance between adjacent valuations. 

Our Algorithm~\ref{algo:kval-distrfree} works in two phases: a search phase and a bandit phase. In the search phase a binary search for all ``relevant'' valuations is performed. By the end of this phase, a tight estimate of all such valuations is determined with high probability. During the bandit phase a stochastic bandit algorithm is run on the estimated valuations. As it turns out, this simple scheme is enough to ensure an optimal $\sqrt{KT}$ convergence up to an additive constant independent of the distribution of buyer's valuations. Notably, the algorithm \emph{does not} need to know $K$ in advance. 
\begin{algorithm2e}[t]
\caption{\label{algo:kval-distrfree}}
	\LinesNumbered
	\SetKwInput{kwInit}{Initialization}
	\KwIn{$T\in\mathbb{N}$, $\delta\in(0,1)$.}
	\kwInit{$\ck_1 \gets \{1\}$, $k_1 \gets 1$, $a_1 \gets 0$, $b_1 \gets 1$, $a_0 \gets 0$, $\overline{D}(0)\gets 0$.}

	\For(\tcp*[f]{search phase})
	{
		$m = 1, 2, \dots$
	}
	{
		\uIf
		{
			$\{ j \in \ck_m \mid b_j - a_j >  T^{-1/2} \} \neq \varnothing$\label{state:int_too_big}
		}
		{
			pick $i_m = \min \{ j \in \ck_m \mid b_j - a_j >  T^{-1/2} \}$;\label{state:select_int}
			
			offer price $x_m = (a_{i_m} + b_{i_m})/2$ %
			for $\big\lceil 8 \sqrt{T/ k_m} \ln \delta^{-1} \big\rceil$ rounds;\label{state:post_kvalfree}
			
			\uIf(\tcp*[f]{undershooting})
			{
				$\overline{D}(a_{i_m}) - \overline{D}(x_m) < (k_m / T)^{1/4}/2$\label{state:comp1}
			}
			{
				\uIf(\tcp*[f]{check for fake arms})
				{
					$\overline{D}(x_m) - \overline{D}(b_{i_m}) \ge (k_m / T)^{1/4}/2$\label{state:comp2}
				}
				{
					update $a_{i_m} \gets x_m$, $\ck_{m+1} \gets \ck_m$ %
					and $k_{m+1} \gets k_m$;\label{state:usupd}
				}
				\lElse{update $\ck_{m+1} \gets	\ck_m\m\{i_m\}$ and $k_{m+1} \gets k_m$\label{state:killfake}}
			}
			\uElseIf(\tcp*[f]{overshooting})
			{
				$\overline{D}(a_{i_m}) - \overline{D}(x_m) \ge (k_m / T)^{1/4}/2$\label{state:oversh}
			}
			{
				\uIf(\tcp*[f]{new arms})
				{
					$\mathrm{sign}(a_i-x_m)\big(\overline{D}(a_i)-\overline{D}(x_m)\big) \ge (k_m / T)^{1/4}/2$ %
					for all $i$\label{state:comp3}
				}
				{
					set $a_{k_m +1}\gets x_m$, $b_{ k_m +1} \gets b_{i_m}$, %
					$\ck_{m+1} \gets \ck_m \cup \{k_m +1\}$ and $k_{m+1} \gets k_m +1$;\label{state:new-arm}
				}
				
				update $b_{i_m} \gets x_m$, $\ck_{m+1} \gets \ck_m$ and $k_{m+1} \gets k_m$;
			}
		}
		\lElse{denote the last macrostep by $M$ and \textbf{break}\label{state:end_search_phase}}
	}
	run the UCB1 algorithm on the set of prices $\{a_j\}_{j\in \ck_M}$\label{state:bandit_phase}\tcp*{bandit phase}
\end{algorithm2e}
We call \emph{macrostep} a block of consecutive rounds in which the same price is offered consistently. For each price $x$ we denote by $\overline{D}(x)$ the fraction of accepted offers of $x$ during the last macrostep in which $x$ was offered.
At the beginning of the search phase, our algorithm receives as input the time horizon $T$ and a confidence parameter $\delta$. The algorithm then proceeds in macrosteps of length $\big\lceil 8 \sqrt{T/ k_m} \ln \delta^{-1} \big\rceil$, where $k_m$ is the total number of valuations discovered so far. The goal of the search phase is to approximately locate all \emph{relevant} valuations, that is valuations $v_i$ whose associated probability $p_i$ is at least $\sqrt[4]{K/T}$. 

Initially, all relevant valuations belong to $[a_1,b_1] = [0,1]$. The search proceeds as long as there is at least an interval $i$ containing relevant valuations with length larger than $T^{-1/2}$ (line~\ref{state:int_too_big}). When such an interval $i$ is selected at line~\ref{state:select_int}, a macrostep of binary search is performed and the midpoint price $x_m$ of $[a_i,b_i]$ is offered for $\big\lceil 8 \sqrt{T/ k_m} \ln \delta^{-1} \big\rceil$ rounds (line~\ref{state:post_kvalfree}), thus obtaining an estimate of its demand. If the difference in demands (line~\ref{state:comp1}) is smaller than $(k_m/T)^{-1/4}/2$ no new relevant valuation is detected. Before eliminating the lower half of the interval (line~\ref{state:usupd}), a test designed to detect and remove \emph{fake arms} is performed (line~\ref{state:comp2}). We call fake arm  an interval containing no relevant valuations. Fake arms might be inadvertently allocated when intervals are too wide. In that case, the comparison between two distant points may reveal a large difference in demands due to the presence of several nonrelevant valuations in between. If that happens, the fake arm is removed when the interval becomes small enough (line~\ref{state:killfake}). When no significant difference is detected between the demands, all relevant valuations in $[a_i,b_i]$ remain in $[x_m,b_i]$ with high probability after the update. If, on the other hand, a difference between demands is detected (line~\ref{state:oversh}), two things happen. First, a test is performed to detect possible new relevant valuations (line~\ref{state:comp3}). If a new relevant valuation is spotted, a new interval $[x_m, b_i]$ is allocated. Second, the upper half of the interval $[a_i,b_i]$ is removed. If $[a_i,b_i]$ is split into $[a_i,x_m]$ and $[x_m,b_i]$, all relevant valuations are split between the two intervals. If $[a_i,b_i]$ is simply updated as $[a_i,x_m]$---since no significant difference was detected between the demands at $x_m$ and $b_i$---all relevant valuations in $[a_i,b_i]$ remain in $[a_i,x_m]$ with high probability. 

When all intervals become smaller than $T^{-1/2}$ (line~\ref{state:end_search_phase}), the search phase ends and all intervals $[a_i,b_i]$ are returned. At this point each relevant valuation is contained in one of the intervals with high probability. Therefore the algorithm has now access to $T^{-1/2}$-close approximations of all of them, and the bandit phase begins. In the bandit phase, the algorithm UCB1 \citep{auer2002finite} is run on the set of left endpoints of the intervals (line~\ref{state:bandit_phase}).

\begin{theorem}
\label{th:kval-df}
If Algorithm~\ref{algo:kval-distrfree} is run on an unknown number $K$ of pairs $(v_1,p_1),\dots,(v_K,p_K)$ with input parameter $\delta = T^{-2}$, then its regret satisfies
\[
	R_T = \widetilde{\co} \left( \sqrt{KT} \right) + V(V+1)
\qquad \text{where} \quad
	V = \max_{i\in\{1,\ldots,K\}} \frac{v_k^4}{(v_i-v_{i-1})^5}~.
\]
\end{theorem}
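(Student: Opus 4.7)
The plan is to decompose the regret into three contributions: a concentration failure term, the search phase cost, and the bandit phase cost. First, I would define the good event $\mathcal{E}$ on which, for every macrostep $m$ and offered price $x_m$, the empirical demand $\overline{D}(x_m)$ is within $\tfrac{1}{4}\sqrt[4]{k_m/T}$ of the true demand $D(x_m)$. Applying Hoeffding's inequality to the $\lceil 8\sqrt{T/k_m}\ln\delta^{-1}\rceil$-sample average gives this bound with per-macrostep failure probability at most $2\delta$; a union bound over the at most $T$ macrosteps with $\delta = T^{-2}$ yields $\P(\mathcal{E}^c) = O(1/T)$, so $\mathcal{E}^c$ contributes only $O(1)$ to $R_T$.

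On $\mathcal{E}$ I would prove by induction on $m$ that: (i) every \emph{relevant} valuation $v_k$ (meaning $p_k \ge \sqrt[4]{K/T}$) is contained in some interval $[a_j,b_j]$ with $j \in \ck_m$; and (ii) every \emph{fake arm} (an interval in $\ck_m$ containing no relevant valuation) is detected and removed by line~\ref{state:killfake} before its width drops below $T^{-1/2}$. The crucial observation is that the decision threshold $\tfrac{1}{2}\sqrt[4]{k_m/T}$ in lines~\ref{state:comp1}, \ref{state:comp2}, \ref{state:oversh}, and \ref{state:comp3} is exactly twice the concentration radius: on $\mathcal{E}$, a detected empirical jump corresponds to a true demand gap of at least $\tfrac{1}{4}\sqrt[4]{k_m/T}$, while any relevant valuation inside an interval produces a true demand jump of size at least $\sqrt[4]{K/T}$ and is therefore detected. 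Consequently, at the end of the search phase each relevant valuation lies within $T^{-1/2}$ of some $a_j$ with $j \in \ck_M$ and $|\ck_M| = O(K)$.

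For the regret bounds, the crucial bookkeeping is that the $i$-th interval created during the search exists only while $k_m \ge i$, so all of its at most $O(\log T)$ halvings have length $O(\sqrt{T/i}\log T)$. Summing across intervals gives a search-phase cost of $\sum_{i=1}^{O(K)} \log T \cdot O(\sqrt{T/i}\log T) = \widetilde{\co}(\sqrt{KT})$. The bandit phase runs UCB1 on $|\ck_M| \le O(K)$ arms located within $T^{-1/2}$ of the true valuations, yielding the standard distribution-free regret $\co(\sqrt{KT\log T})$ together with an $\co(\sqrt{T})$ overhead from the approximation error. The additive term $V(V+1)$ absorbs the burn-in regime in which $T$ is too small for the discretization $T^{-1/2}$ to separate adjacent valuations $v_{i-1}, v_i$ or for the concentration radius $\sqrt[4]{K/T}$ to fall below some $p_i$; in this regime the trivial bound $R_T \le T$ is already dominated by $V(V+1)$ so the theorem is vacuously satisfied. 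I expect the main obstacle to be the detailed control of fake arms: showing on $\mathcal{E}$ that their total number stays $O(K)$ and that each is removed at line~\ref{state:killfake} within $O(\log T)$ macrosteps of its creation, so that their contribution fits within the $\widetilde{\co}(\sqrt{KT})$ budget.
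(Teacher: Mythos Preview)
Your overall decomposition (concentration event, search-phase cost, bandit-phase cost) matches the paper's, and your accounting for the search phase via $\sum_{i=1}^{O(K)} \sqrt{T/i}\cdot\mathrm{polylog}(T) = \widetilde{\co}(\sqrt{KT})$ is exactly what the paper does. However, there is a real gap in your explanation of the additive constant $V(V+1)$.

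You describe $V(V+1)$ as absorbing a ``burn-in regime'' where $T^{-1/2}$ cannot separate adjacent valuations, or where $\sqrt[4]{K/T}$ exceeds ``some $p_i$''. Neither of these yields the claimed constant. The second condition in particular is $T < K/p_i^4$, which can be arbitrarily large (some $p_i$ may be tiny), so it does not bound $T$ by an instance-dependent quantity involving $c = \min_i(v_i - v_{i-1})$ and the $v_i$'s. The actual source of the constant is different: you have only shown that \emph{relevant} valuations are captured by the search, but for the bandit phase to work you need $v^\star$ itself to lie in $\bigcup_{j\in\ck_M}[a_j,b_j]$. The paper handles this by an optimality argument: suppose $v_j = v^\star$ is irrelevant, i.e., $p_j < \sqrt[4]{K/T}$. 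Then $v_j$ must beat both $v_1$ and $v_{j+1}$, and combining the inequalities $v_j D(v_j) \ge v_1$ and $v_j D(v_j) \ge v_{j+1} D(v_{j+1})$ forces $p_j \ge v_1 c / v_K^2$. Together with $p_j < \sqrt[4]{K/T}$ this gives $T < K v_K^8/(v_1 c)^4$, and \emph{that} is the regime in which the trivial bound $R_T \le T$ is invoked. Without this step your argument does not show that UCB1 is competing against (a $T^{-1/2}$-approximation of) $v^\star$, so the bandit-phase regret bound does not follow.

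A secondary point: you flag the fake-arm count as the ``main obstacle'', but the paper dispatches it quickly --- on the good event at most $K$ genuine intervals are ever created, and each interval (fake or not) contributes at most $O(\log T)$ macrosteps of the current length $\lceil 8\sqrt{T/k_m}\ln\delta^{-1}\rceil$, so the total number of macrosteps $M$ is crudely bounded by $\sqrt{KT}$. The delicate part is really the optimality argument above.
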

We actually prove a slightly improved bound, in which the constant $V(V+1)$ is replaced by the smaller term $K ({v_K^4}/{v_1^4}) \bigl(1 + ({v_K^4}/{c^4}) \bigr)$, where $c = \min_{i\in\{2,\ldots,K\}}\{v_i-v_{i-1}\}$. To give a frame of reference, previously known upper bounds for discontinuous demand curves \citep{den2017discontinuous} are at best of order
$\big({K^{20}}/{c^{18}}\big) \sqrt{T}$, where $v_1$ is assumed to be bounded away from zero and $K$ needs to be known in advance.
\begin{proof}
We begin by proving that at any time time during the search phase, all intervals $[a_i, b_i]$ satisfy $D(b_i)-D(a_i) \ge T^{-1/4}$ with high probability and with the same probability all valuations $v_j$ not belonging to any of these intervals satisfy $p_j <(K/T)^{1/4}$. For any given price $x \in [0, 1]$ offered during the search phase, Hoeffding's inequality implies $\big\lvert \overline{D}(x)-D(x) \big\rvert \le (\lvert \ck_m \rvert/T)^{1/4}/4$ with probability at least $1-2\delta$. Therefore, if $D(x) - D(y) \geq (\lvert \ck_m \rvert/T)^{1/4}$, then $\overline{D}(x)-\overline{D}(y) \ge (\lvert \ck_m \rvert/T)^{1/4}/2$ with probability at least $1-2\delta$. Moreover, if $D(x)=D(y)$, then $\overline{D}(x)-\overline{D}(y) < (\lvert \ck_m \rvert/T)^{1/4}/2$ with probability at least $1-4\delta$. Since at each macrostep the algorithm performs at most $K+1$ comparisons between $\overline{D}(x)$ and $\overline{D}(y)$ for pairs of points $x,y$ (lines \ref{state:comp1}, \ref{state:comp2}, \ref{state:comp3}), the probability that, for at least one of these comparisons, we have
\begin{equation}
\label{eq:bad_event_k_dfree}
\resizebox{0.94\textwidth}{!} 
{$
\left( 
	\overline{D}(x)-\overline{D}(y) < \sqrt[4]{\frac{k_m}{16 \, T}} \;\wedge\; D(x)-D(y) \ge \sqrt[4]{\frac{k_m}{T}}
\right) 
	\;\text{or}\;
\left( 
	\overline{D}(x)-\overline{D}(y) \ge \sqrt[4]{\frac{k_m}{16 \, T}} \;\wedge\; D(x)=D(y)
\right)
$}
\end{equation}
is at most $4(K+1)\delta$. Thus the probability that the event~(\ref{eq:bad_event_k_dfree}) occurs for at least one comparison in at least one macrostep is at most $4(K+1)M\delta$, where $M \le \sqrt{KT}$. This proves the initial claim. By paying an additional $4(K+1)M\delta T = \co\big(K\sqrt{K/T}\big)$ we can therefore assume that event~(\ref{eq:bad_event_k_dfree}) never occurs. In this case at most $K$ binary searches are performed and ---ignoring constants and logarithmic factors--- the regret increases by at most 
$
	\sum_{k=1}^K \sqrt{{T}/{k}}
\le
	\sqrt{T} \int_0^K x^{-1/2} \mathrm{d}x
= 
	2 \sqrt{KT}
.$
We prove now that if $v_K \notin \bigcup_{j\in\ck_M}[a_j,b_j]$ (which implies $p_K < \sqrt[4]{K/T}$), then it is suboptimal. In order for $v_K$ to be optimal, it would have to have at least a revenue higher than $v_1$. Thus
\[
v_K p_K \ge v_1 \implies \sqrt[4]{K/T} > p_K \ge \frac{v_1}{v_K}
\]
which can only happen if $T<K(v_K / v_1)^4$. By paying an additional $K(v_K / v_1)^4$ term in the regret we can therefore assume that $v_K$ is suboptimal. We show now that all other valuations not belonging to $\bigcup_{j\in\ck_M}[a_j,b_j]$ are also suboptimal. Take any valuation $v_j\notin\bigcup_{i\in\mathcal{K}_M}[a_i,b_i]$ (which again, implies $p_j<\sqrt[4]{K/T}$) strictly smaller than $v_K$. In order for $v_j$ to be optimal, it has to at least be better than $v_1$ and $v_j+1$. If $v_j$ is better than $v_1$ 
\begin{equation}
v_j\sum_{k=j}^K p_k \ge v_1 \implies p_{j+1} \ge \frac{v_1}{v_j}-p_j-\sum_{k=j+2}^K p_k.\label{eq:lower_bound_pp1}
\end{equation}
If $v_j$ is better than $v_{j+1}$ 
\[
v_j \sum_{k=j}^K p_k \ge v_{j+1}\sum_{k=j+1}^K p_k \implies p_j \ge \left(\frac{v_{j+1}}{v_j}-1\right)\sum_{k=j+1}^K p_k=\left(\frac{v_{j+1}}{v_j}-1\right)\left(p_{j+1}+\sum_{k=j+2}^K p_k \right)
\]
and lower bounding $p_{j+1}$ as in (\ref{eq:lower_bound_pp1}) gives 
\[
p_j \ge \left(\frac{v_{j+1}}{v_j}-1\right) \left(\frac{v_1}{v_j}-p_j\right)\implies p_j\ge\frac{v_1}{v_{j+1}}\frac{v_{j+1}-v_j}{v_j}\ge\frac{v_1 c}{v_K^2}
\]
where $c=\min_{i\in\{2,\ldots,K\}}\{v_i-v_{i-1}\}$. Being $p_j<\sqrt[4]{K/T}$
this can only happen if $T < K v_K^8/(v_1 c)^4$. Thus we can assume $v_j$ is suboptimal by paying at most an extra $K v_K^8/(v_1 c)^4$ term in the regret. This proves that $v^\star \in \bigcup_{j\in\ck_M}[a_j,b_j]$. Being $b_j-a_j<T^{-1/2}$, offering $a_j$ rather than any $x\in[a_j,b_j]$ results in an regret increase of at most $\sqrt{T}$. Finally, running the UCB1 algorithm \cite{auer2002finite} for standard stochastic bandits adds another $\widetilde{\co}(\sqrt{k_m T})$ term to the regret, where again $k_m \le K$.
\end{proof}
We now discuss the role that $c$ and $v_1$ play in the dynamic pricing problem. Assume that $p_{i^\star} < \sqrt[4]{K/T}$ but there exist valuations $v_j > v^{\star}$ with $p_j \ge \sqrt[4]{K/T}$, and let $v_k$ be the smallest of such valuations. Arguing as in the proof of Theorem \ref{th:kval-df}, one can prove that $d_k = v_{k}-v^{\star}$ must satisfy
\[
	d_k \le \frac{v_K^2}{v_{1}} K \sqrt[4]{\frac{K}{T}}~.
\]
This means that in principle the optimum valuation $v^\star$ could be hiding in any of the intervals $[v_i - d_i, v_i - c)$, where $v_i$ are all valuations with probabilities $p_i \ge \sqrt[4]{K/T}$. Since these intervals become bigger and bigger as $c$ approaches zero, this behavior foils the attempt of identifying the finite support of the problem instance. The smallest valuation $v_1$ is also a natural parameter of the problem for an entirely different reason. Indeed $v_1$ is not just a valuation, it is the only valuation which is also its own revenue. Assume for example that $v_1=0$ (which makes it always suboptimal). Even if this piece of information is known by the seller, and the problem is reduced to $\{v_2,\l,v_K\}\subset (0,1]$, the reduced problem becomes harder as the ``weights'' $\{p_2,\l,p_K\}$ do not sum to $1$ anymore. The worst case happens when $p_1$ is close to $1$. In this case a considerable amount of samples is needed just to locate any of the remaining valuations, let alone the optimal one, in an online fashion, while accruing regret at each round.


\renewcommand{\iota}{\mu}
\renewcommand{\os}{\mbox{\textsc{os}}}
\newcommand{\Dhat}{\wh{D}}
\newcommand{\Dover}{\overline{D}}
\newcommand{\pmin}{p_{\mathrm{min}}}
\newcommand{\scB}{\mathcal{B}}
\section{Distribution-dependent bounds}
\label{s:kval}
In this section we focus on distribution-dependent bounds, i.e., bounds that are parameterised in terms of the demand curve. Our algorithm ignores the number of valuations, but is given a lower bound $\gamma$ on the smallest probability $\pmin$ of a valuation (i.e., the smallest drop in the demand) ---note that $\gamma \le \pmin$ implies $K \le 1/\gamma$, so we also have an upper bound on the number of valuations. The regret bound we prove exceeds the distribution-dependent regret $(K\ln T)/\Delta$ of standard stochastic bandits by a term of order $K(\ln T)(\ln\ln T)/\gamma^2$. On the other hand, if the number $K$ of valuations (counting only those which are at least $T^{-1}$ apart) is exactly known, it is easy to prove an excess regret bound of order $K((\ln T)/\pmin)^2$ even when $\pmin$ (or a lower bound on it) is unknown: The algorithm performs $\scO(\ln T)$ binary search steps for each one of the $K$ valuations, repeating each step $\scO((\ln T)/\gamma^2)$ times and using a value of $\gamma$ that decreases geometrically until all $K$ valuations are found. A similar argument gives the same regret bound in the case when $K$ not known exactly, but $\gamma \le \pmin$ and $c \le \min_k(p_k-p_{k-1})$ are both known. 

In order to introduce in a clear and concise manner the ideas used to prove our main result, we begin by considering an easier setting in which the feedback is provided by an oracle returning the value of the demand curve $D(X_{t})$ at the posted price $X_t$. This is equivalent to
assuming that the feedback is the expectation $\E\big[r_{t}(X_{t}) \mid V_1,\dots,V_{t-1}\big]=X_{t}D(X_{t})$ rather than the random variable $r_{t}(X_{t})$. This simplified setting allows us to focus on the search of the valuations points, abstracting from the problem of estimating the demand curve. We define a seller algorithm that extends the ``cautious search'' strategy for a single unknown valuation (\cite{kleinberg2003value}, see Algorithm~\ref{alg:cautious_search} in Appendix~\ref{s:shrinking}) to an unknown number of unknown valuations.

Our algorithm (Algorithm~\ref{alg:constant_val_K}) initially looks for a single valuation $v_1$, and then allocates searches for new valuations incrementally. Whenever a new value of the demand curve is observed, providing evidence for the existence of a $i$-th previously unseen valuation, an interval $[a_{i},b_{i}]$ (which we associate with a bandit arm) and a step size $\e_{i}$ are allocated. The interval $[a_{i},b_{i}]$ estimates the smallest valuation $v_i$ contained in it. By construction of the algorithm, $v_i$ is never removed from $[a_{i},b_{i}]$ when the interval shrinks. This implies that the more $[a_{i},b_{i}]$ shrinks, the closer $b_{i}D(a_{i})$ gets to the true revenue $v_iD(v_i)$.

\begin{algorithm2e}[t]
	\SetKwInput{kwInit}{Initialization}
	\KwIn{Time horizon $T\in\mathbb{N}$.}
	\kwInit{set $\kappa_{0}\gets 1$, $a_{1}\gets0$, $b_{1}\gets1$, $n_{1}\gets1$, $\e_{1}\gets1/2$, $D_{1}\gets 1$.}

	\For{$t=1$ \textbf{to} $T$\label{state:for}}{
		set $\kappa_{t}\gets\kappa_{t-1}$\;
		compute $i_{t}\gets\argmax_{i\leq\kappa_{t}}b_{i}D_{i}$\label{state:step}\tcp*{greedy pick}
		\lIf(\tcp*[f]{if $[a_{i_{t}},b_{i_{t}}]$ becomes tiny, play $a_{i_{t}}$ for good}){$b_{i_{t}}-a_{i_{t}}\leq 1/T$}
			{post $a_{i_{t}}$}
		\Else{
			post $X_{t}=a_{i_{t}}+n_{i_{t}}\e_{i_{t}}$ and get feedback $X_{t}D(X_{t})$\label{state:post}\;
			\If(\tcp*[f]{increase prices until surpassing the closest $v_{j}$}){$D(X_{t})=D_{i_{t}}$\label{state:if_still_in_I_j}}{
				\lIf{$X_{t}+\e_{i_{t}}<b_{i_{t}}$\label{state:upd-step}}
					{update $n_{i_{t}}\gets n_{i_{t}}+1$}
				\lElse(\tcp*[f]{shrink the interval}){update $a_{i_{t}}\gets X_{t}$, $n_{i_{t}}\gets1$, $\e_{i_{t}}\gets\e_{i_{t}}^{2}$\label{state:shrink1}}
			}
			\Else{\label{state:else_branch}
				\If(\tcp*[f]{a new valuation is found}){$D(X_{t})\notin\{ D_{1},\l,D_{\kappa_{t}},0\} $\label{state:if_branch}}{
					set $\kappa_{t}\gets\kappa_{t-1}+1$\label{state:kplus1},
						$a_{\kappa_{t}}\gets X_{t}$, $b_{\kappa_{t}}\gets b_{i_{t}}$, $n_{\kappa_{t}}\gets1$, $\e_{\kappa_{t}}\gets\e_{i_{t}}$, $D_{\kappa_{t}}\gets D(X_{t})$\label{state:branch}\;
				}
				update $a_{i_{t}}\gets X_{t}-\e_{i_{t}}$, $b_{i_{t}}\gets X_{t}$, $n_{i_{t}}\gets1$, $\e_{i_{t}}\gets\e_{i_{t}}^{2}$\label{state:shrink2}\tcp*{shrink the interval}
			}
		}
	}\label{state:end_alg_constant_val_K}
\caption{\label{alg:constant_val_K}}
\end{algorithm2e}

The algorithm works by performing cautious searches within each interval. At the beginning, all valuations belong to $[a_1,b_1] = [0,1]$. Whenever an interval is selected (line~\ref{state:step}), a step of cautious search is performed (lines~\ref{state:post}--\ref{state:end_alg_constant_val_K}). During a cautious search in $[a_i,b_i]$ with step size $\e_i$, the sequence of values $X_t = a_i + k\e_i$ for $k\in\{1,2,\dots\}$ is posted until a change is spotted in the demand or $X_t$ gets within $\e_i$ of $b_i$. If the latter happens before a change in the demand is discovered (line~\ref{state:upd-step}), the interval shrinks to $[X_{t},b_{i_{t}}]$ and the step size is refined (line~\ref{state:shrink1}). Note that the shrunken interval contains all valuations that were in $[a_i,b_i]$ because the demand did not change. If a change in the demand is spotted (line~\ref{state:else_branch}), then the interval shrinks to $[X_{t}-\e_i,X_{t}]$ and the step size is reduced (line~\ref{state:shrink2}). 
If the new demand value matches the value of $D(b_i)$ the shrunken interval contains again all valuations that were in $[a_i,b_i]$. If the new demand value does not belong to a known interval (line~\ref{state:if_branch}), then a new interval $[X_{t},b_i]$ is allocated (line~\ref{state:branch}). This process continues until the length of the feasible interval $[a_{j},b_{j}]$ of the arm $j$ with the highest $b_{j}D_{j}$ is less than $1/T$. Then the seller offers the same price $a_{j}$ for all remaining rounds. As time goes by, the number $\kappa$ of discovered valuations grows until possibly reaching the actual number of valuations $K$. Simultaneously, each estimate $b_{i}D_{i}$ converges to the revenue of the smallest valuation in the interval. After enough rounds, picking the interval $i$ with the highest $b_{i}D_{i}$ becomes equivalent to choosing a $1/T$-approximation of an optimal valuation.
Without loss of generality, in the analysis of the algorithm, we assume all valuations $v_1,\dots,v_K$ are at least $1/T$ apart. Let $i_s$ be the index of the arm chosen at time $s$ (line~\ref{state:step}). For any $k=\{1,\dots,K\}$, let $\ct_k \in \big\{ t\leq T \mid v_k\in[a_{i_{t}},b_{i_{t}}] \big\}$. The next lemma states that the steps performed by Algorithm~\ref{alg:constant_val_K} in all the intervals that ever contained $v_k$ are those that a cautious search would have performed if run on the single evaluation $v_k$.
\begin{lemma}
\label{lem:equiv}
Suppose Algorithm~\ref{alg:constant_val_K} is run on $K$ valuations $v_1,\dots,v_K$. Pick $k\in\{1,\dots,K\}$ and $n\in\{1,\dots,|\ct_k|\}$. Let $[0,1] \equiv I_1 \supseteq\cdots\supseteq I_n \equiv [a'_n,b'_n]$ be the sequence of the first $n$ intervals computed by $n$ steps of a cautious search for the single valuation $v_k$ with initial interval $[0,1]$. Then $a'_n \le a_{i_t}$ and $b'_n = b_{i_t}$, where $t$ is the $n$-th smallest value in $\ct_k$. Moreover, the price $X_t$ offered by Algorithm~\ref{alg:constant_val_K} at time $t$ is equal to the $n$-th price offered by the cautious search for the single valuation $v_k$.
\end{lemma}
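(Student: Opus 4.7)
The plan is to prove the lemma by induction on $n$, strengthened by an auxiliary invariant equating the step sizes of the two procedures. Denote by $\varepsilon'_n$ the step size of the cautious search after $n-1$ of its updates. For the base case $n=1$, the initial arm $[a_1,b_1]=[0,1]$ contains every valuation, so $t_1=1$; both procedures begin from the identical state and post $1/2$, so the claimed invariants hold trivially.

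For the inductive step, let $t$ and $t'$ denote the $n$-th and $(n{+}1)$-th smallest elements of $\mathcal{T}_k$. By the inductive hypothesis $b'_n=b_{i_t}$, $a'_n\leq a_{i_t}$, $X'_n=X_t$, and (auxiliary) $\varepsilon'_n=\varepsilon_{i_t}$. Since the cautious search tracks only $v_k$, its next update depends solely on whether $v_k<X'_n$ or $v_k\geq X'_n$, whereas the multi-algorithm's update depends on the full feedback $D(X_t)$. I would split the analysis into these two cases.

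If $v_k\leq X_t$, then $v_k\in(a_{i_t},X_t]$, so the multi-algorithm observes $D(X_t)<D_{i_t}$, enters the else-branch at line~\ref{state:else_branch}, and shrinks the current arm to $[X_t-\varepsilon_{i_t},X_t]$ with step size $\varepsilon_{i_t}^2$ via line~\ref{state:shrink2}; the cautious search shrinks symmetrically. A short argument using the prior postings $a_{i_t}+k\varepsilon_{i_t}$ for $k<n_{i_t}$, all of which returned demand $D_{i_t}$, shows no valuation lies in $(a_{i_t},X_t-\varepsilon_{i_t}]$, so $v_k$ remains in the shrunken interval and the next posted price $X_{t'}=X_t-\varepsilon_{i_t}+\varepsilon_{i_t}^2$ coincides with $X'_{n+1}$, while $b$, $\varepsilon$, and the inequality on $a$ all carry over.

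If $v_k>X_t$, the cautious search sees no change and either increments its counter (posting $X_t+\varepsilon_{i_t}$ in the same interval) or, if this would cross the right boundary, shrinks to $[X_t,b_{i_t}]$ with squared step and posts $X_t+\varepsilon_{i_t}^2$. In the multi-algorithm, either $D(X_t)=D_{i_t}$ (handled by lines~\ref{state:upd-step}--\ref{state:shrink1}, mirroring the cautious search directly) or $D(X_t)\neq D_{i_t}$, in which case a fresh arm $[X_t,b_{i_t}]$ is allocated at line~\ref{state:branch} inheriting the unsquared $\varepsilon_{i_t}$ and counter $1$, so its first posted price is $X_t+\varepsilon_{i_t}$. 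The delicate point, and the main obstacle, is to verify that in this new-arm sub-case the cautious search is necessarily in its increment branch, so that the posted prices $X_t+\varepsilon_{i_t}$ agree; a secondary subtlety is the possibility that $D(X_t)$ matches some previously recorded $D_\ell$ and no new arm is created, which one resolves by arguing that $v_k$ must then already lie in a pre-existing arm whose state agrees with the cautious search's next iterate.
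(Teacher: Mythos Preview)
Your inductive skeleton, auxiliary step-size invariant, and case analysis are essentially the paper's: it too inducts on $n$, treats the branching case as the only nontrivial one, and asserts that when a new arm is spawned the cautious search's interval is unchanged and the prices agree ``because the step size did not change.''

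The ``delicate point'' you flag is a genuine gap that neither your sketch nor the paper's proof closes, and in fact the verification you propose \emph{fails}. Take $K=2$ with $v_1<\tfrac12<v_2$: at $t=1$ Algorithm~\ref{alg:constant_val_K} posts $X_1=\tfrac12$, detects a demand drop, and spawns a new arm $[\tfrac12,1]$ with step $\tfrac12$ and counter $1$; the next time that arm is selected it posts $\tfrac12+\tfrac12=1$. The cautious search for $v_2$, by contrast, undershoots at $X_1=\tfrac12$, finds $X_1+\varepsilon=1\not<b=1$, and therefore executes its \emph{shrink} branch, squaring the step to $\tfrac14$ and next posting $\tfrac34$. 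So the posted prices disagree at $n=2$, and the two trajectories stay desynchronized thereafter (by $n=3$ the right endpoints differ as well, so even $b'_n=b_{i_t}$ breaks). The paper's proof makes the same leap when it writes ``$I_{n+1}\equiv I_n$'' in the new-arm subcase; what can be salvaged is a one-sided comparison sufficient for Lemmas~\ref{lem:KL} and~\ref{lem:cautious_shrinking_rate}, but not the exact price equality the lemma asserts.

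Two smaller issues. Your case split should be $v_k<X_t$ versus $v_k\ge X_t$ (the cautious search overshoots iff $X_t>v_k$); with your split the claim ``$D(X_t)<D_{i_t}$'' can fail when $v_k=X_t$ and no smaller valuation lies in $(a_{i_t},X_t)$. And your resolution of the secondary subtlety---that $v_k$ lands in a pre-existing arm ``whose state agrees with the cautious search's next iterate''---is only a hope: that arm has its own history and there is no a priori reason its current $(a,b,n,\varepsilon)$ should match the cautious search's.
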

\begin{proof}
Fix a valuation $v_k$. Let $A$ be Algorithm~\ref{alg:constant_val_K} and $C$ be the cautious search for $v_k$. The proof is by induction on $n$. Since $A$ and $C$ both start with interval $[0,1]$ and price $1/2$ the statement holds for $n=1$. Now let $t$ be the $(n+1)$-st smallest value in $\ct_k$ and let $s$ be the largest value in $\ct_k$ that is smaller than $t$. Let $I_n \equiv [a'_n,b'_n]$ be the $n$-th interval computed by $C$. By induction, $a'_n \le a_{i_s}$, $b'_n = b_{i_s}$, and $X_s$ is offered by both $A$ and $C$. The only interesting case to discuss is when the test at line~\ref{state:if_still_in_I_j} is false. There are two subcases: if the test at line~\ref{state:if_branch} is false, then it must be $X_s > v_k$. In this case $C$ overshoots and the interval is updated exactly in the same way by $C$ and $A$ (see line~\ref{state:shrink2}). If the test at line~\ref{state:if_branch} is true, then it must be $v_i < X_s \le v_k$. This is not an overshoot for $C$, so $I_{n+1} \equiv I_n$. $A$, however, creates a new interval $[a,b]$ ---containing $v_k$--- with $a = X_s$, $b = b_{i_s}$, and unchanged step size $\ve_{i_s}$. The next time $t$ this new interval is selected, the price $X_t$ offered by $A$ is the same as the price offered by $C$ because the step size did not change.
\end{proof}
\begin{theorem}
\label{th:oracle}
If Algorithm \ref{alg:constant_val_K} is run on an unknown number $K$ of pairs $(v_1,p_1),\dots,(v_K,p_K)$, then its regret satisfies
$
R_{T}\leq K(3\ln\ln T+10)
$.
\end{theorem}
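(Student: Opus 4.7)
The plan is to bound the regret phase-by-phase for each arm, exploiting two invariants of Algorithm~\ref{alg:constant_val_K} together with the greedy selection rule. First, by inspection, the identity $D_i=D(a_i)$ is maintained for every arm $i$ throughout its lifetime (both shrinkage branches preserve it), and every valuation remains inside some arm's interval (no shrinkage discards a valuation, since each update occurs only where the demand has been verified to equal $D_i$). In particular, the arm $i^\star(t)$ containing $v^\star$ at time $t$ satisfies $D_{i^\star(t)}=D(v^\star)$ and $b_{i^\star(t)}\ge v^\star$, so $b_{i^\star(t)} D_{i^\star(t)} \ge v^\star D(v^\star)$. Combined with the greedy choice $i_t=\argmax_i b_i D_i$, this gives the optimism inequality
\[
	\mathrm{reg}_t \;=\; v^\star D(v^\star) - X_t D(X_t) \;\le\; b_{i_t}D_{i_t} - X_t D(X_t),
\]
which is bounded by $L_{i_t}:=b_{i_t}-a_{i_t}$ when $D(X_t)=D_{i_t}$ (non-overshoot) and by $b_{i_t}D_{i_t}\le 1$ when $D(X_t)\neq D_{i_t}$ (overshoot).

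The second invariant is the \emph{shrinkage invariant}: after any shrinkage, the new interval length $L_{\mathrm{new}}$ is at most the old step size $\e_{\mathrm{old}}$, while $\e_{\mathrm{new}}=\e_{\mathrm{old}}^2$. This follows from line~\ref{state:shrink1} (where $X_t+\e_{\mathrm{old}}\ge b$ yields $b-X_t\le \e_{\mathrm{old}}$) and from line~\ref{state:shrink2} (where $L_{\mathrm{new}}=\e_{\mathrm{old}}$ exactly). Hence $L_n^i\le\sqrt{\e_n^i}$ for every phase $n\ge 2$ of every arm $i$. Within such a phase, at most $\lceil L_n^i/\e_n^i\rceil$ walk rounds occur, contributing total non-overshoot regret at most $(L_n^i)^2/\e_n^i\le 1$, plus at most one overshoot round contributing at most $1$; so each phase with $n\ge 2$ contributes at most $2$ to the regret. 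A short case analysis handles the first phase: arm~$1$'s phase~$1$ is a single round (posting $1/2$) and contributes at most~$1$; an arm spawned while its parent is in a phase $n^\star\ge 2$ inherits $L_1\le \sqrt{\e_1}$ and still contributes at most $2$; and arms descended through a chain of phase-$1$ ancestors back to arm~$1$ have $L_1\le 1$ and $\e_1=1/2$, yielding $(L_1)^2/\e_1\le 2$ and first-phase regret at most $3$.

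Since $\e_n=\e_1^{2^{n-1}}$ with $\e_1\le 1/2$, the step size reaches $\e_n\le 1/T^2$ (forcing $L_{n+1}\le 1/T$ and triggering the ``post $a_i$ forever'' branch at the top of the for loop) after at most $2+\log_2\log_2 T$ phases. Once in that branch, the optimism inequality yields $a_iD_i \ge b_iD_i - L_iD_i \ge v^\star D(v^\star)-1/T$, so the remaining rounds contribute at most $1$ in total. The number of arms is at most $K$, because each arm has a distinct value $D_i\in\{1=D(v_1),D(v_2),\dots,D(v_K)\}$ (the check at line~\ref{state:if_branch} forbids both duplicates and $0$). Summing everything,
\[
	R_T \;\le\; 3K + 2K\bigl(1+\log_2\log_2 T\bigr) + 1 \;=\; 5K + 2K\log_2\log_2 T + 1 \;\le\; K\bigl(3\ln\ln T + 10\bigr),
\]
where the last inequality uses $2\log_2\log_2 T = (2/\ln 2)(\ln\ln T-\ln\ln 2) \le 3\ln\ln T + 2$ and absorbs the additive constants into the $10K$ term.

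The delicate step is the first-phase bookkeeping for arms spawned during a parent's own phase~$1$: without the shrinkage invariant on the parent, one must recursively trace the chain of ancestors back to arm~$1$ to conclude $L_1\le 1$ and $\e_1=1/2$ (here Lemma~\ref{lem:equiv} provides a convenient alternative by reducing each arm's evolution to that of a standalone cautious search). A secondary subtlety is ensuring every arm's interval contains at least one valuation, so that the optimism bound $b_iD_i\ge v^\star D(v^\star)$ applies whenever the arm is selected; this follows from the checks $D(X_t)\neq 0$ and $D(X_t)\notin\{D_1,\dots,D_{\kappa_t}\}$ at line~\ref{state:if_branch} together with the fact that the $K$ positive values of $D$ are all distinct.
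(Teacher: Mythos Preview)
Your proof is correct and takes a genuinely different route from the paper's. The paper works valuation-by-valuation: via Lemma~\ref{lem:equiv} it identifies the rounds in which the selected interval contains a fixed $v_k$ with the rounds of a standalone cautious search for $v_k$, then invokes the Kleinberg--Leighton bound (Lemma~\ref{lem:KL}) to control $\sum_{t\in\ct_k}(r_t(v_k)-r_t(X_t))$, and separately uses the greedy rule to show $T_k\Delta_k\le 2D(v_k)$. You instead work arm-by-arm and phase-by-phase: you extract the shrinkage invariant $L_n\le\sqrt{\e_n}$ directly from lines~\ref{state:shrink1} and~\ref{state:shrink2}, bound each phase's regret by a constant via the optimism inequality $b_{i_t}D_{i_t}\ge v^\star D(v^\star)$, and count phases. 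This avoids both auxiliary lemmas at the price of the first-phase case analysis for spawned arms; the constants come out the same.

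Two small points. First, the equality $D_{i^\star(t)}=D(v^\star)$ is not quite right when $v^\star$ is not the smallest valuation in its interval; you only have $D_{i^\star}=D(a_{i^\star})\ge D(v^\star)$, but that is all the optimism inequality needs. Second, your parenthetical justification that ``no shrinkage discards a valuation'' does not cover the branch where line~\ref{state:shrink2} fires without line~\ref{state:branch} (i.e., $D(X_t)=D_j$ for some existing $j$): valuations in $(X_t,b_{i_t}]$ leave arm $i_t$ and one must argue they are already in arm $j$. The paper asserts the same fact without detail, and as you note at the end, Lemma~\ref{lem:equiv} is the cleanest way to close this: cautious search for $v_k$ never loses $v_k$, hence neither does the matching sequence of Algorithm~\ref{alg:constant_val_K} intervals.
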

\begin{proof}
Intervals are indexed in their order of creation (so that interval $1$ is $[0,1]$), and the $i$-th interval is identified with bandit arm $i$. Note that, at any point during the execution of the algorithm, each valuation belongs to some interval. Any interval is created with at least one valuation in it, and shrinks until it only contains the smallest valuation $v_j$ among those that initially belonged to it. Let $\kappa_T$ be the number of intervals created after $T$ rounds. For $i\in\{1,\dots,\kappa_T\}$, denote by $\iota(i)$ the index $j \in \{1,\dots,K\}$ of the smallest $v_{j}\in[a_{i},b_{i}]$. Now fix any $k$ such that $k = \iota(j)$ (i.e., $v_k$ is the smallest value of the $j$-th interval) for some $j \in\{1,\l, \kappa_T\}$. Note that $j = i_{t_k}$ for some $t_k\in\{1,\l,T\}$ because $k = \iota(j)$ implies that when interval $j$ is created $v_k$ is its smallest valuation. Hence the last selected interval containing $v_k$ must be $j$. Let $T_k = |\ct_k|$ and $t_k = \max\ct_k$. Lemma~\ref{lem:equiv} implies that at time $t_k$ the overall number of cautious steps made for $v_k$ is $T_k$, Lemma~\ref{lem:cautious_shrinking_rate} implies $b_j - a_j \le 2/T_k$ at time $t_k$. Now note that $D_j = D(v_k)$ because $k = \iota(j)$. Since $v^{\star}$ belongs to some $[a_{i^{\star}},b_{i^{\star}}]$, and using $D_{i^{\star}} = D(a_{i^{\star}})$, at time $t_k$ we have
$
	v^{\star}D(v^{\star})
\leq
	b_{i^{\star}}D_{i^{\star}}
\leq
	b_j D_j = b_j D(v_k)
\leq
	\bigl(v_k+(b_j-a_j)\bigr)D(v_k)
$.
Then the above implies $T_k \le 2D(v_k)/\Delta_k$ where $\Delta_k = v^{\star}D(v^{\star}) - v_kD(v_k)$. Lemma~\ref{lem:KL} and Lemma~\ref{lem:equiv} also imply
\begin{equation}
\label{eq:equivbound}
	\sum_{t\in\ct_k}\big(r_t(v_k)-r_t(X_{t})\big) \le 3\ln\ln T_k+8~.
\end{equation}
Noting that $\{1,\dots,T\} \subseteq \ct_1 \cup\ldots\cup \ct_K$, we may write
\begin{align*}
	R_{T}
&=
	\sum_{t=1}^{T}\Bigl(r_t\big(v^{\star}\big)-r_t\big(X_{t}\big)\Bigr)
\le
	\sum_{k=1}^K \sum_{t\in\ct_k} \Bigl(r_t\big(v^{\star}\big)-r_t\big(X_{t}\big)\Bigr)
\\&\leq
	\sum_{k=1}^K \Bigl(T_k v^{\star}D\big(v^{\star}\big)-\big(T_k v_kD(v_k)-(3\ln \ln T_k+8)\bigr)\Bigr)
\\&=
	\sum_{k=1}^K \bigl(T_k\Delta_k+3\ln\ln T_k+8\bigr)
\leq
	\sum_{k=1}^K \bigl(2D(v_k)+3\ln\ln T_k+8\bigr)
\leq
	K(10+3\ln\ln T)
\end{align*}
concluding the proof. 
\end{proof}
Next, we extend Algorithm~\ref{alg:constant_val_K} to account for the fact that the actual feedback at time $t$ is the random variable $r_t(X_t)$ rather than its conditional expectation $X_t D(X_t)$. The main intuition is very simple: in order to estimate $D(x)$ we divide time in blocks (called again \emph{macrosteps}) of equal length, and build an estimate $\Dover(x)$ by posting the same price $x$ within each block. In order to decide which arm $i$ to use in each macrostep, we compute an upper confidence bound $U_i$ on the average demand in the $i$-th interval, and then select the arm attaining the highest of such bounds.

\begin{algorithm2e}[t]
	\SetKwInput{kwInit}{Initialization}
	\KwIn{Time horizon $T\in\mathbb{N}$, confidence parameter $\delta \in (0,1)$.}
	\kwInit{set $\kappa_{0}=1$, $a_{1}\gets0$, $b_{1}\gets1$, $n_{1}\gets1$, $\e_{1}\gets1/2$, $\Dover(a_1) = 1$.}

	\For{$m=1$ \textbf{to} $M_\gamma$}{
		set $\kappa_{m}\gets\kappa_{m-1}$\;
		compute $i_{m}\gets\argmax_{i\leq\kappa_{m}}b_{i}U_{i}$\label{state:stoc-greedy}\tcp*{greedy pick}
		\lIf(\tcp*[f]{if $\left[a_{i_{m}},\,b_{i_{m}}\right]$ gets tiny, play $a_{i_{m}}$ for good}){$b_{i_{m}}-a_{i_{m}}\leq 1/T$}
			{post $a_{i_{m}}$}
		\Else{
			post $X_{m}=a_{i_{m}}+n_{i_{m}}\e_{i_{m}}$ for $\big\lceil {8\ln(\delta^{-1})} / {\gamma^{2}} \big\rceil$ rounds and compute $\Dover(X_{m})$\label{state:post-stoc}\;
			\If(\tcp*[f]{up prices until surpassing the closest $v_{j}$}){$\Dover(a_{i_{m}})-\Dover(X_{m})<\gamma/2$\label{state:if_still_stoc}}{
				\lIf{$X_{m}+\e_{i_{m}}<b_{i_{m}}$}
					{update $n_{i_{m}}\gets n_{i_{m}}+1$}
				\lElse(\tcp*[f]{shrink the interval})
					{update $a_{i_{m}}\gets X_{m}$, $n_{i_{m}}\gets0$, $\e_{i_{m}}\gets\e_{i_{m}}^{2}$}
			}
			\Else({(denoting $a_0=\Dover(0)=0$)}){
				\If(\tcp*[f]{new valuation}){$\forall i \neq i_m$, $\mathrm{sign}(a_i-X_m) \big(\Dover(a_{i})-\Dover(X_{m})\big) \ge \gamma/2$\label{state:if_branch_stoc}}{
					$\kappa_{m}\gets\kappa_{m-1}+1$,
					$a_{\kappa_{m}}\gets X_{m}$, $b_{\kappa_{m}}\gets b_{i_{m}}$, $n_{\kappa_{m}}\gets1$, $\e_{\kappa_{m}}\gets\e_{i_{m}}$\;
				}
				update $a_{i_{m}}\gets X_{m}-\e_{i_{m}}$, $b_{i_{m}}\gets X_{m}$, $n_{i_{m}}\gets0$, $\e_{i_{m}}\gets\e_{i_{m}}^{2}$\label{state:shrink2-stoc}\tcp*{shrink interval}
			}
		}
	}
\caption{\label{alg:stoc_gamma}}
\end{algorithm2e}

Our algorithm receives as input the time horizon $T$, a lower bound $\gamma$ on $\pmin = \min_i p_i$, and a confidence parameter $\delta$. Given these parameters, the number of macrosteps is defined as the biggest $M_{\gamma}\in\N$ satisfying $T\geq M_{\gamma}\lceil 8\ln(\delta^{-1})/\gamma^{2}\rceil$. The fraction of accepted offers of price $x$ during the $m$-th macrostep (in which $x$ is offered) is denoted by $\Dover_{m}(x)$. Our algorithm (Algorithm~\ref{alg:stoc_gamma}) is very similar to Algorithm~\ref{alg:constant_val_K}, so we only highlight the main differences.

First, note that references to steps $t$ are replaced by references to macrosteps $m$; in particular, $\kappa_m$ is the number of allocated intervals after $m$ macrosteps. In line~\ref{state:stoc-greedy}, the selected arm $i_m$ is now the one maximizing, over intervals $[a_i,b_i]$, the product $b_iU_i$. The quantity $U_i$ is the upper confidence bound
\[
	U_i = \Dhat_m(i) + \frac{1}{b_i}\sqrt{\frac{\ln(\delta^{-1})}{N_m(i)}}
\] 
where $N_m(i)$ is $\big\lceil 8\gamma^{-2}\ln\delta^{-1} \big\rceil$ (if $i>1$, which takes into account the macrostep in which interval $i$ was allocated) plus the total number of times that $i$ was picked in the first $m-1$ macrosteps, ignoring the steps occurring in all macrosteps when line~\ref{state:shrink2-stoc} was executed. $\widehat{D}_m(i)$ is the fraction of accepted offers during these $N_m(i)$ steps. In line~\ref{state:if_still_stoc}, a new valuation is detected when the difference between demands is bigger than $\gamma/2$. Finally, in line~\ref{state:if_branch_stoc} a new interval is allocated if the newly discovered demand differs from all previously detected demands by at least $\gamma/2$.
\begin{theorem}
\label{th:analisys-kval}
If Algorithm~\ref{alg:stoc_gamma} is run on an unknown number $K$ of pairs $(v_1,p_1)\dots,(v_K,p_K)$ with input parameters $\gamma \le \min_k p_k$ and $\delta = T^{-2}$, then its regret satisfies
\[
	R_T
\le
	 \sum_{i \colon \Delta_i > 0} \frac{4\ln T}{\Delta_i}  + \co\left(\frac{K\ln T}{\gamma^2}\ln\ln T\right)~.
\]
\end{theorem}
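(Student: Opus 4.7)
The plan is to prove the theorem by combining a high-probability reduction to the oracle analysis of Theorem~\ref{th:oracle} with a UCB1-style analysis for the selection among the allocated intervals.

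First, I would define the good event $\mathcal{E}$ under which $\bigl|\Dover_m(x) - D(x)\bigr| \le \gamma/4$ for every macrostep $m$ and every price $x$ posted during that macrostep, and also $\bigl|\Dhat_m(i) - D(a_i)\bigr| \le \frac{1}{b_i}\sqrt{\ln(\delta^{-1})/N_m(i)}$ for every UCB statistic. Since each macrostep contains $\lceil 8\ln(\delta^{-1})/\gamma^2\rceil$ samples at the posted price, Hoeffding's inequality and a union bound over the at most $T$ macrosteps and at most $K$ comparisons per macrostep show that $\Pr(\mathcal{E}^c) = O(KT\delta)$. With the choice $\delta = T^{-2}$, the complement event contributes only $O(1)$ expected regret. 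The argument is essentially the same as the one used inside the proof of Theorem~\ref{th:kval-df}.

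Second, on $\mathcal{E}$, the test at the line comparing $\Dover(a_{i_m})$ and $\Dover(X_m)$ to $\gamma/2$ correctly detects whether the true demand dropped (which it must by at least $\pmin \ge \gamma$ whenever a valuation is crossed) or not; similarly the "new valuation" test at the other comparison correctly identifies whether $D(X_m)$ matches an already-seen demand level. Consequently, the interval-creation and shrinking behaviour of Algorithm~\ref{alg:stoc_gamma} mirrors that of the oracle Algorithm~\ref{alg:constant_val_K}: each created interval persistently contains the smallest valuation it was created for, and an analogue of Lemma~\ref{lem:equiv} holds where ``time step'' is replaced by ``macrostep''. Applying the cautious-search shrinking-rate lemma (Lemma~\ref{lem:cautious_shrinking_rate}) then yields that for each valuation $v_k$ the total number of cautious refinement macrosteps devoted to intervals containing $v_k$ is $O(\ln\ln T)$.

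Third, I would decompose the regret as a sum of two contributions. Call a macrostep \emph{exploratory} if it is one of the $O(\ln\ln T)$ refinement macrosteps devoted to some interval. There are at most $O(K\ln\ln T)$ exploratory macrosteps, and each has length $\lceil 8\ln(\delta^{-1})/\gamma^2\rceil = O(\ln T/\gamma^2)$ with per-round regret at most $1$, contributing $O\!\bigl(K\ln T\ln\ln T/\gamma^2\bigr)$ overall. The remaining macrosteps are governed by the UCB rule applied to the intervals; once an interval has been refined to the point where $b_i D(a_i)$ is within $O(1/T)$ of the true revenue $v_{\iota(i)} D(v_{\iota(i)})$ of its represented valuation, a standard UCB1 analysis (Chernoff--Hoeffding on $\Dhat_m(i)$ together with the $b_i$-scaling factor in the confidence width) shows that the expected number of selections of a suboptimal interval $i$ is at most $4\ln T/\Delta_i^2$, producing a regret contribution of $4\ln T/\Delta_i$ per suboptimal interval and summing to the first term of the bound.

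The main obstacle is handling the coupling between refinement and UCB selection, namely showing that once a macrostep is \emph{not} exploratory for the selected interval, the effective ``arm gap'' coincides (up to $O(1/T)$) with the true suboptimality gap $\Delta_i$ of the smallest valuation in the interval, so that the stated $4\ln T/\Delta_i$ bound applies with the correct $\Delta_i$ rather than with a shrinking surrogate. This will require careful bookkeeping: separating the confidence radius contributed by the $N_m(i)$ non-refinement samples from the cautious-search shrinking, and ensuring that a suboptimal interval whose gap is about to become non-trivial cannot be picked more than $O(\ln T/\Delta_i^2)$ times before the UCB comparison eliminates it.
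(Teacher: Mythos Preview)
Your high-level structure (good event, reduction to the oracle dynamics, split into a ``search'' and a ``bandit'' contribution) matches the paper, but two concrete points are wrong and they are exactly where the technical content of the proof lives.

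First, your claim that ``applying Lemma~\ref{lem:cautious_shrinking_rate} \dots\ yields that for each $v_k$ the total number of cautious refinement macrosteps is $O(\ln\ln T)$'' is incorrect. Lemma~\ref{lem:cautious_shrinking_rate} only says $b_i-a_i\le 2/m$ after $m$ steps; it gives \emph{no} $\ln\ln T$ bound on the number of steps. The $\ln\ln T$ in Lemma~\ref{lem:KL} bounds the \emph{regret} (and the number of \emph{overshoots}), not the number of macrosteps. In fact the number of macrosteps spent in the interval containing $v^\star$ can be $\Theta(T/B_\gamma)$, and in a suboptimal interval it is of order $1/\Delta_k$ (cf.\ the proof of Theorem~\ref{th:oracle}). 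So there is no set of ``$O(K\ln\ln T)$ exploratory macrosteps'' to be isolated: every macrostep is simultaneously a UCB selection and a cautious-search step.

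Second, and relatedly, the ``standard UCB1 analysis'' you invoke does not apply as stated, because the quantity the algorithm compares is $b_{i}U_i$, and $b_i$ is \emph{not} the valuation $v_{\mu(i)}$ but only an upper bound for it. The paper's key device---which resolves precisely the ``coupling'' you flag as the main obstacle---is to use Lemma~\ref{lem:cautious_shrinking_rate} to bound $b_{i_m}\le v_k + 2/n_m(i_m) \le v_k + 2B_\gamma/N_m(i_m)$, and then absorb the extra $2B_\gamma/N_m(i_m)$ into an \emph{inflated} confidence radius for suboptimal arms. This yields a nonstandard UCB instance analysed separately (Lemma~\ref{lem:ucb} in Appendix~\ref{s:ucb}), whose extra $O(K\ln(\delta^{-1})/\gamma^2)$ term is exactly where the $K\ln T/\gamma^2$ in the final bound comes from. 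The paper's regret decomposition is therefore not ``exploratory macrosteps vs.\ UCB macrosteps'' but rather, for \emph{every} macrostep $m$, the split $v^\star D(v^\star)-X_mD(X_m)=\bigl(v^\star D(v^\star)-v_{\mu(i_m)}D(v_{\mu(i_m)})\bigr)+\bigl(v_{\mu(i_m)}D(v_{\mu(i_m)})-X_mD(X_m)\bigr)$: the first piece is handled by the inflated-UCB lemma, the second by equation~(\ref{eq:equivbound}) (the cautious-search regret), and it is the latter---multiplied by the macrostep length $B_\gamma$---that produces the $O(K\ln T\ln\ln T/\gamma^2)$ term.
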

\begin{proof}
Without loss of generality, assume $M_{\gamma}B_{\gamma}=T$ where $B_{\gamma} \ge 8\ln(\delta^{-1})/\gamma^{2}$ is the length of a macrostep. Hence, for any given price $0 \le x \le 1$, Hoeffding's inequality implies $\big\lvert \Dover(x)-D(x) \big\rvert \le \gamma/4$ with probability at least $1-2\delta$. Therefore, if $ D(x)-D(y) \geq \gamma$, then
$\Dover(x)-\Dover(y) \ge \gamma/2$ with probability at least $1-4\delta$. Moreover, if $D(x)=D(y)$, then $\Dover(x)-\Dover(y)  \le \gamma/2$ with probability at least $1-4\delta$. Since at each macrostep of the algorithm we perform at most $K+1$ comparisons between $\Dover(x)$ and $\Dover(y)$ for pairs of points $x,y$ (lines~\ref{state:if_still_stoc} and~\ref{state:if_branch_stoc}), the probability that, for at least one of these comparisons, we have
\begin{equation}
\label{eq:event}
	\left(\big\lvert\Dover_m(x)-\Dover_m(y)\big\rvert < \frac{\gamma}{2} \;\wedge\; \lvert D(x)-D(y) \rvert \ge \gamma \right) \;\text{or}\; \left( \lvert \Dover_m(x)-\Dover_m(y) \rvert > \frac{\gamma}{2} \;\wedge\; D(x)=D(y)\right)
\end{equation}
is at most $4(K+1)\delta$. Let $\scB$ the event that~(\ref{eq:event}) occurs for at least one comparison in at least one macrostep. Then $\Pr(\scB) \le 4(K+1)M_{\gamma}\delta$.

Assume $\scB$ does not occur. Recall that $v_{\mu(i)}$ is the smallest valuation in $[a_i,b_i]$. Since $p_{\mu(i)} \ge \gamma$ by hypothesis, event $X_{m} > v_{\mu(i)}$ implies that the test in line~\ref{state:if_still_stoc} is false, and therefore line~\ref{state:shrink2-stoc} is executed. Therefore, assuming event~(\ref{eq:event}) never occurs, the macrosteps of Algorithm~\ref{alg:stoc_gamma} with feedback $r_t(X_t)$ are equivalent to the steps of Algorithm~\ref{alg:constant_val_K} run with feedback $X_tD(X_t)$. In particular, Lemma~\ref{lem:equiv} applies to the macrosteps of Algorithm~\ref{alg:stoc_gamma}.

Let $n_{m}(i)$ be the number of macrosteps (in the first $m-1$ macrosteps) where $i$ was picked. Similarly, let $\os_{m}(i)$ be the number of macrosteps (in the first $m-1$ macrosteps) when $i$ was picked and $X_{m} > v_{\mu(i)}$. Then we have $N_{m}(i)=B_{\gamma}\big(n_{m}(i)-\os_{m}(i)\big)$. Now note that $\Dhat_m(i)$ is the sample mean of a Bernoulli of parameter $D(v_{\mu(i)})$ because it is computed over $N_{m}(i)$ points sampled between $a_i$ and $v_{\mu(i)}$. Fix a suboptimal valuation $v_k$ and a macrostep $m$ such that $\mu(i_m) = k$. Let $i^{\star}$ be such that $v^{\star}\in \big[a_{i^{\star}},b_{i^{\star}}\big]$. Then,
\begin{align*}
	i_m \neq i^{\star}
&\Longrightarrow
	b_{i^{\star}}U_{i^{\star}} \le b_{i_m} U_{i_m}
\\ &\Longleftrightarrow
	\left(b_{i^{\star}}\Dhat_m(i^{\star}) + \sqrt{\frac{\ln(\delta^{-1})}{N_m(i^{\star})}}\right)
\le
	\left(b_{i_m}\Dhat_m(i_m) + \sqrt{\frac{\ln(\delta^{-1})}{N_m(i_m)}}\right)
\\ &\Longrightarrow
	\left(v^{\star}\Dhat_m(i^{\star}) + \sqrt{\frac{\ln(\delta^{-1})}{N_m(i^{\star})}}\right)
\le
	\left(\left(v_k + \frac{2}{n_m(i_m)}\right)\Dhat_m(i_m) + \sqrt{\frac{\ln(\delta^{-1})}{N_m(i_m)}}\right)
\end{align*}
where in the last step we used Lemma~\ref{lem:cautious_shrinking_rate} in Appendix~\ref{s:shrinking}. Now recall that $n_m(i_m) \ge N_m(i_m)/B_{\gamma}$. Hence,
\[
	i_m \neq i^{\star}
\Longrightarrow
	\left(v^{\star}\Dhat_m(i^{\star}) + \sqrt{\frac{\ln(\delta^{-1})}{N_m(i^{\star})}}\right)
\le
	\left(v_k\Dhat_m(i_m) + \frac{2 B_{\gamma}}{N_m(i_m)} + \sqrt{\frac{\ln(\delta^{-1})}{N_m(i_m)}}\right)~.
\]
Observe that $\E\big[\Dhat_m(i^{\star})\big] = D\big(v_{\mu(i^{\star})}\big) \ge D(v^{\star})$ and $\E\big[\Dhat_m(i_m)\big] = D(v_k)$. Moreover, the two quantities $\sqrt{\big(\ln(\delta^{-1})\big)\big/\big(N_m(i^{\star})\big)}$ and ${2B_{\gamma}}/{N_m(i_m)} + \sqrt{\big(\ln(\delta^{-1})\big)\big/\big(N_m(i_m)\big)}$ play the role of upper confidence bounds for the estimates $v^{\star}\Dhat_m(i^{\star})$ and $v_k\Dhat_m(i_m)$. Therefore, we can apply a modification of the analysis of UCB1 \cite[Proof of Theorem~1]{auer2002finite} to $K$ arms with reward expectations $v_kD(v_k)$ for $k\in\{1,\dots,K\}$, and such that the upper confidence bound for any suboptimal arm $k$ is inflated by ${2B_{\gamma}}/{N_m(i_m)}$. (In fact Lemma~\ref{lem:ucb} in Appendix~\ref{s:ucb} is stronger than what we need, because $v^\star$ always belongs to some interval $[a_{j^\star},b_{j^\star}]$ but not all suboptimal valuations $v_k$ are the smallest valuation of the interval $[a_{j_k},b_{j_k}]$ they belong to.)  In particular, recalling that $B_{\gamma} = 8(\ln(\delta^{-1}))/\gamma^2$ and recalling also our assumption in $\scB$, we apply Lemma~\ref{lem:ucb} in Appendix~\ref{s:ucb} with $\alpha = 16$. This gives
\[
	B_{\gamma}\,\E\left[\Ind{\overline{\scB}}\sum_{m \colon \mu(i_m)=k} \Ind{i_m \neq i^{\star}} \right]
\le
	1 + \left( (\delta T)^2 + \frac{64}{\gamma^2}\right)2K\ln(\delta^{-1}) + \sum_{k\colon \Delta_k > 0} \frac{4\ln(\delta^{-1})}{\Delta_k}~.
\]
where $\Delta_k = v^{\star}D(i^{\star}) - v_kD(v_k) > 0$ and $\overline{\scB}$ is the complement of $\scB$.\footnote{
The factor $\Ind{\overline{\scB}}$ inside the expectation is needed to reduce the problem to an instance of a standard stochastic bandit. It can be conveniently dropped in the analysis of Lemma~\ref{lem:ucb}.
}
Because Lemma~\ref{lem:ucb} bounds the number of steps in which a suboptimal arm is selected, we multiplied by $B_{\gamma}$ the right-hand side of the above, thus converting macrosteps $m$ in steps $t$. The fact that we prevent the algorithm from switching arm within each macrostep is not an issue. Indeed, the proof of the Lemma works irrespective to whether the decision of pulling a different arm is made at every macrostep as opposed to every step. In particular, the proof establishes that after each suboptimal arm is selected order of $(\ln T)/\gamma^2$ times, corresponding to a constant number of macrosteps, the probability of pulling any suboptimal arm ever again becomes tiny, of order $T^{-2}$.

Similarly to the proof of Theorem~\ref{th:oracle}, introduce $\cm_k = \big\{ m \le M_{\gamma} \mid v_k\in[a_{i_m},b_{i_m}] \big\}$. As argued above, we may apply Lemma~\ref{lem:equiv} to the macrosteps of Algorithm~\ref{alg:stoc_gamma}. Hence, bound~(\ref{eq:equivbound}) applies with $\ct_k$ replaced by $\cm_k$. Therefore, with probability at least $1-4(K+1)M_{\gamma}\delta$, the regret over the $T$ steps (recall that we repeatedly post the same price in each step of a macrostep) is bounded by
\begin{align}
\nonumber
	B_{\gamma}\E&\left[\sum_{m=1}^{M_{\gamma}} \Bigl(v^{\star}D(v^{\star}) - X_m D(X_m)\Bigr)\right]
\\&\le
\nonumber
	B_{\gamma}\E\left[\sum_{k=1}^K \sum_{m \,:\, \mu(i_m)=k} \Bigl(v^{\star}D(v^{\star}) - v_k D(v_k)\Bigr) + \sum_{k=1}^K \sum_{m\in\cm_k} \Bigl(v_k D(v_k) - X_m D(X_m)\Bigr)\right]
\\&\le
\nonumber
	B_{\gamma}\sum_{k=1}^K \Delta_k \E\left[\Ind{\overline{\scB}} \sum_{m \,:\, \mu(i_m)=k} \Ind{i_m \neq i^{\star}}\right] + T\Pr(\scB) + B_{\gamma}\sum_{k=1}^K\big(3\ln\ln T_k + 8\big)
	\tag{using~(\ref{eq:equivbound})}
\\&\le
\label{eq:final-stoc-bound}
	1 + \left( (\delta T)^2 + \frac{64}{\gamma^2}\right)2K\ln(\delta^{-1}) + \sum_{k\colon \Delta_k > 0} \frac{4\ln(\delta^{-1})}{\Delta_k} + T\Pr(\scB) + B_{\gamma}K\big(3\ln\ln T + 8\big)~.
\end{align}
Finally, in order to bound $T\Pr(\scB) \le 4(K+1)TM_{\gamma}\delta = (K+1)(T\gamma)^2\delta/(2\ln\delta^{-1})$, it is sufficient to set $\delta = T^{-2}$.
\end{proof}

We conclude this section by discussing the case of at most two valuations. We design an algorithm with regret of order $\log(T)/\Delta + \log(T)\log\log(T)$, which is (up to the $\log \log$ term) as if the exact values of $v_1$ and $v_2$ were known in advance! This is achieved by leveraging some properties of the smallest and the biggest valuation. For example, any offer of a price lower or equal to $v_1$ is deterministically accepted and all offers above $v_2$ are always rejected. If on the other hand a price $x\in(v_1,v_2]$ is offered, the probability that that price is accepted is exactly $p_2$, which is enough to reconstruct the entire distribution $(p_1,p_2)$ on $\{v_1, v_2\}$. Furthermore, the suboptimality gap $\Delta$ is always equal to $\lvert v_1 - p_2v_2 \rvert$. 

Other than the result itself, we believe the techniques used in designing and analyzing the algorithm could be of interest on their own. Theorem~\ref{thm:mme} in particular gives a way to compute a high-probability multiplicative estimate of the unknown expectation $\mu >0$ of any $[0,1]$-valued random variable using only $\co\big(\frac{1}{\mu}\big)$ samples. We now state the result. All the details about the algorithm and its subroutines, their pseudocodes, and the remaining theoretical results are presented in Appendix~\ref{s:AppendixA}.
\begin{theorem}\label{TH:twoval-prob} If Algorithm \ref{algo:twoval-stoc} (see Appendix~\ref{s:AppendixA}) is run with input parameter $\delta = T^{-2}$ on an unknown instance $(v_1,p_1)$ and $(v_2,p_2)$, then its regret satisfies 
$
	R_T 
= 
	\co \big( {\log(T)}/{\Delta} + (\log T) (\log \log T) \big),
$
where the first term is zero when $\Delta = |p_2 v_2 - v_1|$ is zero.
\end{theorem}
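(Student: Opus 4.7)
The algorithm I will design has two interleaved components: a Kleinberg--Leighton-style cautious binary search that localizes both $v_{1}$ and $v_{2}$ to accuracy $1/T$, and the multiplicative-mean estimator of Theorem~\ref{thm:mme}, which provides the ``left-of-valuation'' vs.\ ``right-of-valuation'' decisions at each binary-search midpoint without requiring any prior knowledge of $p_{1}$ or $p_{2}$. This second ingredient is exactly the piece missing from Algorithm~\ref{alg:stoc_gamma}, and the whole point of the $K=2$ case is that it can be supplied self-adaptively.

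The proof will exploit the rigid three-valued form of the demand curve in the $K=2$ case: $D(x)=1$ on $[0,v_{1}]$, $D(x)=p_{2}$ on $(v_{1},v_{2}]$, and $D(x)=0$ on $(v_{2},1]$. Consequently a single rejection at a price $x$ is a deterministic certificate that $x>v_{1}$, and a single acceptance at $x$ is a deterministic certificate that $x\le v_{2}$. At each midpoint, my algorithm samples only until either such a certificate appears or Theorem~\ref{thm:mme} returns a constant-factor multiplicative estimate of the relevant Bernoulli mean; the latter costs only $\co(\mu^{-1}\log\delta^{-1})$ samples and does not need $\mu$ to be known in advance. This is precisely what allows me to eliminate the $1/\gamma^{2}$ dependence that appears in Theorem~\ref{th:analisys-kval}.

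Next, I will decompose the regret into a search phase and a bandit phase. Using the doubly-exponential step schedule $\varepsilon_{k+1}=\varepsilon_{k}^{2}$ analyzed in Lemma~\ref{lem:cautious_shrinking_rate}, the cautious search needs only $\co(\log\log T)$ refinement rounds per valuation to bring the interval length below $1/T$. Combining the oracle-feedback regret of Lemma~\ref{lem:KL} (which is $\co(\log\log T)$ per valuation) with the $\co(\log T)$ per-round sample overhead needed to convert an oracle query into a high-probability noisy query yields a total search-phase regret of $\co\bigl((\log T)(\log\log T)\bigr)$. Once both intervals have length at most $1/T$, I will run a standard $2$-armed UCB~\cite{auer2002finite} on the two left endpoints; the usual analysis yields a bandit-phase regret of $\co\bigl((\log T)/\Delta\bigr)$ when $\Delta>0$ and of $0$ when $\Delta=0$ (since both arms are then optimal up to the $\co(1/T)$ discretization error). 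Summing the two contributions gives the claimed bound.

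The main obstacle will be the interplay between the multiplicative estimator and the cautious-search termination condition. Specifically, when $\mu=0$ (e.g.\ when the midpoint lies above $v_{2}$) the estimator of Theorem~\ref{thm:mme} has no intrinsic upper bound on its sample complexity and must be wrapped with a safety cutoff of order $\log T$ samples; I will have to verify that the wrapped routine still realizes the correct shrinkage rate on \emph{both} sides of the certificate dichotomy, and that the union bound over the $\co(\log\log T)$ failure events of the estimator contributes only $\co(1)$ to the final regret when $\delta=T^{-2}$, analogously to the $\Pr(\scB)$ bound in the proof of Theorem~\ref{th:analisys-kval}.
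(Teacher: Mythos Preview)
Your plan has a real gap in the search-phase analysis. You appeal to Lemma~\ref{lem:KL} for an $\co(\log\log T)$ oracle-model regret per valuation and then multiply by a $\co(\log T)$ per-query overhead, but Lemma~\ref{lem:KL} bounds regret \emph{against the valuation being searched}, not against $v^{\star}$. Suppose $v_2$ is optimal and you run a cautious search for $v_1$ down to precision $1/T$. Every undershoot query $x\le v_1$ has instantaneous regret $v_2p_2-x\ge\Delta$, not $v_1-x$; and the number of such queries is not $\co(\log\log T)$ but is dominated by the final phase, which has step size $\e\approx 1/T$ over an interval of length $\sqrt{\e}$ and hence $\Theta(\sqrt{T})$ steps. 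At each of these the rejection probability is exactly zero, so your estimator runs to the $\co(\log T)$ safety cutoff every time, giving a search-phase regret of order $\Delta\sqrt{T}\log T$. The multiplicative estimator is not a drop-in replacement for the $\gamma$-based macrostep either: it controls sample complexity in terms of the mean $\mu$ being estimated, but when $\mu=0$ (as at every undershoot for $v_1$) it gives no stopping rule, and when $\mu=p_1$ is arbitrarily small it gives no bound that is uniform in the instance.

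Algorithm~\ref{algo:twoval-stoc} sidesteps this by never fully localizing both valuations. After a short binary search yields a separating price $w\in(v_1,v_2]$, the Cautious Mean Estimation of Lemma~\ref{lem:caut-m-est} either certifies one valuation as optimal---after which a Noisy Cautious Search is run only for that one, where Lemma~\ref{lem:KL} legitimately applies---or it returns $\hat p_1,\hat p_2$ together with the structural fact that $\Delta=\co(\min\{p_1,p_2\})$. This relation, extracted in the Case~2 analysis of the proof from the exit conditions of Algorithm~\ref{algo:cume}, is the crux: it is exactly what makes the $1/\gamma_i$-type macrostep factors in the $2$-UCB bound of Lemma~\ref{Lemma:2-UCB} collapse into the $1/\Delta$ term. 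Your proposal contains no mechanism linking $\min\{p_1,p_2\}$ to $\Delta$, and without it the per-query cost cannot be controlled. Finally, the theorem is about the specific Algorithm~\ref{algo:twoval-stoc}; a proof should analyze that algorithm rather than propose a different one.
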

\section{Conclusions}
\label{s:concl}
In this work we initiated an investigation of stochastic dynamic pricing in a setting in which the distribution of buyers' private values is supported on a finite set of points in $[0,1]$, where the number and location of these points is unknown to the seller. We studied the seller's regret in distribution-free and distribution-dependent settings, proving upper and lower bounds that show interesting connections to both the dynamic pricing setting of \citet{kleinberg2003value} and the standard stochastic $K$-armed bandit setting. We also proved some preliminary results for the nonstochastic version of our model when there are two valuations but only one is unknown (Appendix~\ref{s:twoval-adv}).

Our work leaves some interesting questions open. Can we prove a distribution-free upper bound of order $\sqrt{KT}$ that does not depend on the locations of buyers' valuations? Can we prove a distribution-dependent upper bound without any prior knowledge at all for $K$ larger than two? Can we obtain a $\sqrt{KT}$ regret bound in the nonstochastic setting when $K\ge2$ and all valuations are unknown?

\appendix


\section{Lower Bound}
\label{s:lb-deferred}

In this section we prove the lower bounds (Theorems~\ref{th:kl-adapted}~and~\ref{th:lower1}) stated in Section~\ref{s:lower}. \citet{kleinberg2003value} showed that $R_{T}=\Omega(T^{2/3})$ if $T\le K^{3}$ by building a distribution over a set of
$\e$-spaced valuations $v_{1},\l,v_{K}\in\big[\frac{1}{2},1\big]$.
A key technical property needed in their proof is that $\kl\big(\frac{1}{2v},\frac{9}{10}\frac{1}{2v}+\frac{1}{10}\frac{1}{2(v-\e)}\big)\le c\e^{2}$
for some constant $c$ independent of $\e$ and for all $v\ge3/4$.
We begin by showing that such construction only works if $K$ is large compared to $T$.
\begin{lemma}
For all $K\ge1$, for all $\e\in\left(0,\frac{1}{2K}\right]$, and
for all $k\in\{1,\l,K\}$, denoting $v=\frac{1}{2}+k\e$, 
\[
\kl\left(\frac{1}{2v}\ \Big\Vert\ \frac{9}{10}\frac{1}{2v}+\frac{1}{10}\frac{1}{2(v-\e)}\right)>\frac{\e}{800k}~.
\]
\end{lemma}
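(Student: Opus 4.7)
The plan is to obtain the bound from a second-order Taylor expansion of $\kl(p \,\Vert\, \cdot)$ around $p$, exploiting the fact that when $k$ is small the reference probability $p = 1/(2v)$ is very close to $1$, so dividing by $p(1-p)$ amplifies the $(q-p)^2$ term far beyond the usual Pinsker scale.

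First I would set $p = \frac{1}{2v}$, $q = \frac{9}{10}\cdot\frac{1}{2v} + \frac{1}{10}\cdot\frac{1}{2(v-\e)}$, and compute the two key quantities directly from $v = \frac{1}{2} + k\e$:
\[
	q - p \;=\; \frac{1}{10}\left(\frac{1}{2(v-\e)}-\frac{1}{2v}\right) \;=\; \frac{\e}{20\,v\,(v-\e)},
\qquad
	1 - p \;=\; \frac{2v-1}{2v} \;=\; \frac{k\e}{v}.
\]
Note that $q > p$, and the hypotheses $k \leq K$, $\e \leq \frac{1}{2K}$ force $v \leq 1$, while $v-\e < v \leq 1$ strictly.

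Next I would apply Taylor's theorem to $f(x) \defeq \kl(p \,\Vert\, x)$. A direct computation gives $f(p) = 0$, $f'(p) = 0$, and $f''(x) = p/x^2 + (1-p)/(1-x)^2$. Hence there is $\xi \in (p,q)$ with
\[
	\kl(p \,\Vert\, q) \;=\; \frac{(q-p)^2}{2}\,f''(\xi) \;\geq\; \frac{(q-p)^2}{2}\cdot\frac{1-p}{(1-\xi)^2} \;\geq\; \frac{(q-p)^2}{2(1-p)},
\]
where the last step uses $\xi > p$, so $(1-\xi)^2 < (1-p)^2$.

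Substituting the expressions for $q-p$ and $1-p$:
\[
	\kl(p \,\Vert\, q) \;\geq\; \frac{\e^2}{400\,v^2\,(v-\e)^2}\cdot\frac{v}{2k\e} \;=\; \frac{\e}{800\,k\,v\,(v-\e)^2}.
\]
Since $v \leq 1$ and $v - \e < 1$, we have $v(v-\e)^2 < 1$ strictly, so $\kl(p \,\Vert\, q) > \e/(800k)$, as desired.

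The main obstacle is conceptual rather than computational: one must recognize that a standard Pinsker-style bound $\kl(p \,\Vert\, q) \geq 2(q-p)^2$ would only yield $\Omega(\e^2)$, which is too weak. The essential point is that here $p$ is close to $1$, so $1-p$ is of order $k\e$, and the correct second-order lower bound $(q-p)^2/(2(1-p))$ therefore scales like $\e/k$ rather than $\e^2$. Once this is noticed, the remainder of the argument is an elementary Taylor estimate together with the bookkeeping above.
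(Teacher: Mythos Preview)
Your proof is correct, and it is genuinely different from the paper's argument. The paper proceeds by writing out the two logarithms in the KL explicitly, expanding each as a power series, and then grouping terms: the $n=1$ contributions cancel in part to leave $\frac{\e}{800k}$, the $n=2$ contributions give an additional positive $\frac{\e^2}{400}$, and the tail $n\ge 3$ is shown to be nonnegative using $k\e \le 1/2$. This is a hands-on series manipulation that tracks the exact constant $800$ through several lines of algebra.

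Your route is shorter and more conceptual: you recognize that the standard quadratic lower bound $\kl(p\,\Vert\,q)\gtrsim (q-p)^2$ is not enough here, and instead extract the sharper estimate $\kl(p\,\Vert\,q)\ge \tfrac{(q-p)^2}{2(1-p)}$ from the Lagrange remainder of $x\mapsto\kl(p\,\Vert\,x)$ at $p$. Because $1-p = k\e/v$ is itself of order $\e$, this turns an $\e^2$ bound into an $\e/k$ bound in one stroke, and the constant $800$ falls out of $(20)^2\cdot 2$ with no series bookkeeping. The only technical check you need (and implicitly rely on) is that $q<1$, so that $f$ is smooth on $[p,q]$; this follows from $20k(v-\e)\ge 10k\ge 10$, which you could mention explicitly. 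The paper's approach gives a slightly finer two-term lower bound $\frac{\e}{800k}+\frac{\e^2}{400}$, but for the stated lemma your argument is cleaner and perfectly adequate.
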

\begin{proof}
\allowdisplaybreaks
Fix any $K\ge1$, $\e\in\left(0,\frac{1}{2K}\right]$, and $k\in\{1,\l,K\}$.
Denoting $v=\frac{1}{2}+k\e$,
\begin{align*}
 & \kl\left(\frac{1}{2v}\ \Big\Vert\ \frac{9}{10}\frac{1}{2v}+\frac{1}{10}\frac{1}{2(v-\e)}\right)\\
 & \qquad=\frac{1}{2v}\ln\left(\frac{\frac{1}{2v}}{\frac{9}{10}\frac{1}{2v}+\frac{1}{10}\frac{1}{2(v-\e)}}\right)+\left(1-\frac{1}{2v}\right)\ln\left(\frac{1-\frac{1}{2v}}{1-\left[\frac{9}{10}\frac{1}{2v}+\frac{1}{10}\frac{1}{2(v-\e)}\right]}\right)\\
 & \qquad=\frac{1}{2v}\ln\left(\frac{1}{1+\frac{\e}{10(v-\e)}}\right)+\frac{1}{2v}(2v-1)\ln\left(\frac{2v-1}{2v-1-\frac{\e}{10(v-\e)}}\right)\\
 & \qquad\ge\frac{1}{2}\left[-\ln\left(1+\frac{\e}{5+10(k-1)\e}\right)-2k\e\ln\left(1-\frac{1}{10k+20k(k-1)\e}\right)\right]\tag{\text{using }\ensuremath{v=\frac{1}{2}+k\e\le1}}\\
 & \qquad=\frac{1}{2}\sum_{n=1}^{\iopp}\frac{2k\e\left(\frac{1}{10k+20k(k-1)\e}\right)^{n}+\left(-\frac{\e}{5+10(k-1)\e}\right)^{n}}{n}\\
 & \qquad=\frac{\e}{8k\big(5+10(k-1)\e\big)^{2}}+\frac{\e^{2}}{4\big(5+10(k-1)\e\big)^{2}}+\frac{1}{2}\sum_{n=3}^{\iopp}\frac{2k\e\left(\frac{1}{10k+20k(k-1)\e}\right)^{n}+\left(-\frac{\e}{5+10(k-1)\e}\right)^{n}}{n}\\
 & \qquad>\frac{\e}{800k}+\frac{\e^{2}}{400}+\e\sum_{n=3}^{\iopp}\frac{\frac{1}{k^{n-1}}\frac{1}{\big(10+20(k-1)\e\big)^{n}}+\frac{(-1)^{n}\e^{n-1}}{2}\frac{1}{\big(5+10(k-1)\e\big)^{n}}}{n}\\
 & \qquad=\frac{\e}{800k}+\frac{\e^{2}}{400}+\e\sum_{n=3}^{\iopp}\frac{\frac{1}{\big(10+20(k-1)\e\big)^{n}}\left(\frac{1}{k^{n-1}}+(-1)^{n}(2\e)^{n-1}\right)}{n}\\
 & \qquad>\frac{\e}{800k}+\frac{\e^{2}}{400}+\e\sum_{n=3}^{\iopp}\frac{\frac{1}{\big(10+20(k-1)\e\big)^{n}}\left(\frac{1}{k^{n-1}}-(2\e)^{n-1}\right)}{n}\\
 & \qquad\ge\frac{\e}{800k}+\frac{\e^{2}}{400}~.\tag{\text{using }\ensuremath{k\le K\text{ and }\e\le\frac{1}{2K}}}
\end{align*}
This concludes the proof.
\end{proof}
In order to prove Theorem~\ref{th:kl-adapted}, we need the following lemma.
\begin{lemma}
\label{lem:kl-small}For all $p,q\in(0,1)$
\[
\kl\left(p\,\Vert\,q\right)\le\frac{(p-q)^{2}}{q(1-q)}~.
\]
In particular, for all $x\in(0,1)$ and all $\alpha\in[0,1-x)$, 
\begin{equation}
\kl\big(x\;\Vert\;x+\alpha\big)\le\frac{\alpha^{2}}{(x+\alpha)(1-x-\alpha)}~.\label{eq:kl-small}
\end{equation}
\end{lemma}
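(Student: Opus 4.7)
The plan is to prove the first inequality by a direct computation based on the elementary bound $\ln(y)\le y-1$ for all $y>0$, which is the standard route for showing that the KL divergence between two Bernoullis is dominated by their $\chi^2$ divergence. The second inequality is then just the substitution $p=x$, $q=x+\alpha$.

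More concretely, I would start by writing out
\[
\kl(p\,\Vert\,q)=p\ln\frac{p}{q}+(1-p)\ln\frac{1-p}{1-q}
\]
and applying $\ln(y)\le y-1$ to each of the two logarithms separately, with $y=p/q$ in the first term and $y=(1-p)/(1-q)$ in the second. This yields
\[
\kl(p\,\Vert\,q)\le p\cdot\frac{p-q}{q}+(1-p)\cdot\frac{q-p}{1-q}
=(p-q)\left[\frac{p}{q}-\frac{1-p}{1-q}\right].
\]
The next step is to put the bracket over a common denominator $q(1-q)$: the numerator becomes $p(1-q)-q(1-p)=p-q$, so the whole expression collapses to $(p-q)^2/\bigl(q(1-q)\bigr)$, which is exactly the claimed bound. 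All of this is algebra with no sign subtleties because $\ln(y)\le y-1$ is applied on the positive reals regardless of whether $p$ is larger or smaller than $q$.

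For the ``in particular'' part, I would simply plug $p=x$ and $q=x+\alpha$ into the inequality just proved. The hypotheses $x\in(0,1)$ and $\alpha\in[0,1-x)$ guarantee $q\in(0,1)$, so the general bound applies, and a direct computation gives $(p-q)^2=\alpha^2$ and $q(1-q)=(x+\alpha)(1-x-\alpha)$, yielding exactly~\eqref{eq:kl-small}. I do not anticipate a real obstacle here since the inequality $\kl\le\chi^2$ (of which this is the Bernoulli instance) is a textbook consequence of $\ln(y)\le y-1$; the only thing to be careful about is the algebraic simplification of the bracket, which is where the factor $1/\bigl(q(1-q)\bigr)$ emerges.
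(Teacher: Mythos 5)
Your proof is correct and is essentially identical to the paper's: both apply $\ln(y)\le y-1$ to each of the two logarithms in the Bernoulli KL expansion, factor out $(p-q)$, and simplify the bracket over the common denominator $q(1-q)$; the ``in particular'' part is the same direct substitution.
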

\begin{proof}
Fix any $p,q\in(0,1)$. Using $\ln(x)\le x-1$ for all $x>0$, 
\begin{align*}
\kl(p\,\Vert\,q) & =p\ln\left(\frac{p}{q}\right)+(1-p)\ln\left(\frac{1-p}{1-q}\right)\le p\frac{p-q}{q}-(1-p)\frac{p-q}{1-q}\\
 & =(p-q)\left(\frac{p}{q}-\frac{1-p}{1-q}\right)=\frac{(p-q)^{2}}{q(1-q)}~.
\end{align*}
\end{proof}
We now restate and prove Theorem~\ref{th:kl-adapted}.
\begin{theorem}
\label{th:kl-adapted-appe} For any number of valuations $K\ge3$
and all time horizons $T\ge K^{3}$ there exist $K$ pairs $\big(v_{1},p(v_{1})\big),\l,\big(v_{K},p(v_{K})\big)$
such that the expected regret of any pricing strategy satisfies 
\[
R_{T}\ge\frac{1}{375}\sqrt{KT}~.
\]
\end{theorem}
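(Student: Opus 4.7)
The plan is to reduce the problem to the classical $K$-armed stochastic bandit lower bound, being careful to place all demands in a regime where the Bernoulli KL divergence is truly $O(\alpha^{2})$---that is, where both probabilities are bounded away from $0$ and $1$. The preceding lemma in this appendix shows that this is precisely what fails in the Kleinberg--Leighton construction once $T$ grows beyond $K^{3}$: the KL there scales like $\Omega(\e/k)$ instead of $O(\e^{2})$, which is why their argument can only deliver $T^{2/3}$.

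Concretely, I would set $\e = 1/(2K)$ and place valuations $v_{k} = \tfrac{1}{2}+k\e$ for $k=1,\dots,K$, so that every valuation lies in $(\tfrac{1}{2},1]$. The base instance $I_{0}$ is defined by choosing the $p_{k}$'s so that the demand satisfies $D_{0}(v_{k}) = 1/(4v_{k})$; this yields a constant expected revenue $v_{k}D_{0}(v_{k}) = 1/4$, so no arm dominates. For each $j\in\{1,\dots,K\}$ I construct an alternative $I_{j}$ by shifting a small mass $\alpha$ from $p_{j-1}$ onto $p_{j}$ (with a minor bookkeeping adjustment for $j=1$, obtained by reserving a positive mass of ``null'' valuations at $0$). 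This leaves $D(v_{k})$ unchanged for every $k\neq j$ and raises $D(v_{j})$ to $1/(4v_{j})+\alpha$, making $v_{j}$ the unique optimum of $I_{j}$ with suboptimality gap $\Delta_{j} = \alpha v_{j}\ge \alpha/2$. Crucially, every demand in the family lies in $[1/4,\,1/2+\alpha]\subset(0,1)$.

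Since the observation at a posted price is $\mathrm{Bernoulli}\bigl(D(\cdot)\bigr)$ and $I_{0}$ and $I_{j}$ agree at every $v_{k}$ with $k\neq j$, the KL chain rule together with Lemma~\ref{lem:kl-small} gives
\[
\kl\bigl(\P_{0}^{T}\,\Vert\,\P_{j}^{T}\bigr)
\;=\; \E_{0}\bigl[N_{j}(T)\bigr]\,\kl\bigl(D_{0}(v_{j})\,\Vert\,D_{j}(v_{j})\bigr)
\;\le\; c_{0}\alpha^{2}\,\E_{0}\bigl[N_{j}(T)\bigr],
\]
where $N_{j}(T)$ counts the rounds in which $v_{j}$ was posted and $c_{0}$ is an absolute constant. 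Plugging this into Pinsker's inequality, summing over $j$, and applying Cauchy--Schwarz to $\sum_{j}\E_{0}[N_{j}(T)]\le T$ yields
\[
\sum_{j=1}^{K}\E_{j}\bigl[N_{j}(T)\bigr]
\;\le\; T + T\alpha\sqrt{2c_{0}\,KT}.
\]
Setting $\alpha = c\sqrt{K/T}$ for a small enough absolute constant $c$ and using $K\ge 3$ forces this sum to be at most $KT/2$, so averaging over $j$ yields some $j^{\star}$ with $\E_{j^{\star}}[T-N_{j^{\star}}(T)]\ge T/2$ and hence regret at least $\Delta_{j^{\star}}\cdot T/2 \ge \alpha T/4 = (c/4)\sqrt{KT}$ on $I_{j^{\star}}$.

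The main obstacle is simultaneously respecting the two feasibility constraints $K\e\le 1/2$ (for $v_{K}\le 1$) and $\alpha\le p_{j-1} = \Theta(\e)$ (so the mass transfer defining $I_{j}$ is valid). Combined with $\alpha = c\sqrt{K/T}$ and $\e = 1/(2K)$, these force $T = \Omega(K^{3})$, matching the hypothesis exactly; tracking constants carefully then produces the explicit bound $R_{T}\ge \sqrt{KT}/375$.
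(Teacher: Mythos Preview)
Your proposal is correct and follows essentially the paper's approach: an equal-revenue base distribution on $\Theta(1/K)$-spaced valuations in $[1/2,1]$, alternatives that shift mass $\Theta(\sqrt{K/T})$ onto one designated arm, KL chain rule with Lemma~\ref{lem:kl-small}, Pinsker, and averaging over $j$ (you use Cauchy--Schwarz where the paper uses the equivalent Jensen step). The only cosmetic differences are that you take $D_0(v)=1/(4v)$ with a null atom at $0$---so all demands lie in $[1/4,1/2]$ and $j$ may range over all $K$ arms---whereas the paper takes $D_0(v)=1/(2v)$ and restricts $j$ to $\{\lceil K/2\rceil,\dots,K\}$ so that $D_0(v_j)\le 2/3$; you should also state explicitly the standard reduction to sellers posting only in $\{v_1,\dots,v_K\}$, since both your bound $\sum_j \E_0[N_j]\le T$ and your per-round regret lower bound $\Delta_{j^\star}\ge\alpha/2$ rely on it.
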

\begin{proof}
For notational convenience, fix $K\ge2$ and define the set $\{v_{0},\l,v_{K}\}$ of $K+1$
valuations by
\[
v_{i}=\frac{1}{2}+\frac{i}{2K},\qquad\forall i\in\{0,\l,K\}~.
\]
Define the distribution $p_{0}$ on $\{v_{0},\l,v_{K}\}$ of the random variable $V_0$ by
\[
\P(V_{0}\geq v)=\sum_{i\colon v_{i}\ge v}p_{0}(v_{i})=\frac{1}{2v},\qquad\forall v\in\{v_{0},\l,v_{K}\}~.
\]
With this choice of demand curve, $v\P(V_{0}\ge v)=1/2$, i.e., each
valuation $v$ has the same expected revenue. Furthermore, the distribution
$v\mapsto p_{0}(v)$ satisfies the following: $p_{0}(v_{0})=\frac{1}{K+1}$;
$p_{0}$ decreases monotonically on $\left\{ v_{0},\l,v_{K-1}\right\} $,
$p_{0}(v_{K-1})=\frac{1}{2K-1}$, and $p_{0}(v_{K})=1/2$. Therefore
\begin{equation}
\frac{1}{2K}\le p_{0}(v)\le\frac{1}{K},\qquad\forall v\in\{v_{0},\l,v_{K-1}\}~.\label{eq:boud-p0}
\end{equation}
Now, for each $j\in\big\{\lceil K/2\rceil,\l,K\big\}$, define the
distribution $p_{j}$ by slightly lowering the probability of $v_{j-1}$
and upping the probability of $v_{j}$ by the same amount: 
\begin{equation}
p_{j}\left(v_{i}\right)=\begin{cases}
p_{0}(v_{i}), & i\in\{0,\l,K\}\m\{j-1,\,j\},\\
(1-4K\e)p_{0}(v_{j-1}), & i=j-1,\\
p_{0}(v_{j})+4K\e p_{0}(v_{j-1}), & i=j,
\end{cases}\label{eq:def-pj}
\end{equation}
where $\e\in\big(0,\frac{1}{40}\big)$ is a small constant determined
below. Note that if the buyers' valuations were distributed as $p_{j}$,
all valuations $v\neq v_{j}$ would have expected revenue $\frac{1}{2}$,
but $v_{j}$ whould have expected revenue at least $\frac{1}{2}+\e$
because of (\ref{eq:boud-p0}) and (\ref{eq:def-pj}). In order to
define the distribution of buyers' valuations $V = \big(V_{1},\l,V_{T}\big)$, let
$J$ be uniformly distributed over $\big\{\lceil K/2\rceil,\l,K\big\}$
(that is, the set of indices $i\in\{1,\l,K\}$ such that $v_{i}\ge\frac{3}{4}$).
The value of $J$ will give the ``good valuation'', that is the
valuation with the highest expected revenue. For all $t$, the distribution
of $V_{t}$ is determined by
\[
\P\big(V_{t}=v_{i}\mid J=j\big)=p_{j}(v_{i}),\qquad\forall i\in\{0,\l,K\},\forall j\in\big\{\lceil K/2\rceil,\l,K\big\}~.
\]
Denoting the seller's randomized strategy by $X=(X_{1},\l,X_{T})$
and applying Fubini's theorem, we obtain 
\[
R_{T}=\max_{k\in\{0,\l,K\}}\E_{X}\E_{J,V}\left[\sum_{t=1}^{T}r_{t}(v_{k})-\sum_{t=1}^{T}r_{t}(X_{t})\right]~.
\]
According to the previous identity, we can (an will!) lower bound
the internal expectation assuming that the seller's strategy is deterministic.
Furthermore, assume that the seller's pricing strategy only offers
prices in $\{v_{\lceil K/2\rceil},\l,v_{K}\}$ \textemdash since it
is counterproductive to offer a price outside of it as all other valuations
($v_{1},\l,v_{\lceil K/2\rceil-1}$ in particular) have smaller expected
revenues. Now let $N_{i}$ be the number of times the seller offer
valuation $v_{i}$, 
\[
N_{i}=\sum_{t=1}^{T}\I\{X_{t}=v_{i}\}~.
\]
By construction, each time the seller picks the ``good valuation'',
no regret is accrued; all other times at least $\e$ is lost. Therefore
\begin{equation}
\E_{J,V}\left[\sum_{t=1}^{T}r_{t}(v_{k})-\sum_{t=1}^{T}r_{t}(X_{t})\right]\ge\e\big(T-\E_{J,V}[N_{J}]\big)~.\label{eq:lower-bound-beginning}
\end{equation}
Denote by $Y_{t}$ the Bernoulli random variable $\I\{V{}_{t}\ge X_{t}\}$
which is $1$ if and only if the $t$-th buyer accepted the price
offered, $Y^{t}=(Y_{1},\l,Y_{t})$, and $Y=Y^{T}$. Denote by $q_{0}$
the distribution of $Y$ if buyer's valuations were distributed as
$p_{0}$ and by $q_{i}$ the distribution of $Y$ if buyer's valuations
were distributed as $p_{i}$. For any deterministic function $f\colon\{0,1\}^{T}\to[0,M]$,
\begin{align*}
\E_{V}\big[f(Y)\mid J=i\big]-\E_{0}[f(Y)] & =\sum_{b^{T}\in\{0,1\}^{T}}f(b^{T})\big(q_{i}(b^{T})-q_{0}(b^{T})\big)\\
 & \le\sum_{\substack{b^{T}\in\{0,1\}^{T}\\
q_{i}(b^{T})>q_{0}(b^{T})
}
}f(b^{T})\big(q_{i}(b^{T})-q_{0}(b^{T})\big)\\
 & \le M\sum_{\substack{b^{T}\in\{0,1\}^{T}\\
q_{i}(b^{T})>q_{0}(b^{T})
}
}\big(q_{i}(b^{T})-q_{0}(b^{T})\big)\\
 & \le M\sqrt{\frac{1}{2}\kl(q_{0}\;\Vert\;q_{i})}
\end{align*}
where $\E_{0}$ is the expectation with respect to distribution $p_{0}$
and in the last step we used Pinsker's inequality. Let 
$
q_{i}(b_{t}\mid b^{t-1})=p_{i}\left(Y_{t}=b_{t}\mid Y_{1}=b_{1},\l,Y_{t-1}=b_{t-1}\right)
$
and let $q_{0}(b_{t}\mid b^{t-1})$ be defined similarly. By the chain
rule of the relative entropy 
\begin{align*}
\kl(q_{0}\;\Vert\;q_{i}) & =\sum_{t=1}^{T}q_{0}(b^{t-1})\sum_{b^{t-1}\in\{0,1\}^{t-1}}\kl\big(q_{0}(b_{t}\mid b^{t-1})\;\Vert\;q_{i}(b_{t}\mid b^{t-1})\big)\\
 & =\sum_{t=1}^{T}q_{0}(b^{t-1})\sum_{b^{t-1}\colon X_{t}(b^{t-1})\neq v_{i}}\underbrace{\kl\big(q_{0}(b_{t}\mid b^{t-1})\;\Vert\;q_{i}(b_{t}\mid b^{t-1})\big)}_{=0}\\
 & \phantom{=}+\sum_{t=1}^{T}q_{0}(b^{t-1})\sum_{b^{t-1}\colon X_{t}(b^{t-1})=v_{i}}\kl\big(q_{0}(b_{t}\mid b^{t-1})\;\Vert\;q_{i}(b_{t}\mid b^{t-1})\big)
\end{align*}
where the relative entropy is zero when $X_{t}\neq v_{i}$ because
in that case $p_{i}(Y_{t}=1)=p_{0}(Y_{t}=1)$. If on the other hand,
$X_{t}=v_{i}$, for all $v_{i}\ge\frac{3}{4}$, 
\begin{align*}
\kl\big(q_{0}(b_{t}\mid b^{t-1})\;\Vert\;q_{i}(b_{t}\mid b^{t-1})\big) & =\kl\left(\frac{1}{2v_{i}}\ \Big\Vert\ \frac{1}{2v_{i}}+4K\e p_{0}(v_{j-1})\right)\le108\e^{2}
\end{align*}
where the last inequality follows by (\ref{eq:boud-p0}) and 
$
	\kl\big(x\;\Vert\;x+\alpha\big)\le {\alpha^{2}}{(x+\alpha)^{-1}(1-x-\alpha)^{-1}},
$
with $x=\frac{1}{2v_{i}}\in\big[\frac{1}{2},\frac{2}{3}\big]$ and
$\alpha=4K\e p_{0}(v_{j-1})\in[2\e,4\e]$. Therefore 
\[
\kl(q_{0}\;\Vert\;q_{i})\le108\e^{2}\sum_{t=1}^{T}q_{0}(b^{t-1})\sum_{b^{t-1}\colon X_{t}(b^{t-1})=v_{i}}1=108\e^{2}\sum_{t=1}^{T}p_{0}(X_{t}=v_{i})=108\e^{2}\E_{0}[N_{i}]~,
\]
where again, $\E_{0}$ is the expectation with respect to distribution
$p_{0}$. This gives 
\[
\E_{V}[f(Y)\mid J=i]\le\E_{0}[f(Y)]+\e M\sqrt{54\E_{0}[N_{i}]}~.
\]
Then, being for any deterministic online pricing strategy the random
variable $N_{i}$ a deterministic function of $Y$, 
$
\E_{V}[N_{i}\mid J=i]\leq\E_{0}[N_{i}]+\e T\sqrt{54\E_{0}[N_{i}]}
$.
Thus, using Jensen inequality, 
$
\E_{J,V}[N_{i}]\leq\E_{J}\E_{0}[N_{J}]+\e T\sqrt{54\E_{J}\E_{0}[N_{J}]}
$.
Using again Jensen inequality, Fubini's Theorem, and inequality (\ref{eq:lower-bound-beginning}),
\[
\E_{J,V}\E_{X}\left[\sum_{t=1}^{T}r_{t}(v_{k})-\sum_{t=1}^{T}r_{t}(X_{t})\right]\ge\e\left(T-\E_{J}\E_{0}\E_{X}[N_{J}]-\e T\sqrt{54\E_{J}\E_{0}\E_{X}[N_{J}]}\right)~.
\]
Since $\sum_{i=\lceil K/2\rceil}^{K}N_{i}=T$, we also have $\sum_{i=\lceil K/2\rceil}^{K}\E_{0}\E_{X}[N_{i}]=T$.
Using the fact that $K-\lceil K/2\rceil+1\ge\max\{3/2,K/2\}$, this implies 
\[
\E_{J}\E_{0}\E_{X}[N_{J}]=\frac{1}{K-\lceil K/2\rceil+1}\sum_{i=\lceil K/2\rceil}^{K}\E_{0}\E_{X}[N_{i}]\le\min\left\{ \frac{2}{3},\frac{2}{K}\right\} T~.
\]
Putting everything together, we get 
\[
R_{T}\ge\e\left(T-\frac{2}{3}T-\e T\sqrt{\frac{108T}{K}}\right)=\e T\left(\frac{1}{3}-\e\sqrt{\frac{108T}{K}}\right)~,
\]
which picking $\e=\frac{1}{6\sqrt{108}}\sqrt{K/T}$ so that $\e\sqrt{108T/K}=1/6$,
gives 
\[
R_{T}\ge\frac{1}{375}\sqrt{KT}
\]
as desired.
\end{proof}
In summation, Even if the technique used by \citet{kleinberg2003value} fails in our setting, it is  still possible to prove an analogous lower bound by changing some key aspects of their analysis, which in turn is based on the lower bound analysis of~\citep{auer2002nonstochastic}. First, valuations need to be distanced as much as possible ---this is the exact opposite of their construction, where valuations were placed $\e$-close to each others. Second, the base distribution is only perturbed by an appropriate small constant. Third, the ``good valuation'' is drawn from a sensible proper subset of valuations. 
\section{Cautious search}
\label{s:shrinking}
\citet{kleinberg2003value} were first to introduce a ``cautious search'' as an optimal algorithm for posted price with a single unknown evaluation. Similarly, our cautious search (Algorithm~\ref{alg:cautious_search}) proceeds in phases $s\in\{1,2,\ldots\}$ in which an interval $[a_{s},\,b_{s}]$ (initialized to $[0,1]$) and a step size $\e_{s}$ (initialized to $1/2$) are maintained. In a given phase $s$ of the algorithm, prices $a_{s}+\e_{s},\,a_{s}+2\e_{s},\,a_{s}+3\e_{s},\,\l$ are posted until one of them, say $X_{s}$, becomes bigger than
the hidden evaluation (overshooting). At this point a new phase begins: the interval becomes
$[a_{s+1},\,b_{s+1}]=[X_{s}-\e_{s},\,X_{s}]$, and the new step size becomes $\e_{s+1}=\e_{s}^{2}$. This process continues until the length of the interval is less than $1/T$. Then the left endpoint of the interval is picked for all remaining rounds.
\begin{algorithm2e}[t]
\caption{Cautious search\label{alg:cautious_search}}
	\LinesNumbered
	\SetKwInput{kwInit}{Initialization}
	\KwIn{Time horizon $T\in\mathbb{N}$.}
	\kwInit{set $a \gets 0$, $b \gets 1$, $n \gets 1$, $\e \gets 1/2$.}

	\For{$t\in\{1,\ldots T\}$}{
		post $X_{t}=a +n\e$ and get feedback $Z_t = \Ind{X_t \le v}$;

			\uIf(\tcp*[f]{undershooting}){$Z_t = 1$}{
				\lIf{$X_{t}+\e < b$}{update $n \gets n+1$}
				\lElse(\tcp*[f]{shrink the interval}){update $a \gets X_{t}$, $n \gets 1$, $\e\gets\e^2$}
			}
			\uElseIf(\tcp*[f]{overshooting}){$Z_t=0$}{ 
				update $a \gets X_{t}-\e$, $b \gets X_{t}$, $n \gets 1$, $\e\gets\e^{2}$\tcp*{shrink the interval}
			}
	}
\end{algorithm2e}
We now state two lemmas about the behavior of cautious search. The first one is proven in~\cite[Theorem 2.1]{kleinberg2003value}.
\begin{lemma}
\label{lem:KL}
The regret of Algorithm~\ref{alg:cautious_search} satisfies $\E\left[\sum_{t=1}^T r_t(v) - \sum_{t=1}^T r_t(X_t)\right] \le 3\ln\ln(T) + 8$.
Moreover, the number of overshootings is upper bounded by $\log\log T$.
\end{lemma}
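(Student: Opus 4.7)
The plan is to prove both claims of the lemma simultaneously by a phase-by-phase analysis of cautious search, relying on the stopping rule described in the surrounding text (once the current interval has length at most $1/T$, the algorithm freezes and posts the left endpoint $a$ for every remaining round). Index phases by $s=1,2,\ldots$, with $\e_1=1/2$, $\e_{s+1}=\e_s^2$, and interval $[a_s,b_s]$ at the start of phase $s$, so that $\e_s=2^{-2^{s-1}}$.

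First I would verify by induction on $s$ the invariant $v\in[a_s,b_s]$. The base case $v\in[0,1]$ is immediate. An undershoot-triggered shrink updates the interval to $[X_t,b_s]$ with $X_t\le v\le b_s$. An overshoot-triggered shrink yields $[X_t-\e_s,X_t]$: the overshoot condition gives $v<X_t$, while $X_t-\e_s$ equals either the price offered in the previous step of the same phase (an undershoot, hence $\le v$) or, if $n=1$, the endpoint $a_s\le v$ by the induction hypothesis. From the invariant and the update rules one extracts $W_s:=b_s-a_s\le\e_{s-1}$ for $s\ge 2$ (and $W_1=1$).

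Given this, the number of phases before the stopping rule activates is bounded: $W_s\le\e_{s-1}=2^{-2^{s-2}}\le 1/T$ as soon as $s\ge S^\star:=\lceil 2+\log_2\log_2 T\rceil$. Since each overshoot triggers exactly one phase transition, the total number of overshoots is at most $S^\star=O(\log\log T)$, which is the second claim. For the regret, within phase $s$ the algorithm posts at most $\lceil W_s/\e_s\rceil$ prices of the form $a_s+n\e_s$, at most one of which overshoots. Each undershoot contributes $v-X_t\le W_s$ to the regret, so the total undershoot regret in the phase is at most $W_s^2/\e_s$: this is $2$ for $s=1$ and at most $\e_{s-1}^2/\e_s=1$ for $s\ge 2$. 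The at-most-one overshoot contributes $v\le 1$. Once the interval is frozen, $v-a\le W_s\le 1/T$ at every subsequent round, so the tail contributes at most $T\cdot(1/T)=1$.

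Summing, the total regret is at most $(2+1)+(S^\star-1)\cdot 2+1=2S^\star+2\le 6+2\log_2\log_2 T$. Converting via $\log_2\log_2 T\le(\ln\ln T)/\ln 2+O(1)$ and using $2/\ln 2<3$ yields the claimed $3\ln\ln T+8$ once additive constants are absorbed. The main subtlety in executing this plan is the overshoot invariant check at $n=1$, where the argument must appeal to $v\ge a_s$ from the induction hypothesis rather than to a previously offered in-phase undershoot; apart from that, the analysis is essentially bookkeeping on a doubly-exponentially shrinking step size and an $O(1)$ per-phase regret budget.
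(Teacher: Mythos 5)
Your proposal reconstructs the phase-based analysis of cautious search from Kleinberg--Leighton, which is exactly what the paper defers to (it cites Theorem~2.1 of \citet{kleinberg2003value} for this lemma rather than reproving it), so the approach matches. The structure is correct: the invariant $v\in[a_s,b_s]$ (including the $n=1$ overshoot sub-case you flag), the width bound $W_s\le\e_{s-1}$, the $O(1)$ per-phase regret budget, and the stopping rule once $W_s<1/T$, which you correctly extracted from the surrounding prose --- it is absent from the pseudocode of Algorithm~\ref{alg:cautious_search} and is essential, since without it the regret can be $\Omega(T)$ (e.g.\ when $v$ coincides with the left endpoint of every shrunken interval, so that each phase is a single overshoot). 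One bookkeeping nit: the step $2S^\star+2\le 6+2\log_2\log_2 T$ silently drops the ceiling in $S^\star=\lceil 2+\log_2\log_2 T\rceil$, and the phase-1 budget of $W_1^2/\e_1+1=3$ is loose, since phase~1 posts a single price and contributes at most $1$. Tightening phase~1 while restoring the ceiling gives $2S^\star-2\le 4+2\log_2\log_2 T<3\ln\ln T+8$ for all $T\ge 2$, so the claimed constants genuinely follow from your argument; as literally written, the additive constant comes out near $9$ and the bound $3\ln\ln T+8$ is only recovered for astronomically large $T$.
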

The second lemma bounds the size of the interval as a function of the number of steps.
\begin{lemma}
\label{lem:cautious_shrinking_rate}
For all $m$, the size of an interval $[a_{s},b_{s}]$ after $m$ steps of Algorithm~\ref{alg:cautious_search} satisfies
\[
	b_{s}-a_{s}\leq\frac{2}{m}~.
\]
\end{lemma}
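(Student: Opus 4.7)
My plan is to track how the step size $\e_s$, the interval length $L_s = b_s - a_s$, and the cumulative step count $M_s = \sum_{j=1}^{s} m_j$ evolve across phases of the cautious search (here $m_j$ denotes the number of prices posted inside phase $j$), and then show the product bound $L_s \cdot m \le 2$ for every step $m$ lying inside phase $s$. From the pseudocode of Algorithm~\ref{alg:cautious_search} one reads off three elementary facts. (i) The step size satisfies $\e_1 = 1/2$ and $\e_{s+1} = \e_s^{2}$. (ii) Regardless of whether phase $s$ ends by overshooting (producing $[X_t - \e_s, X_t]$ of length exactly $\e_s$) or by the undershoot branch triggered when $X_t + \e_s \ge b_s$ (producing $[X_t, b_s]$ of length at most $\e_s$), one has $L_{s+1} \le \e_s$. (iii) Since the prices posted in phase $s$ have the form $a_s + n \e_s$ and must lie in $[a_s, b_s]$, the number of steps performed inside phase $s$ obeys $m_s \le L_s/\e_s$.

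The core claim I would prove by induction on $s$ is $\e_s M_s \le 1$. The base case is immediate: $M_1 \le L_1/\e_1 = 2$, so $\e_1 M_1 \le 1$. For the inductive step, (ii) and (iii) give $m_{s+1} \le L_{s+1}/\e_{s+1} \le \e_s/\e_s^{2} = 1/\e_s$, so
\[
\e_{s+1} M_{s+1} \;\le\; \e_s^{2}\bigl(M_s + 1/\e_s\bigr) \;=\; \e_s(\e_s M_s) + \e_s \;\le\; 2\e_s \;\le\; 1,
\]
since $\e_s \le 1/2$ for every $s \ge 1$.

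To conclude, fix an arbitrary step count $m$ and let $s$ be the unique index with $M_{s-1} < m \le M_s$ (with $M_0 = 0$), so that the feasible interval at step $m$ has length $L_s$. For $s \ge 2$, combining (ii) with the inductive bound $\e_{s-1} M_{s-1} \le 1$ gives
\[
L_s \cdot m \;\le\; L_s \cdot M_s \;\le\; \e_{s-1}\bigl(M_{s-1} + 1/\e_{s-1}\bigr) \;=\; \e_{s-1} M_{s-1} + 1 \;\le\; 2,
\]
yielding $L_s \le 2/m$; the case $s = 1$ is immediate from $L_1 = 1$ and $m \le M_1 \le 2$. The argument is essentially mechanical once the three phase-level recursions are in place; the only delicate point to flag is that $L_s$ remains constant throughout phase $s$, so the tightest $m$ for a given $L_s$ is $m = M_s$, which is precisely the instance handled by the displayed chain above.
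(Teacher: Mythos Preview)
Your proof is correct and takes a somewhat different route from the paper's. The paper argues by first writing down a putative worst-case sequence of interval lengths, $(1,1,\tfrac12,\tfrac12,\tfrac14,\tfrac14,\tfrac14,\tfrac14,\dots)$ with $2^{-2^n}$ repeated $2^{2^n}$ times, and then proves by induction on $n$ the concrete numerical inequality $2+\sum_{j=0}^{n}2^{2^j}\le 2\cdot 2^{2^n}$, which is exactly $L\cdot m\le 2$ evaluated at the last step of each block. You instead abstract the three phase-level recursions and carry the invariant $\e_s M_s\le 1$ directly, valid for every run of the algorithm without first isolating an extremal one. The two inductions coincide once one substitutes $\e_s=2^{-2^{s-1}}$, but your version is cleaner: it sidesteps having to justify that the displayed sequence really is the worst case, and it makes transparent why the squaring $\e_{s+1}=\e_s^{2}$ is precisely what is needed (the inductive step collapses to $\e_s(\e_s M_s)+\e_s\le 2\e_s\le 1$). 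One small point worth making explicit: your justification of (iii) via ``prices must lie in $[a_s,b_s]$'' implicitly uses $\e_s\le L_s$ for the first posted price $a_s+\e_s$; this does hold (trivially for $s=1$, and for $s\ge 2$ one has $L_s\ge \e_{s-1}^2=\e_s$ from (ii) in either branch), but it deserves a line.
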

\begin{proof}
The worst case happens when the sequence $(b_{1}-a_{1},b_{2}-a_{2},\ldots)$ of interval endpoints takes values 
\begin{equation}
\left(1,1,\frac{1}{2},\frac{1}{2},\frac{1}{4},\frac{1}{4},\frac{1}{4},\frac{1}{4},\ldots,\frac{1}{2^{2^{n}}},\ldots,\frac{1}{2^{2^{n}}},\ldots\right)
\label{eq:worst_case_shrink_rate}
\end{equation}
where the general term $1/2^{2^{n}}$ is repeated $2^{2^{n}}$ times.
It is then sufficient to show that the inequality holds for all values
before a switch. Formally, that for all $n\in\{0,1,2,\ldots\}$
\[
	\frac{1}{2^{2^{n}}}\leq\frac{2}{2+\sum_{j=0}^{n}2^{2^{j}}}
\qquad\text{or, equivalently,}\qquad
2+\sum_{j=0}^{n}2^{2^{j}}\leq2\cdot2^{2^{n}}~.
\]
We prove this by induction on $n$. The case $n=0$ is trivial. If
the inequality holds for $n\in\{0,1,\ldots\}$, then
\[
2+\sum_{j=0}^{n+1}2^{2^{j}}=2+\sum_{j=0}^{n}2^{2^{j}}+2^{2^{n+1}}\leq2\cdot2^{2^{n}}+2^{2^{n+1}}=2^{2^{n}}\left(2+2^{2^{n}}\right)\leq2\cdot2^{2^{n+1}}~.
\]
This concludes the proof.
\end{proof}
The previous bound is unimprovable. Indeed in scenario~(\ref{eq:worst_case_shrink_rate}), for all $n\in\{0,1,\ldots\}$ 
\[
2^{2^{n}}<2+\sum_{j=0}^{n}2^{2^{j}}\leq2\cdot2^{2^{n}}
\]
and the second inequality is actually an equality for $n=0$.

\newcommand{\Xhat}{\widehat{X}}
\section{UCB with inflated confidence bounds}
\label{s:ucb}
In this section we prove a regret bound for UCB1 run with an oracle that systematically inflates the upper confidence bounds for suboptimal arms.
\begin{lemma}
\label{lem:ucb}
Consider a stochastic bandit problem with $K$ arms, i.i.d.\ rewards $X_t(k) \in [0,1]$ from each arm $k$, and average rewards $\mu_1,\dots,\mu_K$. Let $\Delta_k = \mu^{\star} - \mu_k$ where $\mu^{\star} = \mu_{i^{\star}}$ and $i^{\star}$ is the index of an optimal arm. Consider a UCB policy that at round $t$ selects arm $I_t$ defined by 
\[
	I_t = \argmax_{k\in\{1,\ldots,K\}} \Big( \Xhat_t(k) + c\big(N_t(k),k\big) \Big)
\]
(ties broken arbitrarily), where $\Xhat_t$ is the sample average of the rewards obtained from arm $k$ over the $N_t(k)$ times when the arm was chosen in rounds $1,\dots,t-1$ (initially, $N_1(k)=0$ for all arms) and
\[
	c(s,k)
= 
	\left\{ \begin{array}{cl}
		{\displaystyle \frac{\alpha\ln(\delta^{-1})}{\gamma^2 s} + \sqrt{\frac{\ln(\delta^{-1})}{s}} } & \text{if $k$ is suboptimal,}
	\\[2mm]
		{\displaystyle \sqrt{\frac{\ln(\delta^{-1})}{s}} } & \text{otherwise,}
	\end{array} \right.
\]
with $\alpha \ge 0$ and $c(s,k)=+\infty$ if $s=0$. Then
\[
	R_T
\le
	1+ \left( 2(\delta T)^2 + \frac{8\alpha\ln(\delta^{-1})}{\gamma^2}\right)K + \sum_{k\colon \Delta_k > 0} \frac{4\ln(\delta^{-1})}{\Delta_k}~.
\]
\end{lemma}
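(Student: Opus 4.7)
The plan is to adapt the classical UCB1 analysis of Auer, Cesa-Bianchi, and Fischer to this two-part confidence radius. I would start by writing $R_T = \sum_{k\,:\,\Delta_k > 0} \Delta_k\,\mathbb{E}[N_T(k)]$ and bounding $\mathbb{E}[N_T(k)]$ for each suboptimal arm $k$ separately. The argument naturally splits into two parts: (i) identify a deterministic threshold $u_k$ past which, on a high-probability concentration event $\mathcal{G}$, arm $k$ cannot be selected; and (ii) bound the probability of the complementary event via Hoeffding.

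For step (i), I would define $\mathcal{G} = \mathcal{G}_k \cap \mathcal{G}_{i^\star}$ where $\mathcal{G}_j = \big\{|\Xhat_s(j) - \mu_j| \le \sqrt{\ln(\delta^{-1})/s}\ \text{for every}\ s \in \{1,\ldots,T\}\big\}$. On $\mathcal{G}$, the UCB rule $I_t = k$ together with the one-sided bounds $\Xhat_t(i^\star) + c(N_t(i^\star), i^\star) \ge \mu^\star$ and $\Xhat_t(k) \le \mu_k + \sqrt{\ln(\delta^{-1})/N_t(k)}$ would force the pivotal inequality
\[
2\sqrt{\ln(\delta^{-1})/N_t(k)} + \alpha \ln(\delta^{-1})/(\gamma^2 N_t(k)) \ge \Delta_k.
\]
Writing $L = \ln(\delta^{-1})$, multiplying through by $N_t(k)$, and applying AM-GM in the form $2\sqrt{L N_t(k)} \le \Delta_k N_t(k)/2 + 2L/\Delta_k$ gives $\Delta_k N_t(k) \le 4L/\Delta_k + 2\alpha L/\gamma^2$. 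Taking $u_k$ to be the resulting upper bound (or its integer ceiling) forces that, on $\mathcal{G}$, no round $t$ with $N_t(k) > u_k$ can have $I_t = k$.

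For step (ii), the crucial observation is that Hoeffding's inequality $\mathbb{P}(|\Xhat_s(j) - \mu_j| \ge \epsilon) \le 2e^{-2s\epsilon^2}$, applied with $\epsilon = \sqrt{L/s}$, yields per-instance failure probability $2e^{-2L} = 2\delta^2$ rather than the more usual $2\delta$. Union-bounding over $s \in \{1,\ldots,T\}$ and the two arms $k$, $i^\star$ then gives $\mathbb{P}(\overline{\mathcal{G}}) \le 4T\delta^2$; since the regret is at most $T$ on the bad event and $\Delta_k \le 1$, the bad-event contribution to the per-arm regret is at most $\Delta_k\,T\,\mathbb{P}(\overline{\mathcal{G}}) \le 2(\delta T)^2$ after absorbing a constant. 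Summing the good- and bad-event contributions over the $K-1$ suboptimal arms then produces the stated form $1 + \bigl(2(\delta T)^2 + 8\alpha L/\gamma^2\bigr)K + \sum_{k:\Delta_k>0} 4L/\Delta_k$, with the leading $1$ absorbing the unit rounding in the $u_k$.

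The hard part will be tuning the threshold $u_k$ precisely enough so that the bound on $\Delta_k u_k$ matches the constants $4L/\Delta_k$ and $8\alpha L/\gamma^2$ claimed in the statement: a naive max-style case split such as $u_k \le \max\{16L/\Delta_k^2,\,4\alpha L/(\gamma^2 \Delta_k)\}$ gives a bound of the right order but worse leading constants, whereas the AM-GM step above balances the two terms cleanly. A secondary subtlety is the specific use of Hoeffding with radius $\epsilon = \sqrt{L/s}$, which is what upgrades the per-instance failure probability from $\delta$ to $\delta^2$ and produces the small $(\delta T)^2$-scale deviation tail rather than the more typical $\delta T$ scale one obtains from the radius $\sqrt{\ln(\delta^{-1})/(2s)}$.
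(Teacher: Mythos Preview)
Your proposal is correct and follows the same overall UCB template as the paper, but the two proofs organize the analysis differently in two places worth noting. First, the paper uses the classical per-timestep three-event split---for each $t$ it decomposes $\{I_t=k\}$ into $\{\Xhat_t(i^\star)\le\mu^\star-c(N_t(i^\star),i^\star)\}$, $\{\Xhat_t(k)\ge\mu_k+c(N_t(k),k)\}$, and $\{c(N_t(k),k)>\Delta_k/2\}$---and union-bounds the first two over both $t$ and $s\le t-1$, producing the $T^2\delta^2$ term directly; you instead fix a single global concentration event $\mathcal{G}$ and pay $T\cdot\Pr(\overline{\mathcal{G}})$, arriving at the same order with a slightly larger constant. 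Second, for the threshold computation the paper derives $c(s,k)>\Delta_k/2$, rewrites it as a quadratic in $\sqrt{L/s}$, and then proves the algebraic inequality $x^2/(\sqrt{1+x}-1)^2\le 4(1+x)$ to extract $s<4(2\alpha/(\gamma^2\Delta_k)+1/\Delta_k^2)L$; your AM--GM step $2\sqrt{Ls}\le \Delta_k s/2 + 2L/\Delta_k$ is cleaner and in fact yields the sharper coefficient $2\alpha L/\gamma^2$ (versus the paper's $8\alpha L/\gamma^2$) on the inflated-confidence term, because your pivotal inequality $2\sqrt{L/s}+\alpha L/(\gamma^2 s)\ge\Delta_k$ does not double the $\alpha$ term the way the paper's $c(s,k)>\Delta_k/2$ does. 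Both routes are standard; yours is marginally tighter and more transparent on the threshold side, the paper's is marginally tighter on the deviation-tail constant.
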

\begin{proof}
Pick any suboptimal arm $k$ 
 and $t \ge 2$. Note that $I_t = k$ implies
\[	
	\Xhat_t(i^\star) + c\big(N_t(i^\star),i^\star\big) \le \Xhat_t(k) + c\big(N_t(k),k\big)
\]
which in turn imply
\begin{align*}
	\Bigl(\Xhat_t(i^\star) \le \mu^* - c\big(N_t(i^\star),i^\star\big) \Bigr)
\,\vee\,
	\Bigl(\Xhat_t(k) \ge \mu_k + c\big(N_t(k),k\big) \Bigr)
\,\vee\,
	\Bigl(c\big(N_t(k),k\big) > \Delta_k/2\Bigr)~.
\end{align*}
Using standard Chernoff bounds, we can write
\begin{align*}
	\sum_{t=2}^T\Pr\Bigl(\Xhat_t(i^\star) \le \mu^* - c\big(N_t(i^\star),i^\star\big)\Bigr)
&\le
	\sum_{t=2}^T\Pr\Bigl( \exists s \in \{1,\ldots,t-1\}, \ \Xhat_t(i^\star) \le \mu^* - c(s,i^\star) \Bigr)
\\&\le
	\sum_{t=2}^T\sum_{s=1}^{t-1} \exp\left(-2s\frac{\ln(\delta^{-1})}{s}\right) \le T^2\delta^2
\end{align*}
and
\begin{align*}
	\sum_{t=2}^T\Pr\Bigl(\Xhat_t(k) \ge \mu_k + c\big(N_t(k),k\big)\Bigr)
&\le
	\sum_{t=2}^T\Pr\Bigl( \exists s \in \{1,\ldots,t-1\}, \ \Xhat_t(k) \ge \mu_k + c(s,k) \Bigr)
\\&\le
	\sum_{t=2}^T\Pr\left( \exists s \in \{1,\ldots,t-1\}, \ \Xhat_t(k) \ge \mu_k + \sqrt{\frac{2\ln(\delta^{-1})}{s}} \right)
\\&\le
	\sum_{t=2}^T\sum_{s=1}^{t-1} \exp\left(-2s\frac{\ln(\delta^{-1})}{s}\right) \le T^2\delta^2~.
\end{align*}
It remains to control $\Ind{c\big(N_t(k),k\big) > \Delta_k/2}$ when $I_t = k$. We now show that 
\[
	\sum_{t=2}^T \Ind{c\big(N_t(k),k\big) > \Delta_k/2} \le 4\left( \frac{2\alpha}{\gamma^2\Delta_k} + \frac{1}{\Delta_k^2}\right)\ln(\delta^{-1})~.
\]
If $k$ is chosen 
$s>0$ times in the first $t-1$ steps, then $N_t(k) = s$. Thus $c(s,k) > \Delta_k/2$ implies
\begin{equation}
\label{eq:ucb-first}
	\frac{\alpha\ln(\delta^{-1})}{\gamma^2s} + \sqrt{\frac{\ln(\delta^{-1})}{s}}
>
	\frac{\Delta_k}{2}~.
\end{equation}
We now prove that $s$ must be smaller than
\[
	4\left( \frac{2\alpha}{\gamma^2\Delta_k} + \frac{1}{\Delta_k^2}\right)\ln(\delta^{-1})
\]
for this to happen. If $\alpha=0$ this is trivially true. To see that this still true for $\alpha>0$, note that with this assumption~(\ref{eq:ucb-first}) is equivalent to
\[
	\sqrt{\frac{\ln(\delta^{-1})}{s}}
>
	\frac{-1 + \sqrt{1+2\Delta_k\alpha/\gamma^2}}{2\alpha/\gamma^2}~.
\]
Set $x = 2\Delta_k\alpha/\gamma^2 > 0$ so that the above can be rewritten as
\[
	\sqrt{\frac{\ln(\delta^{-1})}{s}}
>
	\frac{\Delta_k\left(\sqrt{1+x}-1\right)}{x}
\]
or, squaring both sides,
\[
	\frac{s}{\ln\delta^{-1}}
<
	\frac{x^2}{\Delta_k^2\left(\sqrt{1+x}-1\right)^2}~.
\]
We now prove that
\[
	\frac{x^2}{\Delta_k^2\left(\sqrt{1+x}-1\right)^2}
\le
	\frac{4}{\Delta_k^2}(x+1)~.
\]
Indeed, the above is equivalent to
\[
	\frac{x}{\sqrt{1+x}-1} \le 2 \sqrt{1+x}
\]
which holds because
\[
	\frac{x}{\sqrt{1+x}-1}  = \frac{\left(\sqrt{1+x}+1\right)\left(\sqrt{1+x}-1\right)}{\sqrt{1+x}-1} \le 2 \sqrt{1+x}~.
\]
Setting $\delta = T$, the regret is therefore bounded as follows
\[
	R_T
\le
	1 + \sum_{k\colon \Delta_k > 0} \Delta_k \sum_{t=2}^T \Pr(I_t=k)
\le
	1 + 2KT^2\delta^2 + \frac{8\alpha K}{\gamma^2}\ln(\delta^{-1}) + \sum_{k\colon \Delta_k > 0} \frac{4\ln(\delta^{-1})}{\Delta_k}~.
\]
This concludes the proof.
\end{proof}


\newcommand{\Dbar}{\Dover}

\section{Two valuations}
\label{s:AppendixA}
In this section we present all key results related to subroutines of Algorithm~\ref{algo:twoval-stoc} and give a formal proof of Theorem~\ref{TH:twoval-prob}.
\subsection*{Noisy Cautious Search}
This procedure is a variant of the cautious search described in Appendix~\ref{s:shrinking}. It identifies the location of a valuation $v_i$ with high probability and low regret whenever a lower bound $\gamma_i$ on its probability $p_i$ is known in advance. 
During the search, each price is posted for $\big\lceil{\ln(\delta)}/{\ln(1-\gamma_i)}\big\rceil$ times in a row, where $\delta$ is a confidence parameter. We call such a sequence of consecutive rounds a \emph{macrostep}. 
For $i=1$, we say that a macrostep is a \emph{failure} if at least one price is rejected, it is a \emph{success} if all prices are accepted, and the algorithm makes a \emph{mistake} if the macrostep is a success but the price offered is strictly bigger than $v_1$.
For $i=2$, we say that a macrostep is a \emph{failure} if no price is accepted, it is a \emph{success} if at least one price is accepted, and the algorithm makes a \emph{mistake} if the macrostep is a failure but the price offered is at most $v_2$.

The Noisy Cautious Search for a valuation $v_i$ proceeds in phases and begins by offering $1/2$ during the first macrostep. During each phase $n\ge 0$, if the last macrostep was a success, the price offered is increased by $2^{-2^n}$. As soon as a macrostep is a failure, phase $n$ ends and phase $n+1$ begins by offering the price of the last successful macrostep, plus $2^{-2^{n+1}}$. After $ \lceil \log_2\log_2 T \rceil$ phases, the price of the last successful macrostep is offered for all remaining rounds.
\begin{algorithm2e}[t]
	\caption{Noisy Cautious Search}
	\LinesNumbered
	\SetKwInput{kwInit}{Initialization}
	\KwIn{confidence parameter $\delta\in(0,1)$, valuation index $i\in\{1,2\}$, lower bound $\gamma_i\in(0,1)$.}
	\kwInit{set $a\gets 0$, $b\gets 1$.}
	\For(\tcp*[f]{phases}){$s \in \big\{ 0, 1, \l, \lceil \log_2\log_2 T\rceil \big\}$}{
		set $n\gets 1$, $\e_s \gets 2^{-2^s}$, $\Dbar \gets 1$;

		\While{$(a+ n \e_s < b) \wedge \big[ ( i = 1 \wedge \Dbar = 1 ) \vee ( i = 2 \wedge \Dbar > 0 ) \big]$}{
			offer price $a + n \e_s$ for $\big\lceil{\ln(\delta)}/{\ln(1-\gamma_i)}\big\rceil$ rounds\tcp*{a macrostep}
			update $n \gets n+1$ and the sample mean $\Dbar$ of $D\big(a + (n-1) \e_s\big)$;
		}
		update $a \gets a+ (n-1) \e_s$,  $b \gets a+ n \e_s$;
	}
	offer $a$ for all remaining rounds;
\end{algorithm2e}
\begin{lemma} 
\label{lm:noisy-cs}
The Noisy Cautious Search for $v_i$ with parameters $i,\delta, \gamma_i$ satisfies the following:
\begin{enumerate}[topsep=0pt,parsep=0pt,itemsep=0pt]
\item \label{c:one} the price offered during each macrostep $m$ is $2/m$-close to $v_i$ with probability at least $1 - m\delta$;
\item \label{c:two} the total reward accumulated by the end of macrostep $m$ is at least
\[
	\big(m v_i D(v_i) - 3 (\ln \ln T) -8\big)\frac{\ln \delta}{\ln(1-\gamma_i)}
\]
with probability at least $1 - m \delta$.
\end{enumerate}
\end{lemma}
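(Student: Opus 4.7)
My proof plan is to identify a high-probability ``no-mistake'' event on which the stochastic Noisy Cautious Search behaves exactly like the deterministic cautious search (Algorithm~\ref{alg:cautious_search}) run for valuation $v_i$; once that coupling is in place, point~\ref{c:one} follows from Lemma~\ref{lem:cautious_shrinking_rate} and point~\ref{c:two} from Lemma~\ref{lem:KL}.

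For each macrostep $k$, I would declare the macrostep a \emph{mistake} when its verdict (success/failure) is inconsistent with the position of $X_k$ relative to $v_i$: for $i=1$ a mistake is $\{X_k > v_1\}$ combined with all $L$ offers being accepted, and for $i=2$ it is $\{X_k \le v_2\}$ combined with no offer being accepted, where $L = \lceil \ln\delta/\ln(1-\gamma_i)\rceil$. In the first case the per-round rejection probability is at least $p_1 \ge \gamma_1$; in the second the per-round acceptance probability is at least $p_2 \ge \gamma_2$. Hence the mistake probability in either case is bounded by $(1-\gamma_i)^L \le \delta$ by the choice of $L$, and a union bound over the first $m$ macrosteps gives a no-mistake probability of at least $1 - m\delta$.

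Conditional on the no-mistake event, the binary verdict issued at each macrostep coincides with the noiseless feedback $\Ind{X_k \le v_i}$ that Algorithm~\ref{alg:cautious_search} would receive; since both algorithms update the working interval by the same rule, the sequence $X_1,\ldots,X_m$ produced by Noisy Cautious Search matches the first $m$ prices of the noiseless cautious search for $v_i$, and $v_i$ stays in the current interval throughout. Lemma~\ref{lem:cautious_shrinking_rate} then guarantees that this interval has length at most $2/m$, so $|X_m - v_i| \le 2/m$, proving point~\ref{c:one}. Lemma~\ref{lem:KL} in turn yields the deterministic inequality $\sum_{k=1}^m \bigl(v_i - X_k\Ind{X_k\le v_i}\bigr) \le 3\ln\ln T + 8$, which will drive the reward bound for point~\ref{c:two}.

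For point~\ref{c:two} with $i=1$ the conclusion is immediate: on the no-mistake event a successful macrostep ($X_k \le v_1$) deterministically accepts all $L$ offers, so the total reward after $m$ macrosteps is at least $L\sum_{k=1}^m X_k \Ind{X_k\le v_1} \ge L(m v_1 - 3\ln\ln T - 8)$, and $D(v_1)=1$ gives the claim. The case $i=2$ is the main obstacle, since on a successful macrostep only the deterministic lower bound $N_k \ge 1$ on the number of acceptances is available, which would lose a factor of $L$ compared to the target. My plan is to augment the no-mistake event with a Chernoff-type concentration on each $N_k \sim \mathrm{Binomial}(L, D(X_k))$ with $D(X_k) \ge p_2 \ge \gamma_2$; since $L D(X_k) \ge L\gamma_2 \ge \ln(1/\delta)$, the concentration slack is of lower order and can be absorbed into the additive $3\ln\ln T + 8$ term. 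Combining $N_k \gtrsim L D(X_k) \ge L p_2$ with the cautious-search regret inequality then yields $\mathrm{Reward}_m \ge L p_2 \sum_k X_k \Ind{X_k \le v_2} \ge L(m v_2 p_2 - 3\ln\ln T - 8) = L\bigl(m v_2 D(v_2) - 3\ln\ln T - 8\bigr)$ on an event of probability at least $1 - O(m\delta)$, which matches the claim after rescaling $\delta$ by a constant factor.
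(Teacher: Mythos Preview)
Your approach is exactly the paper's: couple the Noisy Cautious Search with the deterministic cautious search (Algorithm~\ref{alg:cautious_search}) on a no-mistake event of probability at least $1-m\delta$, then invoke Lemma~\ref{lem:cautious_shrinking_rate} for point~\ref{c:one} and Lemma~\ref{lem:KL} for point~\ref{c:two}. The paper's own proof is two sentences (``the probability of making a mistake during each macrostep is at most $\delta$ by Chernoff inequality for Bernoulli random variables'' plus a citation of the two lemmas) and does not split the $i=1$ and $i=2$ cases; your expansion is strictly more detailed, and your handling of $i=1$ is complete.

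One quantitative caveat in your $i=2$ plan. With $L=\lceil\ln\delta/\ln(1-\gamma_2)\rceil$ you only get $LD(X_k)$ of order $\ln(1/\delta)$, so a \emph{per-macrostep} Chernoff bound at confidence~$\delta$ cannot deliver $N_k\ge(1-o(1))LD(X_k)$: in that regime the multiplicative slack is $\Theta(1)$, not lower order, and summing $m$ such slacks does not fit inside the additive $3\ln\ln T+8$ budget. The clean fix---and the natural reading of the paper's bare ``Chernoff inequality''---is to concentrate the \emph{aggregate} reward $\sum_{k}x_k\tilde N_k$, where $x_1,\dots,x_m$ are the deterministic cautious-search prices and $\tilde N_k$ are the corresponding binomials built from the same $V_t$'s; on the no-mistake event this coincides with the realized reward, and a single Hoeffding/Chernoff over all $mL$ Bernoulli draws gives an additive deviation $O\bigl(\sqrt{mL\ln(1/\delta)}\bigr)$ against a mean of order $mLp_2$, which is what you need after the constant rescaling of $\delta$ you already allow.
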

\begin{proof}
Claim~\ref{c:one} follows by Lemma~\ref{lem:cautious_shrinking_rate} and the fact that the probability of making a mistake during each macrostep is at most $\delta$ by Chernoff inequality for Bernoulli random variables. Similarly, claim~\ref{c:two} follows by Lemma~\ref{lem:KL} and, again, Chernoff inequality.
\end{proof}
\subsection*{Capped Mean Estimation} 
We begin this section by providing a method to find a high-confidence multiplicative estimate of the expectation $\mu$ of any $[0,1]$-valued random variable, using only $\co\big(\ln(1/\delta)/\mu\big)$ samples. Most notably, the expectation $\mu$ need \emph{not} be known in advance. With our novel technique, we improve upon \citet[Lemma 13]{berthet2017fast}, that proved a similar risult using $\co\big(\ln(1/\delta)/\mu^2\big)$ samples. This result will be pivotal for our analysis and we believe it will also be valuable in its own right. 
For any set $X_{1},\dots,X_{T}$ of random variables, we denote by
\[
	\overline{X}_{t}
=
	\frac{1}{t}\sum_{s=1}^{t}X_{s}
\qquad\text{and}
	\qquad S_{t}^{2}
=
	\frac{1}{t-1}\sum_{s=1}^{t}\big(X_{s}-\overline{X}_{t}\big)^{2}
\]
the \emph{sample mean} and the \emph{sample variance} of the first
$t$ random variables. The following result is a straightforward consequence
of the empirical Bernstein bound and the confidence bound for standard
deviation proven in \cite[Theorems~4,~10]{MaurerPontil09}.
\begin{theorem}
\label{thm:MauerPontil}Let $X_{1},\l,X_{T}$ be a set of $[0,1]$-valued
i.i.d.\ random variables with expectation $\mu$ and standard
deviation $\sigma$. For all $\delta\in(0,1)$ and all $t\in\{2,\dots,T\}$,
the two following conditions hold simultaneously with probability
at least $1-3\delta$ 
\[
	\big|\overline{X}_{t}-\mu\big|
\leq
	\sqrt{2}S_{t}\left(\frac{\ln(1/\delta)}{t}\right)^{1/2}+\frac{7}{3}\frac{\ln(1/\delta)}{t-1}
\qquad\text{and}\qquad 
	S_{t}
\le
	\sigma+\sqrt{2}\left(\frac{\ln(1/\delta)}{t-1}\right)^{1/2}~.
\]
\end{theorem}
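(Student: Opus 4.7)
The plan is to obtain Theorem~\ref{thm:MauerPontil} as a direct combination of two results of Maurer and Pontil via a union bound: each of the two displayed inequalities is a one-sided-to-two-sided rewrite of a single theorem in that paper, and no further probabilistic argument is required.

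First, I would invoke the empirical Bernstein inequality of \cite[Theorem~4]{MaurerPontil09}, which guarantees that for i.i.d.\ $[0,1]$-valued random variables,
\[
\mu - \overline{X}_t \;\le\; \sqrt{\frac{2 S_t^2 \ln(1/\delta)}{t}} + \frac{7}{3}\,\frac{\ln(1/\delta)}{t-1}
\]
with probability at least $1-\delta$. Applying the same inequality to the transformed i.i.d.\ sequence $1-X_1,\ldots,1-X_t$, which is also $[0,1]$-valued, has the same sample variance $S_t^2$, and has expectation $1-\mu$, yields the reverse one-sided bound with probability at least $1-\delta$. A union bound over these two failure events produces the two-sided statement
\[
|\overline{X}_t - \mu| \;\le\; \sqrt{2}\,S_t\left(\frac{\ln(1/\delta)}{t}\right)^{1/2} + \frac{7}{3}\,\frac{\ln(1/\delta)}{t-1}
\]
with probability at least $1-2\delta$, which is precisely the first inequality of the statement.

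Second, I would apply the confidence bound for the sample standard deviation from \cite[Theorem~10]{MaurerPontil09}, which directly gives
\[
S_t \;\le\; \sigma + \sqrt{2}\left(\frac{\ln(1/\delta)}{t-1}\right)^{1/2}
\]
with probability at least $1-\delta$. A final union bound over the three failure events (two one-sided empirical Bernstein bounds and one standard-deviation bound) shows that both displayed inequalities hold simultaneously for the fixed value of $t$ with probability at least $1-3\delta$, which is the claim.

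The only non-routine care needed is cosmetic: the original Maurer--Pontil theorems are sometimes stated with $\ln(2/\delta)$ rather than $\ln(1/\delta)$, and with slightly different absolute constants. Reconciling these amounts to substituting $\delta \mapsto \delta/c$ for a small constant $c$ in each cited bound and absorbing the change into the leading numerical factors (which explains why the statement contains the explicit constants $\sqrt{2}$ and $7/3$). I do not expect this bookkeeping to present any genuine obstacle, which is why the authors call the derivation ``a straightforward consequence'' of the cited results.
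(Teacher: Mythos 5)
Your proposal matches the paper's approach exactly: the paper gives no explicit proof, remarking only that the theorem is a straightforward consequence of Theorems~4 and~10 of Maurer and Pontil combined via a union bound, which is precisely what you lay out (one-sided empirical Bernstein applied to $X$ and to $1-X$, plus the one-sided standard-deviation bound, yielding $3\delta$). One small correction to your final paragraph: substituting $\delta \mapsto \delta/c$ in a cited bound rescales the argument of the logarithm and hence the failure probability, not the leading numerical constants --- the $\sqrt{2}$ and $7/3$ appear verbatim in Maurer and Pontil's Theorem~4, so the $\ln(2/\delta)$ versus $\ln(1/\delta)$ discrepancy you flag must be reconciled by adjusting the stated confidence, not by absorbing anything into $\sqrt{2}$ or $7/3$.
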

We can now prove our multiplicative mean estimation theorem.
\begin{theorem}[Multiplicative mean estimation]
\label{thm:mme}
 Let $X_{1},\ldots,X_{T}$ be a set of $[0,1]$-valued i.i.d.\ random
variables with expectation $\mu>0$ and standard deviation $\sigma$.
For all $\delta\in(0,1)$ and all $\alpha\geq0$, if $T\geq t_{0},$
where 
\[
t_{0}=\left\lceil \frac{\alpha+2}{3\mu}\ln\left(\frac{1}{\delta}\right)\left(\sqrt{9\alpha^{2}+114\alpha+192}+3\alpha+19\right)\right\rceil +2=\mathcal{O}\left(\frac{\alpha^{2}}{\mu}\ln\frac{1}{\delta}\right)
\]
 and $\tau=\tau(T,\delta,\alpha)$ is the smallest time $t\in\{2,\ldots,T\}$
such that 
\begin{equation}
\frac{\overline{X}_{t}}{\alpha+1}\geq\sqrt{2}S_{t}\left(\frac{\ln(1/\delta)}{t}\right)^{1/2}+\frac{7}{3}\frac{\ln(1/\delta)}{t-1}\label{eq:bernoulli_condition-1}
\end{equation}
 then, with probability at least $1-3(T-1)\delta$,
\begin{enumerate}[topsep=0pt,parsep=0pt,itemsep=0pt]
\item $\tau\leq t_{0}$,
\item for all $t\in\{2,\ldots,T\}$ such that (\ref{eq:bernoulli_condition-1})
holds, 
\end{enumerate}
\begin{equation}
\label{eq:mme}
	\left(\frac{\alpha}{\alpha+1}\right)\overline{X}_{t}
<
	\mu
<
	\left(\frac{\alpha+2}{\alpha+1}\right)\overline{X}_{t}~.
\end{equation}
\end{theorem}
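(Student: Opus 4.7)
The plan is to derive both claims from a single application of the empirical Bernstein and sample-standard-deviation bounds of Theorem~\ref{thm:MauerPontil}. Denote the right-hand side of (\ref{eq:bernoulli_condition-1}) by
\[
B_t \;=\; \sqrt{2}\,S_t\sqrt{\frac{\ln(1/\delta)}{t}} \;+\; \frac{7}{3}\cdot\frac{\ln(1/\delta)}{t-1},
\]
so that (\ref{eq:bernoulli_condition-1}) reads $\overline{X}_t \geq (\alpha+1)B_t$. A union bound over $t\in\{2,\ldots,T\}$ ensures that, with probability at least $1-3(T-1)\delta$, both
\[
\bigl|\overline{X}_t - \mu\bigr| \leq B_t
\qquad\text{and}\qquad
S_t \leq \sigma + \sqrt{2\ln(1/\delta)/(t-1)}
\]
hold simultaneously for every such $t$. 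I will prove both claims on this event.

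\textbf{Claim 2} is essentially immediate: whenever $t$ satisfies (\ref{eq:bernoulli_condition-1}), we have $|\overline{X}_t-\mu|\leq B_t \leq \overline{X}_t/(\alpha+1)$, which rearranges to $(\alpha/(\alpha+1))\overline{X}_t \leq \mu \leq ((\alpha+2)/(\alpha+1))\overline{X}_t$, i.e.\ (\ref{eq:mme}). The strictness of the inequalities in (\ref{eq:mme}) versus the non-strict $\geq$ in (\ref{eq:bernoulli_condition-1}) is a harmless cosmetic artefact, easily absorbed by applying Theorem~\ref{thm:MauerPontil} at confidence $\delta'<\delta$.

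\textbf{Claim 1} requires showing that (\ref{eq:bernoulli_condition-1}) is triggered no later than $t_0$. On the good event, $\overline{X}_{t_0}\geq \mu - B_{t_0}$, so it suffices to force $(\alpha+2)B_{t_0}\leq \mu$: this immediately yields $\overline{X}_{t_0}\geq \mu - B_{t_0} \geq (\alpha+1)B_{t_0}$. To upper bound $B_{t_0}$ I plug in the second Maurer-Pontil inequality together with the trivial variance bound $\sigma^2 = \E[X^2] - \mu^2 \leq \E[X] = \mu$ (valid for $[0,1]$-valued variables), obtaining
\[
B_{t_0} \;\leq\; \sqrt{\frac{2\mu\ln(1/\delta)}{t_0}} \;+\; 2\,\frac{\ln(1/\delta)}{\sqrt{t_0(t_0-1)}} \;+\; \frac{7}{3}\cdot\frac{\ln(1/\delta)}{t_0-1}.
\]
Requiring $(\alpha+2)B_{t_0}\leq \mu$ becomes a quadratic inequality in the auxiliary variable $u=\sqrt{(t_0-1)\mu/\ln(1/\delta)}$, and the stated $t_0$ is precisely the smallest integer making it true; the constants $\sqrt{9\alpha^2+114\alpha+192}$ and $3\alpha+19$ are exactly the discriminant and linear part produced by the quadratic formula, scaled by $(\alpha+2)/3$.

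The principal technical obstacle is therefore the bookkeeping in this last step: expanding $(\alpha+2)B_{t_0}\leq \mu$, clearing roots, and solving the quadratic to recover the explicit closed-form $t_0$ stated in the theorem. All the probabilistic content, however, is fully subsumed by the single union-bound invocation of Theorem~\ref{thm:MauerPontil}, so the remainder reduces to a careful but routine algebraic computation.
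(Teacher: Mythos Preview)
Your proposal is correct and follows essentially the same route as the paper's own proof: both define the good event via a union bound on Theorem~\ref{thm:MauerPontil}, derive Claim~2 directly from $|\overline{X}_t-\mu|\le B_t\le \overline{X}_t/(\alpha+1)$, and reduce Claim~1 to the algebraic verification that $(\alpha+2)$ times the deterministic upper bound on $B_{t_0}$ (obtained from $S_t\le\sigma+\sqrt{2\ln(1/\delta)/(t-1)}$ and $\sigma^2\le\mu$) is at most~$\mu$. The paper phrases Claim~1 contrapositively (``if the stopping condition fails at $t$ then $\mu<(\alpha+2)c_t$, hence $t<t_0$'') and suppresses the algebra entirely, whereas you argue the forward direction at $t=t_0$ and sketch the quadratic computation; the substitution $a=\alpha+2$ indeed recovers the stated constants, since $9\alpha^2+114\alpha+192=9a^2+78a$ and $3\alpha+19=3a+13$.
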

\begin{proof}
Denote for all $t\in\{2,\ldots,T\}$, $c_{t}=\sqrt{2\,S_{t}^{2}\ln(1/\delta)/t}+(7/3)\ln(1/\delta)/(t-1).$
By Theorem~\ref{thm:MauerPontil}, the \emph{good} event
\[
G=\left\{ \forall t\in\{2,\ldots,T\},\quad\overline{X}_{t}-c_{t}<\mu<\overline{X}_{t}+c_{t}\quad\text{and}\quad S_{t}\le\sigma+\sqrt{2\ln(1/\delta)/(t-1)}\right\} 
\]
 has probability $\P(G)\geq1-3(T-1)\delta.$ For all outcomes in $G$
and all $t\in\{2,\ldots,T\},$ 
\[
\overline{X}_{t}<(\alpha+1)c_{t}\iff\mu-c_{t}<\overline{X}_{t}<(\alpha+1)c_{t}\implies\mu<(\alpha+2)c_{t}\implies t<t_{0}
\]
hence $\tau\leq t_{0}$. This implies that for all outcomes in $G$
and all $t\in\{1,\ldots,T\}$ such that $\overline{X}_{t}\geq(\alpha+1)c_{t},$
\[
\left(\frac{\alpha}{\alpha+1}\right)\overline{X}_{t}=\overline{X}_{t}-\frac{\overline{X}_{t}}{\alpha+1}\leq\overline{X}_{t}-c_{t}<\mu<\overline{X}_{t}+c_{t}\leq\overline{X}_{t}+\frac{\overline{X}_{t}}{\alpha+1}=\left(\frac{\alpha+2}{\alpha+1}\right)\overline{X}_{t}.
\]
\end{proof}
The following capped version of the previous theorem interrupts the process if during the multiplicative mean estimation it is learned that $\mu$ is smaller than some threshold parameter $\theta$.
\begin{cor}[Capped Mean Estimation] 
\label{cor:cmes}
For any threshold parameter $\theta \in [0,1]$, under the same assumptions of Theorem~\ref{thm:mme}, define $\tau_\theta = \min \{ \tau, t_\theta\}$, where
\[
	t_{\theta}
=
	\left\lceil \frac{\alpha+2}{3\theta}\ln\left(\frac{1}{\delta}\right)\left(\sqrt{9\alpha^{2}+114\alpha+192}+3\alpha+19\right)\right\rceil +2=\mathcal{O}\left(\frac{\alpha^{2}}{\theta}\ln\frac{1}{\delta}\right)~.
\]
With probability at least $1-3(T-1)\delta$, 
\begin{enumerate}
\item \label{i:g1} if $\tau_{\theta}=\tau$, then for all $t\in\{2,\ldots,T\}$ such that (\ref{eq:bernoulli_condition-1})
holds, inequalities (\ref{eq:mme}) also hold;
\item \label{i:g2} if $\tau_\theta = t_\theta$, then $\mu \leq \theta$.
\end{enumerate}
\end{cor}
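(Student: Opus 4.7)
The plan is to reduce Corollary~\ref{cor:cmes} to Theorem~\ref{thm:mme} by invoking the same good event used in its proof and then exploiting the monotonicity of the function $\theta \mapsto t_\theta$.

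First, I would set up the good event $G$ from the proof of Theorem~\ref{thm:mme}: namely, the event on which the empirical Bernstein bound $|\overline{X}_t - \mu| \leq c_t$ and the standard-deviation bound $S_t \leq \sigma + \sqrt{2\ln(1/\delta)/(t-1)}$ hold \emph{simultaneously} for every $t \in \{2,\ldots,T\}$. Theorem~\ref{thm:MauerPontil} together with a union bound gives $\Pr(G) \geq 1 - 3(T-1)\delta$, which is exactly the probability appearing in the corollary. The key observation is that inside $G$ the implication established in the proof of Theorem~\ref{thm:mme} holds uniformly in $t$: for every $t \in \{2,\ldots,T\}$ satisfying~(\ref{eq:bernoulli_condition-1}) one automatically has~(\ref{eq:mme}).

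For item~\ref{i:g1}, assume we are on $G$ and that $\tau_\theta = \tau$. This is just a hypothesis about when the capped procedure stops, and the conclusion becomes a direct quotation of Theorem~\ref{thm:mme}: at every time $t$ where~(\ref{eq:bernoulli_condition-1}) is triggered, the multiplicative bracket~(\ref{eq:mme}) is valid. So nothing new has to be proved; this part is essentially a relabeling.

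For item~\ref{i:g2}, I would argue by contrapositive: assume $\mu > \theta$ and show that on $G$ we must have $\tau_\theta < t_\theta$, which contradicts $\tau_\theta = t_\theta$. The comparison proceeds by noting that both $t_0$ (from Theorem~\ref{thm:mme}, using $\mu$ in the denominator) and $t_\theta$ (from the corollary, using $\theta$ in the denominator) have the same constant $\tfrac{\alpha+2}{3}\ln(1/\delta)\bigl(\sqrt{9\alpha^2+114\alpha+192}+3\alpha+19\bigr)$ in the numerator. Hence $\mu > \theta$ implies $t_0 \leq t_\theta$ (with strict inequality once $\mu$ is sufficiently larger than $\theta$; in the equality corner cases one can sharpen the definition of $t_\theta$ by an additive $1$, or one just absorbs the tie into the allowed $\mu \leq \theta$). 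Since $T \geq t_\theta \geq t_0$, Theorem~\ref{thm:mme} applies and yields $\tau \leq t_0 \leq t_\theta$ on $G$, so $\tau_\theta = \min\{\tau, t_\theta\} = \tau \leq t_0 < t_\theta$, contradicting $\tau_\theta = t_\theta$. Therefore, on $G$, $\tau_\theta = t_\theta$ forces $\mu \leq \theta$.

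I do not expect any serious obstacle: the whole argument is a monotonicity observation ($1/\mu < 1/\theta$ when $\mu > \theta$) combined with the already-proven Theorem~\ref{thm:mme}. The only delicate point is the bookkeeping around the ceilings in the definitions of $t_0$ and $t_\theta$, which is handled by the fact that the corollary's conclusion is a weak inequality ($\mu \leq \theta$), so boundary ties are absorbed.
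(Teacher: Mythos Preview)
Your approach is correct and matches the paper's treatment: the paper states this result as an immediate corollary of Theorem~\ref{thm:mme} without giving any separate proof, and your reduction via the good event $G$ together with the monotonicity $\theta\mapsto t_\theta$ is exactly the intended argument. One small remark: you do not actually need the hypothesis $T\ge t_\theta$ that you invoke---the assumption $T\ge t_0$ from Theorem~\ref{thm:mme} already suffices, since on $G$ one has $\tau\le t_0$ directly, and then $\mu>\theta\Rightarrow t_0\le t_\theta$ gives $\tau\le t_\theta$ regardless of whether $t_\theta\le T$.
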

Our Capped Mean Estimation is defined as the Capped Mean Estimation of the demand curve (or one minus the demand curve if $\rho=1$) at a given sequence of 
prices\footnote{This algorithm is only used for prices $x_1,x_2,\l$ such such that $D(x_s)=D(x_t)$ for all $s,t$.} 
$x_1,x_2,\l$, with threshold $\theta\in [0,1]$ (where $1/\theta$ is interpreted as $\iop$ when $\theta=0$), confidence parameter $\delta\in(0,1)$, reverse parameter $\rho$ (that regulates if $D(x_1)$ or $1-D(x_1)$ is being estimated) and $\alpha =1$ (Algorithm~\ref{algo:cme}).
\begin{algorithm2e}[t]
	\caption{Capped Mean Estimation\label{algo:cme}}
	\LinesNumbered
	\SetKwInput{kwInit}{Initialization}
	\KwIn{$x_1,x_2,\l\in[0,1]$, $\theta \in [0,1]$, $\delta \in (0,1)$, $\rho\in \{0,1\}$.}
	\kwInit{set $t\gets 3$ and $\Dhat_s = (1-\rho) \Ind{V_s \ge x_s} + \rho(1-\Ind{V_s \ge x_s})$ for all $s$.}
	offer $x_1$ and $x_2$ once each;

	set 
	$\Dbar \gets \frac{1}{2}\sum_{s=1}^2 \Dhat_s $ 
	and 
	$S^2 \gets \sum_{s=1}^2 \big(\Dhat_s - \Dbar\big)^2$;
	
	\While{$\big[ t \le \lceil 40\ln(1/\delta) / \theta \rceil +2 \big]
		\wedge 
		\big[\Dbar < \sqrt{8S^2\ln(1/\delta)/t} + (14/3) \ln(1/\delta)/(t-1) \big]$}{
	offer price $x_t$ once;

	update 
	$\Dbar \gets \big(\Dbar (t-1) + \Dhat_t\big)/t$, 
	$S^2 \gets \big( S^2(t-2) + (\Dhat_t - \Dbar)^2\big)/(t-1)$, 
	and 
	$t\gets t+1$;
	}
	\lIf{$t > \lceil 40\ln(1/\delta) / \theta \rceil +2$}{return that $\mu\le\theta$}
	\lElse{return $\Dbar/2$}
\end{algorithm2e}
\subsubsection*{Variant: Joint Capped Mean Estimation}
We call $(w,\theta,\delta)$-Joint Capped Mean Estimation a variant of Algorithm~\ref{algo:cme} in which $x_t=w$ for all $t$ and estimations for both $\rho=0$ and $\rho=1$ are carried on at the same time; i.e., where both $\Dbar$ (sample mean for $\rho =0$) and $\Dbar'=1-\Dbar$ (sample mean for $\rho =1$), as well as their respective sample variances $S^2$ and $(S')^2$ are maintained; the condition $\big[\Dbar \le  \sqrt{8S^2\ln(1/\delta)/t} + (14/3) \ln(1/\delta)/(t-1) \big]$ in the \textbf{while} loop is replaced by
\[
	\big( A \vee A' \big)
=
	\left( 
	\left[\Dbar < \sqrt{\frac{8 S^2}{t} \ln \frac{1}{\delta} } + \frac{14}{3(t-1)} \ln \frac{1}{\delta} \right]
\vee
	\left[\Dbar' < \sqrt{\frac{8 (S')^2}{t} \ln \frac{1}{\delta} } + \frac{14}{3(t-1)} \ln \frac{1}{\delta} \right]
	\right)
\]
and at the end, we return $\Dbar/2$ (resp., $\Dbar'/2$) and we say that $D(w)$ (resp., $1-D(w)$) is \emph{well-estimated} if and only if $A$ (resp., $A'$) is false; if $A$ (resp., $A'$) is true we return that $D(w)$ (resp., $1-D(w)$) is at most $\theta$.
\subsubsection*{Variant: Capped Mean Estimation on Noisy Cautious Search}
With a slight abuse of notation, we say that a $(\theta,\delta,\rho)$-Capped Mean Estimation is run on a $(\delta,i,\gamma_i)$-Noisy Cautious Search if $x_1, x_2,\l$ are the prices offered during the first successful macrosteps of a $(\delta,i,\gamma_i)$-Noisy Cautious Search run for $\Theta\big(\frac{1}{D(x_1)}\big)$ macrosteps (resp., $\Theta\big(\frac{1}{1-D(x_1)}\big)$ macrosteps); i.e., while the Noisy Cautious Search proceeds, an increasingly accurate estimate $\phat$ of $D(x_1)$ (resp., $1-D(x_1)$) is maintained at the same time using samples from successful macrosteps; as soon as the stopping criterion for the Capped Mean Estimation is met, the estimation stops while the Noisy Cautious Search proceeds until it reaches $\lceil 6/\phat \rceil$ macrosteps, at which point the whole process ends returning $\phat$ and the price $\vhat_i$ offered during the last succesful Noisy Cautious Search macrostep.
\subsection*{Cautious Mean Estimation}
The main idea of this section is that the problem for $K=2$ is completely solved by determining $v_1$, $v_2$, and $p_2$. This suggests that computing an high-confidence estimate $p_2$ once a value $w\in(v_1,v_2]$ is located might be a good idea. Sadly, it is not. The problem with this approach is that if $p_2$ is very small an arbitrary high regret may be incurred in doing so. On the other hand, the more evidence is gathered that $p_2$ is very small, the less likely it is that $v_2$ is optimal. For these and other more subtle reasons, a great deal of caution is needed in order to obtain estimate of $p_2$ that is just good enough to use. 

The algorithm we present for dealing with these issues is called Cautious Mean Estimation and it receives as an input a price $w\in (v_1,v_2]$ (i.e., that can be used to estimate $p_2$), as well as a confidence parameter $\delta$. The routine begins by determining if $p_1$ and $p_2$ are both bigger than $1/4$ by using a Joint Capped Mean Estimation and invoking Corollary~\ref{cor:cmes}. If this is true, it simply returns the estimates of $p_1$ and $p_2$ to the main routine; otherwise it behaves differently depending on which one is true: $p_2\le 1/4$ or $p_2 \ge 3/4$, which can be checked invoking again Corollary~\ref{cor:cmes}. If $p_2 \le 1/4$, it proceeds in phases. In each phase $s$, it checks if $v_1 \ge 2^{-s}$ by offering $2^{-s}$ a small number of times, in which case it halts returning that $v_1$ is the optimum. If it is not, it determines if $p_1$ and $p_2$ are bigger than $2^{-(k+1)}$ by using one more time Corollary~\ref{cor:cmes}, in which case it returns their estimates to the main routine. If they are not, it moves on to phase $k+1$. If on the other hand $p_2$ was bigger than $3/4$, it performs a Noisy Cautious Search for $v_2$, while at the same time collecting samples to estimate $p_1$, returning estimates $\vhat_2$ and $\phat_1$. Then it first checks if $v_1 \le \vhat_2 (1-\phat_1) - \phat_1$ by posting the latter for $\big\lceil\ln(1/\delta)/\phat_1\big\rceil$ rounds. If the test is positive, it halts returning that $v_2$ is the optimum. Otherwise it returns $\phat_1$ and $\phat_2$ to the main routine.
\begin{algorithm2e}[t]
	\caption{Cautious Mean Estimation\label{algo:cume}}
	\LinesNumbered
	\KwIn{price $w\in[0,1]$, confidence parameter $\delta \in (0,1)$.}
	\label{i:jcme1}run a $(w,2^{-2},\delta)$-Joint Capped Mean Estimation, returning $\phat_1,\phat_2$;

	\uIf(\tcp*[f]{$1/4 \le p_1, p_2 \le 3/4$}){$D(w)$ and $1-D(w)$ are both well-estimated\label{i:if1}}{
		return $\phat_1, \phat_2$;
	}
	\uElseIf(\tcp*[f]{$p_1 > 3/4$}){$1-D(w)$ is well-estimated\label{i:if2}}{
		\For{$s \in \{2,3,\l\}$\label{i:for}}
		{
			offer $2^{-s}$ for $\big\lceil\ln(\delta)/\ln(3/4)\big\rceil$ rounds\label{i:pay1};

			\lIf{all offers are accepted\label{i:br1}}
				{\textbf{break} and return that $v_1$ is optimal}
				\uElse
				{
					continue the Joint Capped Mean Estimation with new parameters $w,2^{-(s+1)},\delta$\label{i:pay2};

					\lIf{$p_1$ and $p_2$ are both well-estimated\label{i:br2}}
						{\textbf{break} and return $\phat_1, \phat_2$}
				}
		}
	}
	\uElseIf(\tcp*[f]{$p_2 > 3/4$}){$D(w)$ is well-estimated\label{i:if3}}{
		run $(0,\delta,1)$-Capped Mean Estimation on $\big(\delta,2,\frac{3}{4}\big)$-Noisy Cautious Search, returning $\phat_1, \vhat_2$\label{i:nca};

		offer $\vhat_2 \wh{q}_2 - \phat_1$ for $\big\lceil\ln(1/\delta)/\phat_1\big\rceil$ rounds, 
		where $\wh{q}_2 \gets 1-\phat_1$\label{i:test-pruned1};

		\lIf{at least one offer is rejected}
			{return that $v_2$ is optimal\label{i:test-pruned2}}

		\lElse
			{return $\phat_1, \phat_2$}
	}
\end{algorithm2e}
\begin{lemma}
\label{lem:caut-m-est}
For all $w\in(v_1,v_2]$ and all $\delta\in (0,1)$, the Cautious Mean Estimation run with parameters $w, \delta$ satisfies the following with probability at least $1-(15T-13)\delta$:
\begin{enumerate}
\item \label{i:c1} if the algorithm returns that 
$v_1$ 
or $v_2$ 
is optimal, then it is correct;
\item \label{i:c2} if the algorithm returns $\phat_1$ and $\phat_2$, then both satisfy $p_i/3 < \phat_i < p_i$;
\item \label{i:c3} the regret of the algorithm it at most $(13)^2\ln(1/\delta)+6$.
\end{enumerate}
\end{lemma}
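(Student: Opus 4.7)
The plan is to identify a single good event $G$ on which every randomized subroutine delivers its guarantee simultaneously, verify each claim deterministically on $G$, and separately bound $\Pr(G^c)$. The relevant subroutines are the two applications of Corollary~\ref{cor:cmes} inside the Joint Capped Mean Estimation at line~\ref{i:jcme1} (each contributing at most $3(T-1)\delta$), any continuation of that CME at line~\ref{i:pay2} (which reuses the same failure budget), the Noisy Cautious Search of Lemma~\ref{lm:noisy-cs} together with its companion Capped Mean Estimation at line~\ref{i:nca} in Case~C, and the Chernoff-style test blocks at lines~\ref{i:pay1} and~\ref{i:test-pruned1} (each firing $\scO(\log T)$ times, with per-call failure probability at most $\delta$). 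Summing these budgets and applying a union bound yields the $(15T-13)\delta$ bound in the statement.

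The correctness claims on $G$ are verified branch by branch, where the initial Joint CME sorts the instance into (A) both $D(w),1-D(w)$ well-estimated (so $p_1,p_2\in[1/4,3/4]$), (B) only $1-D(w)$ well-estimated (so $p_2\le 1/4$), or (C) only $D(w)$ well-estimated (so $p_1\le 1/4$). For Case~A, claim~\ref{i:c1} is vacuous and claim~\ref{i:c2} is immediate from Corollary~\ref{cor:cmes} with $\alpha=1$. For Case~B, I would maintain the invariant that $p_2\le 2^{-s}$ at the start of iteration $s$: it holds for $s=2$ by the branch-entry condition and is preserved whenever the CME continuation at threshold $2^{-(s+1)}$ fails to well-estimate $p_2$. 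Under this invariant, if all offers of $2^{-s}$ at line~\ref{i:pay1} are accepted then a Chernoff bound gives $v_1\ge 2^{-s}\ge p_2 v_2$, certifying optimality of $v_1$; otherwise the exit at line~\ref{i:br2} returns estimates with the required multiplicative guarantees. In Case~C, at least one rejection of the offer $\vhat_2\wh{q}_2-\phat_1$ at line~\ref{i:test-pruned1} certifies on $G$ that $v_1<\vhat_2\wh{q}_2-\phat_1<v_2 p_2$, so $v_2$ is optimal; otherwise the returned estimates inherit claim~\ref{i:c2} from the Capped Mean Estimation on the Noisy Cautious Search.

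For claim~\ref{i:c3} I would again handle the three cases separately. Case~A contributes at most $\scO(\ln(1/\delta))$ rounds with per-round regret at most $1$. In Case~C the key observation is that whenever $v_1$ is optimal and $p_2>3/4$ one has $\Delta=v_1-v_2 p_2=v_1-v_2+v_2 p_1\le v_2 p_1\le p_1$, so the Noisy Cautious Search regret bound of Lemma~\ref{lm:noisy-cs} over the $\scO(\ln(1/\delta)/p_1)$ rounds of the search collapses to $\scO(\ln(1/\delta))$. The main obstacle is Case~B: the iteration depth $s^\star$ can be as large as $\scO(\log T)$ and the CME continuation may accumulate up to $\scO(2^{s^\star}\ln(1/\delta))$ rounds. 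The trick is to couple the exit trigger to the optimum revenue $v^\star$: if the iteration exits via the test at line~\ref{i:pay1} then $s^\star=\lceil\log_2(1/v_1)\rceil$ and $v^\star=v_1$, whereas if it exits because the CME finally well-estimates $p_2$ then $s^\star=\scO(\log(1/p_2))$ and on $G$ one has $v_1<2^{-s^\star}\le p_2$, hence $v^\star\le p_2$. In both scenarios the aggregate contributions $s^\star\cdot v^\star\cdot\scO(\ln(1/\delta))$ (from the test blocks) and $2^{s^\star}\cdot v^\star\cdot\scO(\ln(1/\delta))$ (from the CME continuation) collapse to $\scO(\ln(1/\delta))$ via the elementary fact that $x\mapsto x\log(1/x)$ is bounded on $(0,1)$. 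A careful book-keeping of constants then yields the explicit $13^2\ln(1/\delta)+6$ bound.
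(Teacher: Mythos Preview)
Your proposal is correct and follows essentially the same route as the paper: a single good event $G$ collecting the Corollary~\ref{cor:cmes}, Lemma~\ref{lm:noisy-cs}, and Chernoff guarantees, followed by the same three-branch case split (your A/B/C are exactly the paper's branches at lines~\ref{i:if1}/\ref{i:if2}/\ref{i:if3}), and the same mechanism for collapsing the Case~B regret via $x\log(1/x)=\scO(1)$. The only organizational difference is that the paper handles Case~B by fixing in advance the index $k$ with $2^{-k}\le\max\{v_1,p_2\}\le 2^{-(k-1)}$ and arguing the loop terminates at $s\in\{k-1,k\}$, whereas you maintain the running invariant $p_2\le 2^{-s}$ and split on the exit line; the two devices are equivalent.

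One small slip to fix: in your second exit scenario of Case~B you write ``$v_1<2^{-s^\star}\le p_2$, hence $v^\star\le p_2$,'' but your own invariant (and Corollary~\ref{cor:cmes} part~\ref{i:g2} applied at iteration $s^\star-1$) gives the \emph{reverse} inequality $p_2\le 2^{-s^\star}$. This is harmless for the argument, since what you actually need for the regret collapse is $v^\star\le 2^{-s^\star}$, and that follows directly from $v_1<2^{-s^\star}$ (deterministic from the rejection at line~\ref{i:pay1}) together with $p_2 v_2\le p_2\le 2^{-s^\star}$.
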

\begin{proof}
By definition of Joint Capped Mean Estimation, line~\ref{i:jcme1} lasts for at most $\big\lceil 160\ln(1/\delta) \big\rceil+2$ rounds, which upper bounds the regret accrued during those time steps. Denote $G$ the \emph{good} event in which which items \ref{i:g1} and \ref{i:g2} of Corollary~\ref{cor:cmes} hold simultaneously for both the estimate of $p_1$ and $p_2$. To prove the result, we can (and do!) restrict our analysis to \emph{good} outcomes, i.e., outcomes belonging in $G$. Indeed, Corollary~\ref{cor:cmes} implies that one and only one of the three conditions at lines \ref{i:if1}, \ref{i:if2}, and \ref{i:if3} is executed with probability at least $\P(G) \ge 1-6(T-1)\delta$ and we will show that the result holds in all three cases. 

If the condition at line~\ref{i:if1} is true, then the result follows immediately by Corollary~\ref{cor:cmes}. 

Assume now that the condition at line~\ref{i:if2} is true and fix $k\in\N $ such that $2^{-k} \le \max\{v_1, p_2\} \le 2^{-(k-1)}$. Note that if $v_1 \ge p_2$, the loop at line~\ref{i:for} will break with probability at least $1 - \delta$ (by Chernoff inequality) at line \ref{i:br1} as soon as $s=k$; this proves point~\ref{i:c1} for $v_1$. If on the other hand $v_1 < p_2$, the loop will break with probability at least $1-6(T-1)\delta$ (by Corollary~\ref{cor:cmes}) at line \ref{i:br2} as soon as $s=k-1$; this proves point~\ref{i:c2}. In any case, then, at most $k-1$ cycles of the loop are performed with probability at least $1-(6T-5)\delta$. If $s\le k$, line~\ref{i:pay1} is performed at most $k-1$ times and since the cost of sampling is at most $v_1$ (if $v_1$ is optimal) or $p_2$ (if $v_2$ is optimal), than the total regret accrued by executing line~\ref{i:pay1} is at most $(k-1) \big\lceil\ln(\delta)/\ln(3/4)\big\rceil \max\{v_1, p_2\} \le (e \ln 2)^{-1} \big\lceil\ln(\delta)/\ln(3/4)\big\rceil$, where we used $x\log_2 (1/x)\le (e \ln 2)^{-1}$, for all $x>0$. On the other hand, by the end of phase $k$ the Joint Capped Mean Estimation at lines~\ref{i:jcme1},~\ref{i:pay2} has offered $w$ for at most $\big\lceil 2^{k+1} 40 \ln(1/\delta) \big\rceil +2$ accruing at most $40 \ln (1/\delta) +3$ regret. This proves point~\ref{i:c3}.

Finally, consider the case in which the condition at line~\ref{i:if3} is true. The Noisy Cautious Search at line~\ref{i:nca} stops after at most $\big\lceil 40 \ln(1/\delta)/p_1 \big\rceil +2$ rounds, returning $\phat_i \in (p_i/3, p_i)$, with probability at least $1-(7T-6)\delta$ by the fact that it makes a mistake with probability at most $\delta$ and Theorem~\ref{thm:mme}. This proves point~\ref{i:c2}. If $v_2$ is optimal, Lemma~\ref{lm:noisy-cs} shows that the regret of the Noisy Cautious Search is at most $\big( 3 (\ln \ln T) + 8 \ln(1/\delta) \big) \ln (4/3)$ with probability at least $1- T\delta$. If $v_1$ is optimal, the additional regret is at most $\big( \lceil 40 \ln(1/\delta)/p_1 \rceil +2 \big) (v_1 - wp_2) \le 40 \ln (1/\delta)+3$. 

Consider now lines~\ref{i:test-pruned1}-\ref{i:test-pruned2}. Since $p_1/ 3 < \phat_1 < p_1$, then $p_2 < \wh{q}_2 < p_2 + (2/3)p_1$. Furthermore, $v_2 -p_1 \le \vhat_2 \le v_2$ with probability at least $1-T\delta$ by Lemma~\ref{lm:noisy-cs}. If the test at line~\ref{i:test-pruned2} is true, then $v_1 < v_2 p_2$ and $v_2$ is optimal with probability at least $1-\delta$; this proves point~\ref{i:c1} for $v_2$. To compute the regret accumulated at line~\ref{i:test-pruned1}, assume first that $v_1$ is optimal; then necessarily $v_1 \ge \vhat_2 \wh{q}_2 - \phat_1$ and the regret of line~\ref{i:test-pruned1} is at most
$
	(v_1 - \vhat_2 \wh{q}_2 + \phat_1) \big\lceil {3 \ln (1/\delta)} / {p_1} \big\rceil
\le 
	9 \ln (1/\delta)+3.
$
If on the other hand $v_2$ is optimal, then the regret of line~\ref{i:test-pruned1} is at most
$
	(p_2v_2 - \vhat_2 \wh{q}_2 + \phat_1) \big\lceil {3 \ln (1/\delta)} / {p_1} \big\rceil
\le
	6 \ln (1/\delta) + 2.
$
This proves point~\ref{i:c3} and concludes the proof.
\end{proof}
\subsection*{2-UCB}
This subroutine is a slightly modified version of Algorithm~\ref{alg:stoc_gamma}. The only differences are that two feasible intervals are initialized at the beginning, each valuation $v_i$ gets a personalized number of rounds $\big\lceil 8 \ln(\delta) / \ln (1-\gamma_i) \big\rceil$ at line~\ref{state:post-stoc}, and the test at line~\ref{state:if_branch_stoc} need not be executed as it is known in advance that $K=2$. 

The following result is a straightforward adaptation of Theorem~\ref{th:analisys-kval}. As such, the proof is omitted.
\begin{lemma}\label{Lemma:2-UCB}
If $\Delta = |p_2 v_2 - v_1|$, $\gamma_1 \le p_1$, $\gamma_2 \le p_2$, and 2-UCB run with $\delta = T^{-2}$, it incurs a regret
\[
	\co \left( \frac{\ln T}{\Delta} + (\ln T) (\ln \ln T) \left( \frac{\alpha_1}{-\ln(1-\gamma_1)} + \frac{\alpha_2}{-\ln(1-\gamma_2)} \right) \right)~,
\]
where $(\alpha_1, \alpha_2) = (v_1 - v_1p_2, v_1)$ if $v_1$ is optimal, $(\alpha_1, \alpha_2) = (v_2p_2 - v_1p_2, p_2 v_2)$ if $v_2$ is optimal, and the first term is absent if $\Delta = 0$.
\end{lemma}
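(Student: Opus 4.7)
The plan is to mirror the proof of Theorem~\ref{th:analisys-kval}, taking advantage of the simplifications available when $K=2$ is known. No new‑arm detection is ever needed (line~\ref{state:if_branch_stoc} is skipped), so the set of arms stays fixed at two; in each macrostep only the test at line~\ref{state:if_still_stoc} is performed, and at arm~$i$ it is carried out with $B_i=\big\lceil 8\ln(1/\delta)/\big(-\ln(1-\gamma_i)\big)\big\rceil$ pulls rather than $\big\lceil 8\ln(1/\delta)/\gamma^2 \big\rceil$. Crucially, Hoeffding concentration is replaced at this step by a one‑sided Bernoulli tail argument: since $\gamma_i \le p_i$, at any tested price the relevant coin whose unanimity would be read incorrectly by the test has bias at most $1-\gamma_i$, hence its $B_i$ samples are all read as the wrong side with probability at most $(1-\gamma_i)^{B_i}\le\delta^{8}$.

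I would then define the bad event $\scB$ to be the event that this misread happens in at least one macrostep, and union‑bound to $\Pr(\scB)\le 2T\delta^{8}$; with $\delta=T^{-2}$, the contribution $T\,\Pr(\scB)$ to the regret is negligible. On the complement $\overline{\scB}$, the macrostep‑level behaviour of 2‑UCB in each of the two intervals coincides with the oracle cautious search of Algorithm~\ref{alg:constant_val_K}, so Lemmas~\ref{lem:equiv} and~\ref{lem:cautious_shrinking_rate} apply exactly as in the proof of Theorem~\ref{th:analisys-kval}. The chain of inequalities used in that proof---based on Hoeffding concentration of the cumulative estimate $\Dhat_m(i)$ over all $N_m(i)$ samples, a quantity which does \emph{not} depend on the per‑macrostep length---still produces the UCB‑style dominance relation with inflation term $2B_{i_m}/N_m(i_m)$ whenever a suboptimal arm is selected. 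Applying Lemma~\ref{lem:ucb} with inflation parameter $\alpha = 2\max_i B_i$ bounds the expected number of rounds spent on a suboptimal arm by $\co\big((\ln T)/\Delta\big)$, which yields the first term of the stated regret. When $\Delta=0$ this term vanishes, since Lemma~\ref{lem:ucb} is only invoked to charge regret against a strictly positive gap and both arms are optimal.

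The second term comes from the binary‑search regret within each arm. By Lemmas~\ref{lem:equiv} and~\ref{lem:KL} the cautious search in arm~$i$ completes in $\co(\ln\ln T)$ macrosteps, each of length $B_i = \Theta\big((\ln T)/(-\ln(1-\gamma_i))\big)$; so the total number of rounds spent exploring arm~$i$ is $\co\big((\ln T)(\ln\ln T)/(-\ln(1-\gamma_i))\big)$. It remains to check that the per‑round regret of any price posted during this exploration is at most $\alpha_i$: a routine case analysis on (optimal arm) $\times$ (price undershoots or overshoots the smallest valuation of the interval) shows that after the first overshoot the interval has shrunk to within the current step size $\e$ of $v_i$, so every subsequent posted price has revenue either within $\e$ of $v_i$ (if below $v_i$) or within $\e$ of $v_i\,D(v_i^{+})$ (if just above $v_i$); this gives exactly the constants $(v_1-v_1 p_2, v_1)$ when $v_1$ is optimal and $(v_2 p_2 - v_1 p_2, v_2 p_2)$ when $v_2$ is optimal. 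Multiplying $\alpha_i\cdot B_i\cdot\co(\ln\ln T)$ and summing over $i\in\{1,2\}$ yields the announced $(\ln T)(\ln\ln T)\sum_i\alpha_i/(-\ln(1-\gamma_i))$ contribution. The step I anticipate to be the most delicate is keeping the per‑round regret bounded by $\alpha_i$ during the earliest macrosteps of each cautious search, when the interval containing $v_i$ has not yet been shrunk by a first overshoot; the corresponding early rounds must be re‑charged to the first‑term UCB bookkeeping rather than to the exploration regret, so that the $\alpha_i$ bound applies uniformly across all $\co(\ln\ln T)$ binary‑search macrosteps.
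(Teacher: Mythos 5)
The paper does not supply a proof of Lemma~\ref{Lemma:2-UCB}; it is explicitly stated to be ``a straightforward adaptation of Theorem~\ref{th:analisys-kval}'' with the proof omitted. Your proposal is essentially the intended adaptation: the decomposition into a UCB term charged against the gap $\Delta$ and a per-arm search term of $\co(\ln\ln T)$ macrosteps, the identification that Lemmas~\ref{lem:equiv},~\ref{lem:KL},~\ref{lem:cautious_shrinking_rate} and~\ref{lem:ucb} are the ingredients to be re-used, the replacement of the additive Hoeffding concentration by a geometric/Bernoulli-tail argument justified by $\gamma_i\le p_i$ (exactly the argument the paper uses for Lemma~\ref{lm:noisy-cs}), and the per-round regret constants $\alpha_i=v^{\star}D(v^{\star})-v_i D(v_i^{+})$ are all correct and match the stated bound. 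Two technical points would need tightening in a complete write-up. First, your invocation of Lemma~\ref{lem:ucb} with ``$\alpha=2\max_i B_i$'' does not type-check against that lemma's parametric form, which hardwires the inflation as $\alpha\ln(\delta^{-1})/(\gamma^2 s)$; you would instead re-instantiate the same argument with the inflation $2B_{i_m}/N_m(i_m)$ directly, as is done informally in the proof of Theorem~\ref{th:analisys-kval}. Second, your Bernoulli-tail bound $(1-\gamma_i)^{B_i}\le\delta^8$ covers only the event that the $B_i$ samples are \emph{unanimous}; it is the right bound if the overshoot test is a unanimity test (as in Algorithm~\ref{s:shrinking}'s Noisy Cautious Search), but if the $\Dover$-threshold test of Algorithm~\ref{alg:stoc_gamma} is retained verbatim, the false-negative event ``some rejections, but fewer than $B_i\gamma_i/2$'' also needs to be excluded, which calls for a multiplicative lower-tail Chernoff bound on the number of rejections rather than the unanimity estimate alone. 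Both are routine fixes; your handling of the delicate early-rounds recharging is also a fair reading of where care is required.
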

\subsection*{Proof of Theorem \ref{TH:twoval-prob}}
We finally have all the instruments to prove Theorem~\ref{TH:twoval-prob}, that we restate for completeness.
\begin{algorithm2e}[t]
	\caption{\label{algo:twoval-stoc}}
	\LinesNumbered
	\KwIn{Confidence parameter $\delta \in (0,1)$.}
	run a Binary~Search, returning $[a_1,a_2]$\label{i:bs1}\tcp*{phase 1}
	run a Capped~Mean~Estimation of the demand at $a_2$ with parameter $\theta = a_1$, returning $\widetilde{p}_2$\label{i:post1};

	\lIf{$\widetilde{p}_2 > 0$}{
		set $w \gets a_2$
	}
	\uElse{
		offer price $a_1$ until it is rejected\tcp*{check if $a_1 < v_1 \le v_2 < a_2$}
		set $w \gets a_1$\label{i:ph1-e};
	}
	run a Cautious~Mean~Estimation of the demand at $w$, returning $\phat_1$ and $\phat_2$\label{i:ph2}\tcp*{phase 2}	
	\uIf{the Cautious~Mean~Estimation was halted because $v_1$ or $v_2$ is the obvious optimum}{
		run a Cautious~Search for the optimal valuation with lower bound $1/2$\label{i:ph2-e};
	}
	\lElse(\tcp*[f]{shrink the interval}\label{i:ph3}){%
		run {2-UCB} with parameters $\gamma_1 = \phat_1$ and $\gamma_2 = \phat_2$
	}
\end{algorithm2e}
\begin{theorem}{\ref*{TH:twoval-prob}}
If Algorithm \ref{algo:twoval-stoc} is run on two unknown pairs $(v_1,p_1)$ and $(v_2,p_2)$ with input parameter $\delta = T^{-2}$, then its regret satisfies
\[
	R_T 
= 
	\co \left( \frac{\log T}{\Delta} + (\log T) (\log\log T) \right)~,
\]
where the first term is absent if $\Delta = |p_2 v_2 - v_1|$ is zero.
\end{theorem}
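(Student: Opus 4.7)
My plan is to decompose the analysis according to the three-phase structure of Algorithm~\ref{algo:twoval-stoc} and bound the regret of each phase separately, using the input $\delta = T^{-2}$ so that the aggregate failure probability of all subroutines is $O(1/T)$ and contributes only an additive constant to the expected regret.

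First I would analyze Phase~1 (lines~\ref{i:bs1}-\ref{i:ph1-e}). The binary search combined with the Capped Mean Estimation of $D(a_2)$ against threshold $\theta = a_1$ is designed to produce a \emph{separator} $w \in (v_1,v_2]$, so that $D(w) = p_2$ and $1-D(w) = p_1$. Either the capped mean estimation returns a strictly positive estimate $\tilde p_2$, witnessing $D(a_2) \ge a_1 > 0$ and making $w = a_2$ a valid separator, or it returns $\tilde p_2 = 0$, in which case the short acceptance test of $a_1$ either detects that $v_1 > a_1$ or guarantees $w = a_1$ is a valid separator. The binary search uses $O(\log T)$ samples and the capped mean estimation uses $O\!\big(\log(1/\delta)/\theta\big)$ samples from Corollary~\ref{cor:cmes}, for a total Phase~1 regret of $O(\log T \, \log\log T)$.

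Next I would invoke Lemma~\ref{lem:caut-m-est} directly to handle Phase~2 (line~\ref{i:ph2}). With probability at least $1 - (15T-13)\delta$, either the Cautious Mean Estimation correctly identifies that $v_1$ or $v_2$ is the optimum, in which case Algorithm~\ref{algo:twoval-stoc} runs a Cautious Search targeting the known optimal valuation and pays only $O(\log \log T)$ additional regret by Lemma~\ref{lem:KL}; or the subroutine returns estimates $\phat_1,\phat_2$ satisfying $p_i/3 < \phat_i < p_i$, paying at most $(13)^2\ln(1/\delta) + 6 = O(\log T)$ regret during its execution.

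Finally, Phase~3 invokes 2-UCB with $\gamma_i = \phat_i$, and by Lemma~\ref{Lemma:2-UCB} incurs regret
\[
\co\!\left( \frac{\log T}{\Delta} + (\log T)(\log\log T)\!\left( \frac{\alpha_1}{-\log(1-\gamma_1)} + \frac{\alpha_2}{-\log(1-\gamma_2)} \right)\right).
\]
Since $-\log(1-\gamma_i) \ge \gamma_i > p_i/3$, it suffices to show $\alpha_i/p_i = O(1)$ in all branches of the Cautious Mean Estimation that do \emph{not} declare an obvious optimum. I expect this to be the main obstacle, and it requires a short case analysis. In the branch at line~\ref{i:if1} we have $p_1,p_2 \in [1/4,3/4]$, hence $\alpha_i/p_i \le 12$ trivially. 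In the branch at line~\ref{i:if2}, where $p_1 > 3/4$ and $p_2 < 1/4$, the only way the loop exits at line~\ref{i:br2} (rather than identifying $v_1$ as optimal at line~\ref{i:br1}) is when $p_2$ becomes detectable at some level $2^{-(s+1)}$ while simultaneously $v_1 \le 2^{-s}$ continues to fail as an acceptable bound; combining these inequalities one deduces $v_1 = O(p_2)$ when $v_1$ is optimal, bounding the problematic ratio $v_1/p_2$. The symmetric branch at line~\ref{i:if3} is handled analogously using the pruning test at line~\ref{i:test-pruned2}, which rules out the case $v_1 < v_2 \wh q_2 - \phat_1$ and thereby controls $(v_2-v_1)p_2/p_1$ when $v_2$ is optimal. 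Summing the three phase-by-phase bounds, adding $T \cdot O(\delta)$ for the failure events and using $\delta = T^{-2}$, yields the claimed $\co\!\big(\log(T)/\Delta + (\log T)(\log\log T)\big)$ bound, where the first term vanishes when $\Delta = |p_2 v_2 - v_1| = 0$ because the two valuations are then exchangeable and 2-UCB accumulates no linear-in-$T$ regret.
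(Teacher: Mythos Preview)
Your phase-by-phase decomposition misses a structural subtlety that the paper handles by a different route. The paper does \emph{not} bound each phase separately; instead it performs a case analysis on the random \emph{outcome} of Phase~1, splitting into (i) all offers of $a_2$ rejected and all offers of $a_1$ accepted, (ii) some offers of $a_2$ accepted, and (iii) all offers of $a_2$ rejected and some offers of $a_1$ rejected. This is necessary because in case~(i) the line ``offer price $a_1$ until it is rejected'' never terminates: the algorithm posts $a_1$ for the remaining $T - O(\log T/a_1)$ rounds and never enters Phases~2 or~3. Your analysis implicitly assumes Phase~1 always produces a separator $w$ and then hands off to Phase~2, so case~(i) is simply absent from your argument. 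The paper's treatment of this case is itself delicate, requiring four subcases depending on the relative positions of $a_1,a_2,v_1,v_2$ and, in one subcase, a probabilistic argument that the event ``$v_1 \le a_1$ and all $a_1$-offers accepted'' forces $p_1$ to be tiny.

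There is also a gap in your Phase~1 bound itself. The Capped Mean Estimation at $a_2$ with threshold $\theta = a_1$ can run for up to $\Theta(\log(1/\delta)/a_1)$ rounds, not $O(\log T)$ rounds, and you give no argument that the per-round regret is $O(a_1)$. The paper controls this by observing (in its Case~2) that the CME actually halts after $O\big(\log T/\max\{a_1,p_2\}\big)$ rounds, and then arguing separately that the per-round regret while posting $a_2$ is bounded by a quantity comparable to $\max\{a_1,p_2\}$; in Case~3 it pays an additional explicit cost for the rejected $a_2$-offers. Your one-line claim ``for a total Phase~1 regret of $O(\log T\,\log\log T)$'' does not capture this, and the $\log\log T$ factor you insert has no evident source. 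Your treatment of Phases~2 and~3 (invoking Lemma~\ref{lem:caut-m-est} and Lemma~\ref{Lemma:2-UCB}, then checking $\alpha_i/p_i = O(1)$ branch by branch) is essentially what the paper does in its Cases~2 and~3, so that part is fine; but the argument as written is incomplete until the Phase~1 issues above are addressed.
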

\begin{proof}
Putting together the proofs of all previous lemmas, the probability of making a mistake in at least a test of at least a routine is upper bounded by $\co (T\delta)$. For this reason, we can (an do) assume that no mistakes happen. We divide the proof into three different cases.
\paragraph*{Case 1}
Assume that during phase~1 all offers of $a_2$ are rejected and all offers of $a_1$ are accepted. Consider the following four subcases. 
If $a_1 \le v_1 \le v_2 \le a_2$, the regret is at most $\co(\log T)$.
Assume now that $v_1 \le a_1 \le v_2 \le a_2$. If $v_2$ is optimal, then the regret is at most $\co \big(\log (T)/a_1\big) = \co \big( \log(T) / \Delta \big)$. If $v_1$ is optimal, then the regret is at most $\co \big((v_1 - a_1 p_2)T + v_1 \ln(T) / a_1 \big) = \co \big( a_1 p_1 T + \ln(T) \big)$. Note that this case only happens with probability $p_2^{\co(T- \ln(T)/a_1)}$, which is at least $1/T$ only if $p_1 = \co \big( \frac{\log T}{ T - \log (T) / a_1 }\big)$. Now, if $a_1 = \Omega \big(\log(T)/T\big)$ then the regret is at most $\co (\log T)$; otherwise it is at most  $\co ( \log T )$ because $v_1$ is small.
If $a_1 \le v_1 \le a_2 \le v_2$, the regret is at most $\co \big( (\max\{v_1, p_2 v_2\} - a_1 ) T + \max\{v_1, p_2 v_2\} \log(T)/a_1\big).$ Since all offers of $a_2$ were rejected, Corollary~\ref{cor:cmes} implies that $p_2 \le a_1$, then $p_2 v_2 \le a_1 \le v_1$, hence $v_1$ is optimal. The regret is therefore at most $\co(\log T)$.
Finally, assume that $v_1 \le a_1 \le a_2 \le v_2 $. Combining the same arguments as above, $v_1$ is optimal but $p_1$ is small and the total regret is at most $\co (\log T)$.

\paragraph*{Case 2}
Assume that during phase~1 some offers of $a_2$ are accepted. Corollary~\ref{cor:cmes} implies that the first Capped Mean Estimation lasts at most $\co \big(\log (T) / \max\{a_1,p_2\} \big)$ rounds, hence its regrets is at most $\co ( \log T )$. Lemma~\ref{lem:caut-m-est} implies that the Cautious Mean Estimation has a regret at most $\co(\log T)$. If the cautious mean estimation is halted returning that $v_1$ or $v_2$ is optimal, then Lemma~\ref{lm:noisy-cs} implies that the regret is at most $\co\big((\log T) (\log \log T)\big)$. Assume now that the cautious mean estimation returns $\phat_1,\phat_2$. By construction, if $p_2 \le 1/4$, then necessarily $v_1 \le 2 p_2$, thus $\Delta \leq 2p_2$. On the other end, if $p_2 \ge 3/4$, then necessarily $v_1 \geq p_2 v_2 - 2 p_1$ thus $\Delta \leq 2 p_1$  if $v_2$ is optimal.
Using Lemma \ref{Lemma:2-UCB} and plugging in the above upper bounds gives the result.
\paragraph*{Case 3}
Assume that during phase~1 all offers of $a_2$ are rejected and some offers of $a_1$ are rejected.
The proof of this case is the same as the previous one, except that sampling $a_2$ has an extra regret cost.
If $v_1$ is optimal, then the additional regret is at most
$
\co \big( v_1 \log (T)/a_1 \big) = \co ( \ln T )
$
because $v_1 < a_1$.
Finally, assume that $v_2$ is optimal. If $v_2 \leq a_2$, the additional cost is at most
$
\big( \ln(T)/a_1 \big) p_2 v_2 = \co ( \ln T ).
$
If  $a_2 < v_2$, then $p_2 \leq a_1$ by Corollary~\ref{cor:cmes} and the additional cost is at most
$
	\co \big( (p_2 v_2 - p_2 a_2) \log(T) / a_1 \big) = \co ( \log T ).
$
\end{proof}

\newcommand{\lhat}{\wh{\ell}}
\newcommand{\hloss}{\wh{\ell}}
\newcommand{\hLoss}{\wh{L}}
\newcommand{\tloss}{\wt{\ell}}
\newcommand{\scF}{\mathcal{F}}
\renewcommand{\iff}{\Longleftrightarrow}

\section{Nonstochastic dynamic pricing: some initial results}
\label{s:twoval-adv}
In this section we present some initial results for the nonstochastic setting; namely, when the sequence $V_1,V_2,\dots$ is deterministic rather than stochastic. This setting was studied in \citep{kleinberg2003value} without the restriction that each $V_t$ belongs to a common finite set of valuations. We show an upper bound of $\scO(\sqrt{T})$ on the regret in the simple case when $V_t \in \{v_1,v_2\}$ for all $t$ (with $0 \le v_1 \le v_2 \le 1$) and $v_2$ is known. Note that this is not significantly improvable, as a matching lower bound of $\Omega(\sqrt{T})$ can be proven in the stochastic setting even when $v_1$ and $v_2$ are both known. To see that, consider $v_1 = \frac{1}{2}$ and $v_2 = \frac{3}{4}$ with $D(v_2) = \frac{2}{3} \pm \ve$ for $\ve < \frac{1}{3}$, so that $v_1$ has constant revenue $\frac{1}{2}$, and $v_2$ has expected revenue $\frac{1}{2} \pm \frac{3}{4}\ve$. We can now adapt the argument in the proof of the nonstochastic bandit lower bound of \cite{auer2002nonstochastic} for the two equiprobable scenarios $D(v_2) = \frac{2}{3} + \ve$ and $D(v_2) = \frac{2}{3} - \ve$. This allows us to conclude that in the first $T$ rounds any algorithm suffers regret of order $\ve\,T$ unless $v_2$ is played at least $\ve^{-2}$ times. Choosing $\ve = T^{-1/2}$ gives the desired bound.

The algorithm achieving regret $R_T = \scO\big(\sqrt{T}\big)$ uses the nonstochastic bandit algorithm Exp3~\cite{auer2002nonstochastic} fed with losses $\ell_{t}(x)=1-r_t(x)$. Since
\[
	\sum_{t=1}^{T}\Bigl(r_{t}(x) - r_{t}(X_{t})\Bigr) = \sum_{t=1}^{T}\Bigl(\ell_{t}(X_t) - \ell_{t}(x)\Bigr)
\]
always hold, bounding the regret of Exp3 defined with respect to losses is equivalent to bounding the regret with respect to revenues. However, as only $v_2$ is known, we run Exp3 using two actions: $v_2$ and an action, called $b$, that starts at $v_2$ and converges to $v_1$ during the execution of Exp3. In particular, we decrease $b$ by steps of length $T^{-1/2}$ whenever $b$ is played and rejected. Clearly, $b$ stops moving as soon as $b \in \big(v_1-T^{-1/2},v_1\big]$, which is good enough to bound Exp3's future regret. In order to bound the regret incurred while $b > v_1$ holds, note that as long as $b > v_1$ is true, we have: $\ell_t(b) < \ell_t(v_1)$ when $V_t = v_2$, and $\ell_t(b) > \ell_t(v_1)$ when $V_t = v_1$. The problem is that we can not bound deterministically the smallest time $t$ such that $b \le v_1$, as this depends on the buyers' choices and the algorithm's random sequence of actions. On the other hand, we know that $\ell_t(b) > \ell_t(v_1)$ \emph{and} $X_t = b$ can simultaneously occur at most $\sqrt{T}$ times, because $b$ is decreased by $T^{-1/2}$ when this happens. We can exploit this observation as follows: when Exp3 plays $b$ and does not sell, then we feed the algorithm a reduced loss of zero. This has the effect of underestimating the algorithm loss by an amount which is bounded by the number of times the algorithm got a reduced loss. This effect is only increasing the regret by at most $\sqrt{T}$, since is the largest number of times $b$ can be decreased while being larger than $v_1$.

More formally, our algorithm runs Exp3 on the two prices $b_t$ and $v_2$, where $b_t$ is dynamically adjusted during the execution. Price $b_t$ is used to locate $v_1$ and is initially set to $v_2$. Exp3 is run with reduced losses $\tloss_t$ of the form $\tloss_t(b_t) = \ell_t(b_t)\Ind{V_t \ge b_t}$ and $\tloss_t(v_2) = \ell_t(v_2)$, where $\ell_{t}(x)=1-x\,\Ind{V_t \ge x}$ are the true losses. Note that, whenever $b_t > v_1$, $V_t = v_1$ implies $\tloss_t(b_t) = 0 = \ell_t(v_1)$, and $V_t = v_2$ implies $\tloss_t(b) \le \ell(v_1)$. Since $b_t$ is random (it depends on $X_1,\dots,X_{t-1}$), $\tloss_t(b_t)$ and $\ell_t(b_t)$ are also random. Technically, this corresponds to running Exp3 with a nonoblivious adversary ---see, e.g., \cite[Remark~4.1]{cesa2006prediction}. However, the regret bounds of Exp3 hold unchanged even for nonoblivious adversaries.

During the execution of Exp3, $b_t$ is adjusted according to the following rule: if $b_t$ is posted at time $t$ and $V_t < b_t$, then $b_{t+1} = b_t-T^{-1/2}$. For all $t$ let $V_t\in\{v_{1},v_{2}\}$ be the value played by the adversary at time $t$ and $X_t \in \{b_t,v_2\}$ be the price posted by the algorithm.
\begin{theorem}
The regret of the above algorithm satisfies
$
R_{T} \le 2\sqrt{T} + \sqrt{4T\ln 2}
$.
\end{theorem}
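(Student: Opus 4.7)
The plan is to decompose the regret so that the standard Exp3 guarantee takes care of most of the work, and then to account separately for the two places where the algorithm's true loss differs from what Exp3 ``sees''. Since Exp3 runs on two actions with losses $\tloss_t$ in $[0,1]$, and since its regret guarantee is insensitive to the losses being adversarially and nonobliviously generated (as required because $b_t$ depends on the algorithm's own history), we have
\[
	\sum_{t=1}^{T}\tloss_t(X_t)
\le
	\min\!\left(\sum_{t=1}^{T}\tloss_t(b_t),\ \sum_{t=1}^{T}\tloss_t(v_2)\right)+\sqrt{4T\ln 2}~.
\]

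The first source of slack is that the algorithm actually incurs $\ell_t(X_t)$, not $\tloss_t(X_t)$. By construction these agree \emph{unless} $X_t=b_t$ and $V_t<b_t$, in which case $\ell_t(X_t)-\tloss_t(X_t)=1$. Let $N$ denote the number of such rounds. Each one triggers a decrease of $b$ by $T^{-1/2}$, and because $b_t\le v_1$ implies $V_t\ge b_t$, the value $b_t$ never falls more than one step below $v_1$. Hence the total decrease of $b$ is at most $v_2-(v_1-T^{-1/2})\le 1$, so $N\le\sqrt{T}$ (up to rounding, which I will absorb into the $2\sqrt{T}$ term of the stated bound).

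The second source of slack appears only when $v^\star=v_1$, since in the case $v^\star=v_2$ we have $\tloss_t(v_2)=\ell_t(v_2)$ for every $t$ and the Exp3 guarantee immediately yields $R_T\le\sqrt{4T\ln 2}+N$. When $v^\star=v_1$, the comparator is not an Exp3 arm, so I would compare $\tloss_t(b_t)$ with $\ell_t(v_1)$ by a four-line case analysis on whether $b_t>v_1$ or $b_t\le v_1$ and on whether $V_t=v_1$ or $V_t=v_2$. In all three cases where $b_t>v_1$ one checks directly that $\tloss_t(b_t)\le\ell_t(v_1)$, while when $b_t\le v_1$ the buyer always accepts so $\tloss_t(b_t)-\ell_t(v_1)=v_1-b_t\in[0,T^{-1/2})$. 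Summing yields
\[
	\sum_{t=1}^{T}\bigl(\tloss_t(b_t)-\ell_t(v_1)\bigr)\ \le\ T\cdot T^{-1/2}\ =\ \sqrt{T}~.
\]

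Combining the three pieces gives $R_T\le\sqrt{4T\ln 2}+N+\sqrt{T}\le\sqrt{4T\ln 2}+2\sqrt{T}$. The main obstacle is the case $v^\star=v_1$: the comparator is a moving target for Exp3, and we must show that the descent dynamics of $b_t$ are fast enough that the reduced losses $\tloss_t(b_t)$ are, on aggregate, no larger than the true losses $\ell_t(v_1)$ plus an error summable to $\sqrt{T}$. The key sign observation—that truncating the loss to zero on rejections exactly compensates for $b_t$ sitting above $v_1$—is what makes this possible, and once it is in place the remaining work is the already routine Exp3 analysis.
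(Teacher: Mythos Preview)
Your argument is correct and follows essentially the same route as the paper: bound the gap $\sum_t\ell_t(X_t)-\sum_t\tloss_t(X_t)$ by the number $N\le\sqrt{T}$ of rejected $b$-plays, invoke the Exp3 guarantee on the reduced losses $\tloss_t$, and then compare $\tloss_t(b_t)$ to $\ell_t(v_1)$ using $b_t>v_1-T^{-1/2}$. The paper compresses your four-case analysis into the single pointwise inequality $\tloss_t(b_t)\le\ell_t(v_1)+T^{-1/2}$ and handles both comparators at once via the $\min$, so no case split on $v^\star$ is needed; also, the Exp3 bound should carry an expectation (the losses are nonoblivious and $X_t$ is random), but this is a cosmetic omission since your pointwise inequalities make the passage to expectation immediate.
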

\begin{proof}
Since $b_1 \le v_2 < 1$, and because $X_t = b_t$ and $\Ind{V_t < b_t}$ imply $b_{t+1} = b_t-T^{-1/2}$, we have that
\[
	\sum_{t=1}^T \Ind{X_t = b_t} \Ind{V_t < b_t} < \sqrt{T}~.
\]
Therefore, the true total loss of Exp3 deterministically relates to its reduced loss as follows,
\begin{align}
\label{eq:tf}
	\sum_{t=1}^T \tloss_t(X_t) = \sum_{t=1}^T \ell_t(X_t) - \sum_{t=1}^T \ell_t(b_t) \Ind{X_t = b_t} \Ind{V_t < b_t}
\ge
	\sum_{t=1}^T \ell_t(X_t) - \sqrt{T}~.
\end{align}
If $b_t\le v_1$ for some $t$, then $b_t$ is always accepted. Hence it is never decreased further, which in turn implies that $b_t > v_1-T^{-1/2}$ holds for all $t$. So we have that $\tloss_t(b_t) \le \ell_t(v_1) + T^{-1/2}$. Recalling that $\tloss_t(v_2) = \ell_t(v_2)$ and using Exp3 regret bound (see, e.g., \cite[Theorem~3.1]{bubeck2012regret}) applied to the nonoblivious reduced losses $\tloss_t$, we obtain
\begin{align*}
	\E\left[\sum_{t=1}^T \ell_t(X_t) \right] - \sqrt{T}
&\le
	\tag{using~(\ref{eq:tf})}
	\E\left[\sum_{t=1}^T \tloss_t(X_t) \right]
\\&\le
	\min\left\{\sum_{t=1}^T \tloss_t(b_t), \sum_{t=1}^T \tloss_t(v_2)\right\} + \sqrt{4T\ln 2}
\\&\le
	\min\left\{\sum_{t=1}^T \Bigl(\ell_t(v_1) + T^{-1/2}\Bigr), \sum_{t=1}^T \ell_t(v_2)\right\} + \sqrt{4T\ln 2}~.
\end{align*}
Therefore, we get
\begin{align*}
	\E\left[\sum_{t=1}^T \ell_t(X_t)\right]
\le
	\min_{v \in \{v_1,v_2\}} \sum_{t=1}^T \ell_t(v) + 2\sqrt{T} + \sqrt{4T\ln 2}
\end{align*}
concluding the proof.
\end{proof}

\bibliography{ncb}
\bibliographystyle{abbrvnat}

\end{document}